\definecolor {processblue}{cmyk}{0.96,0,0,0}
\tikzstyle{int}=[draw, fill=blue!20, minimum size=2em]
\tikzstyle{init} = [pin edge={to-,thin,black}]
\pgfplotsset{compat=1.14}
\newtheorem{defn}{Definition}[section]
\newtheorem{lem}{Lemma}[section]
\newtheorem{rem}{Remark}[section]
\newtheorem{assum}{Assumption}
\newtheorem{thm}{Theorem}[section]
\newtheorem{cor}{Corollary}[section]
\newcommand{\ex}[2]{{\ifx&#1& \mathbb{E} \else {\mathbb{E}_{#1}} \fi \left[#2\right]}}
\newcommand{\cex}[3]{{\ifx&#1& \mathbb{E} \else {\mathbb{E}_{#1}^{#2}} \fi \left[#3\right]}}
\newcommand{\pr}[1]{\left(#1\right)}
\newcommand{\br}[1]{\left[#1\right]}
\newcommand{\abs}[1]{\left|#1\right|}
\newcommand{\eucd}[1]{\left|\left|#1\right|\right|}
\newcommand{\tr}[1]{tr\left\{#1\right\}}
\title{Two Facets of SDE Under an Information-Theoretic Lens: Generalization of SGD via Training Trajectories and via Terminal States}
\author[1]{\href{mailto:<zwang286@uottawa.ca>?Subject=Your UAI 2024 paper}{Ziqiao~Wang}}
\author[1]{\href{mailto:<ymao@uottawa.ca>?Subject=Your UAI 2024 paper}{Yongyi~Mao}}
\affil[1]{%
School of Electrical Engineering and Computer Science\\

University of Ottawa\\

    Ottawa, Ontario, Canada
    % Computer Science Dept.\\
    % Cranberry University\\
    % Pittsburgh, Pennsylvania, USA
}
\begin{document}
\maketitle

\begin{abstract}
  Stochastic differential equations (SDEs) have been shown recently to  characterize well the dynamics of training machine learning models with SGD. When the generalization error of the SDE approximation closely aligns with that of SGD in expectation, it 
  % this
  provides two opportunities for understanding better the generalization behaviour of SGD through its SDE approximation. 
Firstly, viewing SGD as full-batch gradient descent with Gaussian gradient noise allows us to obtain trajectory-based generalization bound using the information-theoretic bound from \citet{xu2017information}. Secondly, assuming mild conditions, we estimate the steady-state weight distribution of SDE and use information-theoretic bounds from \citet{xu2017information} and \citet{negrea2019information} to establish terminal-state-based generalization bounds. 
Our proposed bounds have some advantages, notably the trajectory-based bound outperforms results in \cite{wang2022generalization}, and the terminal-state-based bound exhibits a fast decay rate comparable to stability-based bounds. 
\end{abstract}

\section{Introduction}
Modern deep neural networks trained with SGD and its variants have achieved surprising successes: the overparametrized networks often contain more parameters than the size of training dataset,  and yet are capable of generalizing well on the testing set; this contrasts the traditional wisdom in statistical learning that suggests such high-capacity models will overfit the training data and fail on the unseen data \citep{zhang2017understanding}.  Intense recent efforts have been spent to explain this peculiar phenomenon via investigating the properties of SGD \citep{arpit2017closer,bartlett2017spectrally,neyshabur2017exploring,arora2019fine}, and the current understanding is still far from being complete. For example, neural tangent kernel (NTK)-based generalization bounds of SGD normally require the width of network to be sufficiently large (or even go to infinite) \citep{arora2019fine}, and the stability-based bounds of SGD have a poorly dependence on an intractable Lipschitz constant \citep{hardt2016train,bassily2020stability}.
% \textcolor{red}{[NEED A highlight/summary of what is known and why it is not good enough. Or point the reader to the related works and discuss there.]} \looseness=-1
% In particular, the implicit regularization effect of SGD is often invoked in the literature. It then becomes well-known that such implicit regularization of SGD is crucial for both optimization and generalization \citep{barrett2020implicit,smith2020origin}. 
% \vspace{-3mm}
\begin{figure*}[!ht]
    \centering
    \begin{subfigure}[b]{0.245\textwidth}
\includegraphics[scale=0.28]{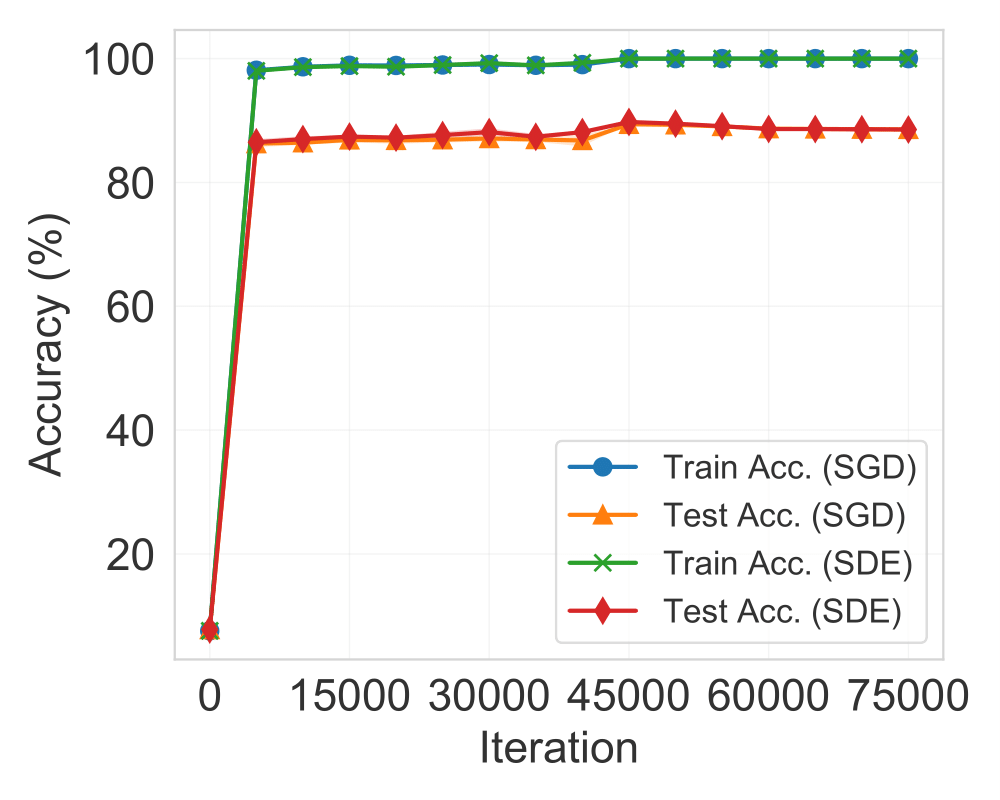}    
\caption{VGG on (small) SVHN}            \label{fig:vgg-svhn-acc}
    \end{subfigure}
\begin{subfigure}[b]{0.245\textwidth}
\includegraphics[scale=0.28]{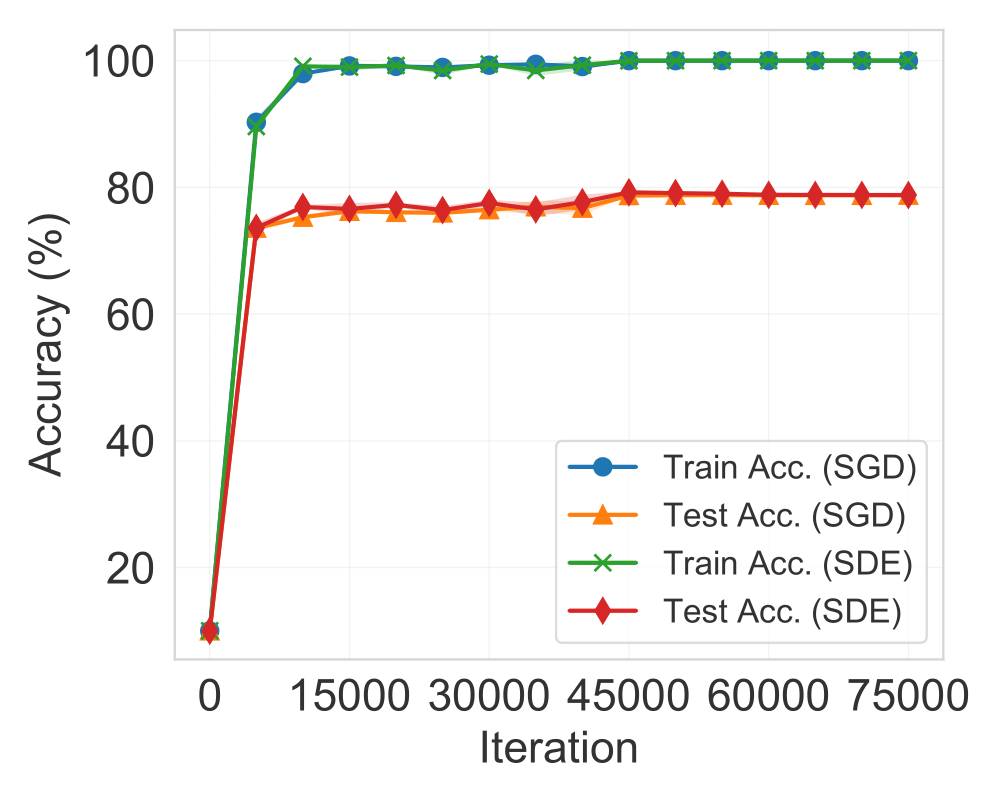}
\caption{VGG on CIFAR10}
    \label{fig:vgg-cifa10-acc}
\end{subfigure}
 \begin{subfigure}[b]{0.245\textwidth}
\includegraphics[scale=0.28]{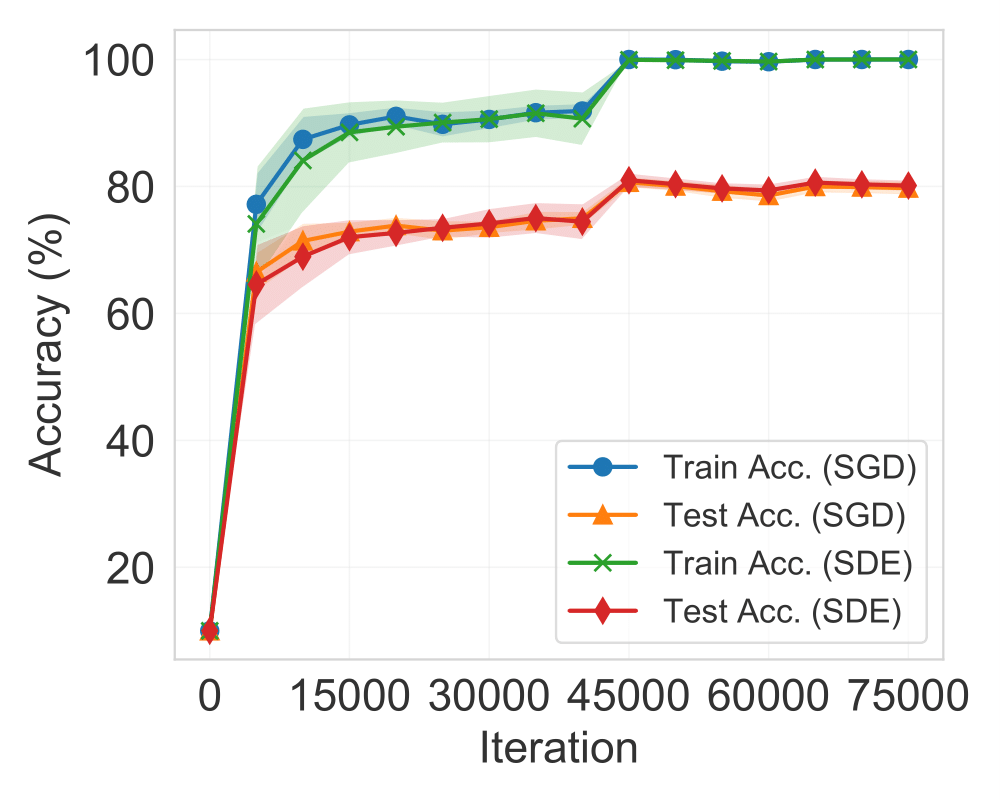}
\caption{ResNet on CIFAR10}
\label{fig:resnet-cifa10-acc}
    \end{subfigure}
\begin{subfigure}[b]{0.245\textwidth}
\includegraphics[scale=0.28]{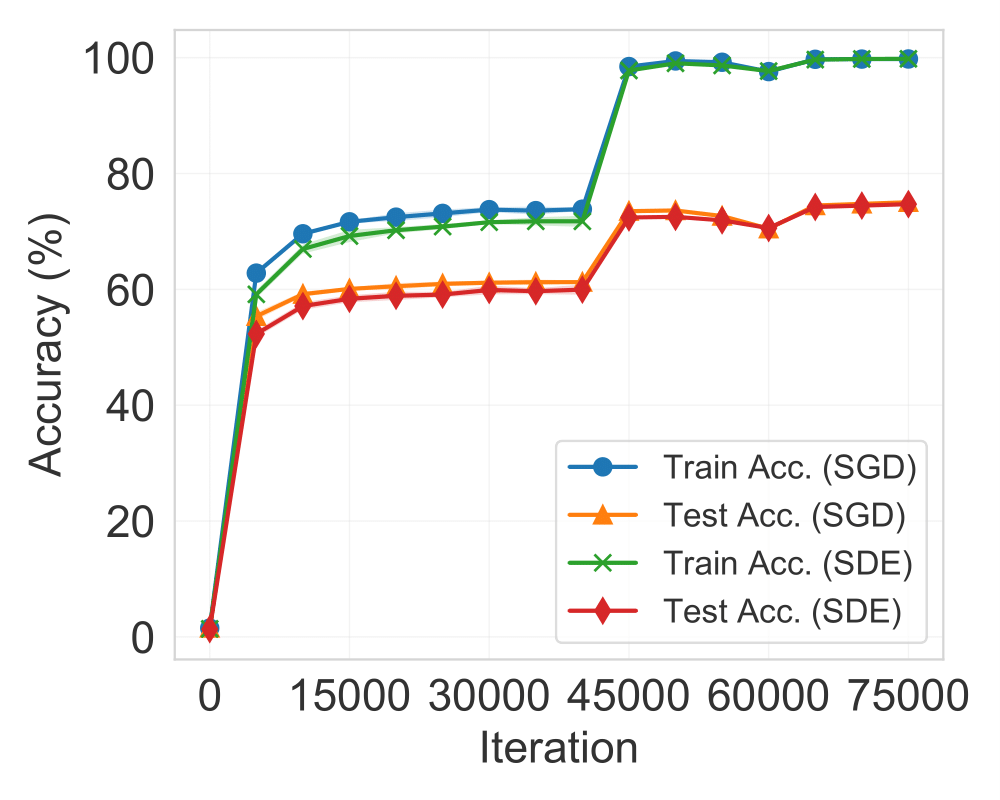}
\caption{ResNet on CIFAR100}
\label{fig:resnet-cifa100-acc}
\end{subfigure}
\caption{Performance of VGG-11 and ResNet-18 trained with SGD and SDE.
% Standard data augmentation techniques are only used in (d).
}\label{fig:Acc-Dynamics}
% \vspace{-4mm}
\end{figure*}
% \vspace{-0.1in}

% \textcolor{red}{Gap between SGD and SGLD, SDE is necessary}
Recently, information-theoretic generalization bounds have been developed to analyze the expected generalization error of a learning algorithm. The main advantage of such bounds is that they are not only distribution-dependent, but also algorithm-dependent, making them an ideal tool for studying the generalization behaviour of models trained with a specific algorithm, such as SGD.
%which seems necessary to obtain the non-vacuous bound in practice. 
The concept of mutual information (MI)-based bounds can be traced back to \cite[Section~1.3.1]{catoni2007pac}, where it is discussed how the prior distribution that minimizes the expected KL complexity term in a PAC-Bayesian (PAC-Bayes) bound can transform the KL term into the MI term. This idea has recently been popularized by \citep{russo2016controlling,russo2019much,xu2017information}, and further strengthened by additional techniques \citep{asadi2018chaining,negrea2019information,bu2019tightening,steinke2020reasoning,haghifam2020sharpened,wang2021analyzing}.
% Mutual information-based bounds are first proposed
% they are recently popularized by \citep{russo2016controlling,russo2019much,xu2017information}. They are then strengthened by additional techniques \citep{asadi2018chaining,negrea2019information,bu2019tightening,steinke2020reasoning,haghifam2020sharpened,wang2021analyzing}. 
Particularly, \citet{negrea2019information} derive MI-based bounds by developing a PAC-Bayes-like bounding technique, which upper-bounds the generalization error in terms of the KL divergence between the posterior distribution of learned model parameter given by a learning algorithm with respect to any data-dependent prior distribution. It is remarkable that the application of these information-theoretic techniques usually requires the learning algorithm to be an iterative noisy algorithm, such as stochastic gradient Langevin dynamics (SGLD) \citep{raginsky2017non,pensia2018generalization}, so as to avoid the MI bounds becoming infinity, and can not be directly applied to SGD. While it is feasible to adopt similar techniques from the PAC-Bayes bounds in \cite{lotfi2022pacbayes} to analyze SGD, in order to study the generalization of SGD using methods akin to those for SGLD, \citet{neu2021information} and \citet{wang2022generalization} develop generalization bounds for SGD via constructing an auxiliary iterative noisy process. However,  identifying an optimal auxiliary process is difficult, and arbitrary choices may not provide meaningful insights into the generalization of SGD, see Appendix~\ref{sec:IT-SGD} for more discussions.

Recent research has suggested that the SGD dynamics can be well approximated by using stochastic differential equations (SDEs), where the gradient signal in SGD is regarded as the full-batch gradient perturbed with an additive Gaussian noise. 
%There have been some recent attempts formulating SGD dynamics using stochastic differential equations (SDE), in which the key component is the modelling of the gradient noise. 
Specifically, \cite{mandt2017stochastic} and \cite{jastrzkebski2017three} model this gradient noise drawn from a Gaussian distribution with a fixed covariance matrix,
% a locally fixed Gaussian, 
thereby viewing SGD as performing variational inference. \cite{zhu2019anisotropic,wu2020noisy,xie2020diffusion}, and \cite{xie2021positive} further model the gradient noise as dependent of the current weight parameter and the training data. 
% Modelling SGD in this way provide explanations as to when SGD finds flat minima \citep{zhu2019anisotropic,xie2020diffusion} and sharp minima \citep{ziyin2022sgd}, and inspire some new training techniques \citep{wu2020noisy,xie2021positive}. 
Moreover, \citet{li2017stochastic,li2019stochastic} and \citet{wu2020noisy}
prove that when the learning rate is sufficiently small, the SDE trajectories are theoretically close to those of SGD (cf. Lemma~\ref{lem:sde-weak}). More recently, \cite{li2021validity} has demonstrated that the SDE approximation 
well characterizes the optimization and generalization behavior of SGD without requiring small learning rates.

In this work, we also empirically verify the consistency between the dynamics of SGD and its associated discrete SDE (cf. Eq.~(\ref{eq:sgd-update-gaussian})). As illustrated in Figure~\ref{fig:Acc-Dynamics}, the strong agreement in their performance suggests that, despite the potential presence of non-Gaussian components in the SGD gradient noise, analyzing its SDE through a Gaussian approximation suffices for exploring SGD's generalization behavior.
% \looseness=-1
% The results establishing SDE as a good approximation for the SGD dynamics motivate us to study the generalization behavior of SGD under such approximations. 
Furthermore, under the SDE formalism of SGD, SGD becomes an iterative noisy algorithm, on which the aforementioned information-theoretic bounding techniques can directly apply. In particular, we summarize our contributions below.
\begin{itemize}[leftmargin=*]
    \item We obtain a generalization bound (cf. Theorem \ref{thm:isotropic-prior-bound}) in the form of a summation over training steps of a quantity that involves both the sensitivity of the full-batch gradient to the variation of the training set and the covariance of the gradient noise (which makes the SGD gradient deviate from the full-batch gradient). We also give a tighter bound in Theorem \ref{thm:anisotropic-prior-bound},
    where 
    % the population gradient covariance and also the covariance of the gradient noise are involved, and 
    the generalization performance of SGD depends on the alignment of the population gradient covariance and the batch  gradient covariance.
    % these two matrices.
    These bounds highlight the significance of (the trace of) the gradient noise covariance in the generalization ability of SGD.  
    % Additionally, these results also provide justifications for the effectiveness of regularization by controlling the gradient norm, a technique suggested by previous works. 

    \item In addition to the time-dependent trajectory-based bounds, we also provide time-independent (or asymptotic) bounds by some mild assumptions. Specifically, based on  previous information-theoretic bounds, we obtain generalization bounds in terms of the KL divergence between the steady-state weight distribution of SGD with respect to a distribution-dependent prior distribution (by Lemma~\ref{lem:xu's-bound}) or data-dependent prior distribution (by Lemma~\ref{lem:data-dependent-prior}). The former gives us a bound based on the alignment between the weight covariance matrix for each individual local minimum and the weight covariance matrix for the average of local minima (cf. Theorem~\ref{thm:opt-state-inde-bound}). Under mild assumptions, we can estimate the steady-state weight distribution of SDE (cf. Lemma~\ref{lem:stationary-real}), leading to a variant of Theorem~\ref{thm:opt-state-inde-bound} (cf. Corollary~\ref{cor:pacbayes-anisotropic-prior}) and a norm-based bound (cf. Corollary~\ref{cor:pacbayes-isotropic-prior}). Additionally, we obtain a stability-based like bound by Lemma~\ref{lem:data-dependent-prior} (cf. Theorem \ref{thm:pacbayes-data-dependent-prior}), with the notable omission of the Lipschitz constant in other stability-based bounds. Since stability-based bounds often achieve fast decay rates, e.g., $\mathcal{O}(1/n)$, Theorem \ref{thm:pacbayes-data-dependent-prior} provides theoretical advantages compared with other information-theoretic bounds, as it can attain the same rate of decay as the stability-based bound.
    Comparing to the first family of bounds (i.e., trajectory-based bounds), the second family of bounds directly upper-bound the generalization error via the terminal
state, which avoids summing over training steps; these bounds can be tighter when the steady-state estimates are accurate. On the other hand, not relying on the steady-state estimates and the approximating
assumptions they base upon is arguably an advantage of the first family.

    \item  
    % In addition to comparing the generalization performance of SGD and SDE, we 
    We empirically analyze key components within the derived bounds for both algorithms. Our empirical findings reveal that these components for SGD and SDE align remarkably well, further validating the effectiveness of our bounds for assessing the generalization of SGD. Moreover, we provide numerical validation of the presented bounds and demonstrate that our trajectory-based bound is tighter than the result in \cite{wang2022generalization}. Additionally, compared with norm-based bounds, we show that the terminal-state-based bound that integrates the geometric properties of local minima can better characterize generalization.
    % Trivially choosing the prior distribution to be a Gaussian independent of training data (which reduces to MI bound of \citet{xu2017information}), we obtain a bound
% (Theorem~\ref{thm:pacbayes-isotropic-prior}) in terms of the distance of the weight output from SGD  to the prior estimate (e.g. initialization of weights). It is more interesting  choosing the prior as the steady-state weight distribution obtained by SGD on the same training set but with one example held out. In this case, the bound we obtain (Theorem \ref{thm:pacbayes-data-dependent-prior}) can be elegantly expressed using the influence function \citep{koh2017understanding}, which suggests that the generalization of the SGD is related to the stability of SGD. Comparing to the first family of bounds (i.e., those based on the MI between the training set and training trajectories), the second family of bounds directly bound the generalization error via the terminal state, which avoids summing over training steps; these bounds are in general tighter when the steady-state estimates are accurate. On the other hand, not relying on the steady-state estimates and the approximating assumptions they base upon is arguably an advantage of the first family.
\end{itemize}

\section{Preliminary}
% \vspace{-2mm}
% Unless otherwise noted,  a random variable will be denoted by a capitalized letter, and  its realization is denoted by the corresponding lower-case letter. 
% We use $\mathbb{E}^Y[X] = \mathbb{E}[X|Y]$ as the the conditional expectation.
% For random variables $X$ and $Y$, we denote $\mathbb{E}^Y[X] = \mathbb{E}[X|Y]$ as the the conditional expectation.

\paragraph{Notation}
% Unless otherwise noted, a random variable will be denoted by a capitalized letter, and  its realization by the corresponding lower-case letter. 
The distribution of a random variable $X$ is denoted by $P_X$ (or $Q_X$), and the conditional distribution of $X$ given $Y$ is denoted by $P_{X|Y}$. When conditioning on a specific realization $y$, we use the shorthand $P_{X|Y=y}$ or simply $P_{X|y}$.
Denote by $\mathbb{E}_{X}$ expectation over $X \sim P_X$, and by $\mathbb{E}_{X|Y=y}$ (or $\mathbb{E}^y_{X}$) expectation over $X \sim P_{X|Y=y}$. We may omit the subscript of the expectation when there is no ambiguity.
% The entropy of a random variable $X$ is denoted by $H(X)$, and 
The KL divergence of probability distribution $Q$ with respect to $P$ is denoted by $\mathrm{D_{KL}}(Q||P)$.
The mutual information (MI) between random variables $X$ and $Y$ is denoted by $I(X;Y)$, and the conditional mutual information between $X$ and $Y$ given $Z$ is denoted by $I(X;Y|Z)$. In addition, for a matrix $A\in\mathbb{R}^{d\times d}$, we let $\tr{A}$ denote the trace of $A$ and we use $\tr{\log{A}}$ to indicate $\sum_{k=1}^d\log{A_{k,k}}$.

% $ $\tr{\mathrm{diag}\{\log{A_{1,1}},\log{A_{2,2}}, \dots, \log{A_{d,d}}\}}=\sum_{k=1}^d\log{A_{k,k}}$.
% \textcolor{red}{trace of log of a matrix} 
% We also define the disintegrated mutual information as $I^z(X;Y) \triangleq \mathrm{D_{KL}}(P_{X,Y|Z=z}||P_{X|Z=z}P_{Y|Z=z})$, following the notation in \citep{negrea2019information}. Note that $I(X;Y|Z) = \mathbb{E}_{Z}[I^Z(X;Y)]$.

% \subsection{Expected Generalization Error}
\paragraph{Expected Generalization Error}
Let $\mathcal{Z}$ be the instance space and let $\mu$ be an unknown distribution on $\mathcal{Z}$, specifying the random variable $Z$. We
let ${\mathcal W}\subseteq \mathbb{R}^d$ be the space of hypotheses. In the information-theoretic analysis framework, there is a training sample $S=\{Z_i\}_{i=1}^n$ drawn i.i.d. from $\mu$ and a stochastic learning algorithm $\mathcal{A}$ takes the training sample $S$ as its input and outputs a hypothesis $W\in \mathcal{W}$ according to some  conditional distribution 
$Q_{W|S}$. 
% mapping ${\mathcal Z}^n$ to ${\cal W}$. 
Given a loss function $\ell: \mathcal{W}\times\mathcal{Z}\rightarrow \mathbb{R}^{+}$, where  $\ell(w, z)$ measures the ``unfitness'' or ``error'' of any $z\in {\mathcal Z}$ with respect to a hypothesis $w\in {\mathcal W}$. The goal of learning is to find a hypothesis $w$ that minimizes the population risk, and for any $w\in {\mathcal W}$, the population risk is defined as
$
L_\mu(w) \triangleq \mathbb{E}_{Z\sim \mu}[\ell(w,Z)]
$.
 In practice, since $\mu$ is only partially accessible via the sample $S$,  we instead turn to  use the empirical risk, 
 %as a proxy of the population risk, which is
 defined as
$
L_S(w) \triangleq \frac{1}{n}\sum_{i=1}^n \ell(w,Z_i)
$.
Then, the expected generalization error of %the learning algorithm 
$\mathcal{A}$ is defined as 
$
\mathcal{E}_\mu(\mathcal{A})\triangleq\mathbb{E}_{W,S}[L_\mu(W)-L_S(W)]
$,
where the expectation is taken over $(S,W)\sim\mu^n\otimes Q_{W|S}$.

%(adopting the usual notion ``surrogate loss'' \cite{shalev2014understanding})
Throughout this paper, 
% we take $\ell$ as a continuous function (adopting the usual notion ``surrogate loss'').
% Additionally, 
we assume that $\ell$ is differentiable almost everywhere with respect to $w$.
% In some cases we will assume that $\ell(w, Z)$ is $R$-subgaussian
% \footnote{A random variable $X$ is $R$-subgaussian if for any $\rho\in \mathbb{R}$, $\log {\mathbb E} \exp\left( \rho \left(X- {\mathbb E}X\right) \right) \le \rho^2R^2/2$.}
% for any $w\in\mathcal{W}$. Note that a bounded loss is guaranteed to be subgaussian. 
% for example, if $\ell(w, Z) \in [0,M]$ then $R=M/2$. 
%To simplify notations, whenever $w$ can be clearly inferred from the context, 
In addition, we will denote $\ell(w,Z_i)$ by $\ell_i$ when there is no ambiguity.
% \vspace{-2mm}

% \subsection{SGD and SDE}
\paragraph{SGD and SDE}
At each time step $t$, given the current state $W_{t-1}=w_{t-1}$, let $B_t$ be a random subset that is drawn uniformly from  $\{1,2,\dots,n\}$ and $|B_t|=b$ is the batch size. Let $\widetilde{G}_{t}\triangleq\frac{1}{b}\sum_{i\in B_t}\nabla \ell(w_{t-1},Z_i)$ be the mini-batch gradient. The SGD updating rule with learning rate $\eta$ is then
\begin{eqnarray}
 W_{t} = w_{t-1} - \eta \widetilde{G}_{t}.
 \label{eq:sgd-update}
\end{eqnarray}
The full batch gradient is $G_t\triangleq \frac{1}{n}\sum_{i=1}^n\nabla \ell_i$. 
%Then Eq.  (\ref{eq:sgd-update}) becomes 
It follows that
\begin{eqnarray}
 W_{t} = w_{t-1} - \eta G_t + \eta V_t,
\label{eq:sgd-update-2} 
\end{eqnarray}
where $V_t \triangleq G_t-\widetilde{G}_{t}$ is the mini-batch \textit{gradient noise}. Since $\ex{B_t}{V_t}=0$, $\widetilde{G}_{t}$ is an unbiased estimator of the full batch gradient $G_t$. 
Moreover, the single-draw (i.e. $b=1$) SGD gradient noise covariance (GNC) and the mini-batch GNC are $\Sigma_t=\frac{1}{n}\sum_{i=1}^n\nabla \ell_i\nabla \ell_i^\mathrm{\bf T}-G_tG_t^\mathrm{\bf T}$ and $\quad C_t = \frac{n-b}{b(n-1)}\Sigma_t$, respectively.
% \[
% \Sigma_t=\frac{1}{n}\sum_{i=1}^n\nabla \ell_i\nabla \ell_i^T-G_tG_t^T \qquad\text{and}\quad C_t = \frac{n-b}{b(n-1)}\Sigma_t.
% \] 
If $n\gg b$, then $
C_t = {1}/{b}\Sigma_t.
$
Notice that $\Sigma_t$ (or $C_t$) is state-dependent, i.e. it depends on $w_{t-1}$. If $t$ is not specified, we use $\Sigma_w$ (or $C_w$) to represent its dependence on $w$. In addition, the population GNC at time $t$ is 
\begin{align}
    \Sigma_t^\mu\triangleq&\ex{Z}{\nabla \ell(w_{t-1},Z)\nabla \ell(w_{t-1},Z)^\mathrm{\bf T}}\notag\\
    &\quad-\ex{Z}{\nabla \ell(w_{t-1},Z)}\ex{Z}{\nabla \ell(w_{t-1},Z)^\mathrm{\bf T}}.
\label{eq:population-gradient-noise}
\end{align}
% We will approximate $V_t$ up to its second moment. %then by the central limit theorem, 
%we can make the following assumption.
% , which lets the type of the gradient noise distribution be Gaussian.
We assume that the initial parameter $W_0$ is independent of all other random variables, and SGD stops after $T$ updates, outputting $W_T$ as the learned parameter.

We now approximate $V_t$ up to its second moment, e.g., $V_t\sim \mathcal{N}(0,C_t)$, then we have the following continuous-time evolution, i.e. It\^o SDE:
\begin{align}
    d \omega = - \nabla L_S(\omega) dt + [\eta C_{\omega}]^{\frac{1}{2}} d\theta_t,
    \label{eq:ito-sde}
\end{align}
where $C_{\omega}$ is the GNC at $\omega$ and $\theta_t$ is a Wiener process. 
% This SDE is also called .
Furthermore, the {\em Euler-Maruyama} discretization, as the simplest approximation scheme to It\^o SDE in Eq.~(\ref{eq:ito-sde}), is
\begin{align}
  W_{t} = w_{t-1} - \eta G_t + \eta C_t^{1/2}N_t,
\label{eq:sgd-update-gaussian}
\end{align}
where $N_t\sim\mathcal{N}(0,\mathrm{I}_d)$ is the standard Gaussian random variable. 
% \vspace{-2mm}

\paragraph{Validation of SDE} It is important to understand how accurate is the SDE in Eq.~(\ref{eq:ito-sde}) for approximating the SGD process in Eq.~(\ref{eq:sgd-update}). 
% It is important to understand how accurate of SDE in Eq.~(\ref{eq:ito-sde}) for approximating the SGD process in Eq.~(\ref{eq:sgd-update}).
Previous research, such as \citep{li2017stochastic,li2019stochastic}, has provided theoretical evidence supporting the idea that SDE can approximate SGD in a ``weak sense''. That is, the SDE processes closely mimic the original SGD processes, not on an individual sample path basis, but rather in terms of their distributions (see Lemma~\ref{lem:sde-weak} for a formal result).

Additionally, concerning the validation of the discretization of SDE in Eq.~(\ref{eq:sgd-update-gaussian}), \citet[Theorem~2]{wu2020noisy} has proved that Eq.~(\ref{eq:sgd-update-gaussian}) is {\em an order $1$ strong approximation} to SDE in Eq.~(\ref{eq:ito-sde}). 
% We defer their theoretical result to Appendix. 
Moreover, 
we direct interested readers to the comprehensive investigations carried out by 
\citep{wu2020noisy,li2021validity}, where the authors empirically verify that SGD and Eq.~(\ref{eq:sgd-update-gaussian}) can achieve the similar testing performance in the deep learning scenarios, suggesting that non-Gaussian noise may not be essential to SGD performance. In other words, studying Eq.~(\ref{eq:sgd-update-gaussian}) is arguably sufficient to understand generalization properties of SGD. In Figure~\ref{fig:Acc-Dynamics}, we also empirically verify the approximation of Eq.~(\ref{eq:sgd-update-gaussian}), and show that it can effectively  capture the behavior of SGD.
\paragraph{Two Information-Theoretic Bounds} 
% Before we delve into the formal justification, we present below two typical information-theoretic bounds in the literature.
% Below are two typical information-theoretic bounds.
The original 
% version of mutual information (MI) based 
information-theoretic bound in \cite{xu2017information} is a sample-based MI bound, whose main component is the mutual information between the output $W$ and the entire input sample $S$. This result is given as follows:
\begin{lem}[{\citet[Theorem~1.]{xu2017information}}]
Assume the loss $\ell(w,Z)$ is $R$-subgaussian\footnote{A random variable $X$ is $R$-subgaussian if for any $\rho\in \mathbb{R}$, $\log {\mathbb E} \exp\left( \rho \left(X- {\mathbb E}X\right) \right) \le \rho^2R^2/2$. Note that a bounded loss is guaranteed to be subgaussian. }
for any $w\in\mathcal{W}$, then
% \vspace{-5pt}
\[
|\mathcal{E}_\mu(\mathcal{A})|\leq \sqrt{\frac{2R^2}{n}I(W;S)}.
\]
% \vspace{-10pt}
% where $I(W;S)=\mathrm{D_{KL}}(Q_{W,S}||Q_W\otimes Q_S)$ is the mutual information and $\mathrm{D_{KL}}$ denotes the KL divergence.
% \cite{cover2012elements} between $W$ and $S$.
\label{lem:xu's-bound}
\end{lem}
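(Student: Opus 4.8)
The plan is to reduce the claim to the standard ``decoupling'' estimate relating the change of expectation between a joint distribution and the corresponding product of marginals to their mutual information. First I would introduce an auxiliary sample $\bar S=(\bar Z_1,\dots,\bar Z_n)$ drawn i.i.d.\ from $\mu$ and independent of $W$, so that $(W,\bar S)\sim P_W\otimes P_S$ while $(W,S)\sim P_{W,S}$. Writing $g(w,s)\triangleq L_\mu(w)-L_s(w)=-\tfrac1n\sum_{i=1}^n\big(\ell(w,z_i)-\mathbb{E}_{Z}[\ell(w,Z)]\big)$, one checks directly that $\mathbb{E}_{P_{W,S}}[g(W,S)]=\mathcal{E}_\mu(\mathcal{A})$, whereas $\mathbb{E}_{P_W\otimes P_S}[g(W,\bar S)]=0$ because, conditionally on $W$, each $\bar Z_i$ is independent of $W$ and $\mathbb{E}[\ell(W,\bar Z_i)\mid W]=L_\mu(W)$. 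Hence it suffices to bound $\big|\mathbb{E}_{P_{W,S}}[g]-\mathbb{E}_{P_W\otimes P_S}[g]\big|$.

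Next I would invoke the Donsker--Varadhan variational representation of the KL divergence: for any $\lambda\in\mathbb{R}$,
\[
\lambda\,\mathbb{E}_{P_{W,S}}[g(W,S)] \;\le\; \mathrm{D_{KL}}\!\big(P_{W,S}\,\|\,P_W\otimes P_S\big) \;+\; \log\mathbb{E}_{P_W\otimes P_S}\!\big[e^{\lambda g(W,\bar S)}\big].
\]
The first term on the right is exactly $I(W;S)$. For the second term I would use the subgaussianity hypothesis: since the $Z_i$ are i.i.d.\ and $\ell(w,\cdot)$ is $R$-subgaussian for every fixed $w$, the centered average $g(w,\bar S)=-\tfrac1n\sum_i\big(\ell(w,\bar Z_i)-\mathbb{E}\ell(w,Z)\big)$ is, conditionally on $W=w$, a sum of $n$ independent centered $R$-subgaussian terms scaled by $1/n$, hence $R/\sqrt{n}$-subgaussian; integrating over $W\sim P_W$ preserves this, so $\log\mathbb{E}_{P_W\otimes P_S}[e^{\lambda g}]\le \lambda^2R^2/(2n)$. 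Substituting and rearranging gives, for every $\lambda>0$,
\[
\mathbb{E}_{P_{W,S}}[g(W,S)]\;\le\;\frac{I(W;S)}{\lambda}+\frac{\lambda R^2}{2n},
\]
and optimizing over $\lambda$ (take $\lambda=\sqrt{2nI(W;S)/R^2}$) yields $\mathcal{E}_\mu(\mathcal{A})\le\sqrt{2R^2 I(W;S)/n}$. Applying the same argument to $-g$ (using that $-\ell$ is also $R$-subgaussian in the relevant centered sense) gives the matching lower bound, hence the absolute value.

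The only real subtlety — and the step I would be most careful about — is that the subgaussian moment-generating-function bound must be established under the \emph{product} measure $P_W\otimes P_S$, not under the true joint $P_{W,S}$; this is precisely where the independence of $W$ and $\bar S$ is used, both to make the cumulant generating function factor over the $n$ coordinates and to get the exact cancellation $\mathbb{E}_{P_W\otimes P_S}[g]=0$. Everything else is routine: the tensorization of subgaussianity across the i.i.d.\ coordinates and the elementary one-dimensional optimization over $\lambda$. I would also remark that boundedness of $\ell$ (e.g.\ $\ell\in[0,1]$) is a sufficient condition for the $R$-subgaussian hypothesis, so the bound applies in that common setting.
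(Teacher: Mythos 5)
Your proposal is correct and follows essentially the same route as the cited source: the paper itself states this result without proof as \citet[Theorem~1]{xu2017information}, whose standard argument is exactly your decoupling via the Donsker--Varadhan representation, tensorized subgaussianity of the centered empirical average under the product measure $P_W\otimes P_S$, and optimization over $\lambda$ (with the symmetric argument for $-g$ giving the absolute value). No gaps to flag.
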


This bound is further 
% improved by 
refined to a data-dependent prior based bound. Following the setup in \citet{negrea2019information}, let $J$ be a random subset uniformly drawn from $\{1,\dots,n\}$ and $|J|=m>b$. Let $S_J = \{Z_i\}_{i\in J}$.
% and $L_{S_J}(W) = \frac{1}{m}\sum_{i\in J}\ell(W,Z_i)$. 
Typically, we choose $m=n-1$, then the following result is known.%\looseness=-1  
% and \citet[Theorem~1.]{wang2021optimizing}.
\begin{lem}[{\citet[Theorem~2.5]{negrea2019information}}]
\label{lem:data-dependent-prior}
Let $Q_{W|S}$ be the posterior distribution of $W$ given the training sample $S$, and let $P_{W|S_J}$ be the posterior distribution of $W$ given the training sample $S_J$. Assume the loss $\ell(w,Z)$ is bounded in $[0,M]$, then for any $P_{W|S_J}$,
\[
\mathcal{E}_\mu(\mathcal{A})\leq\frac{M}{\sqrt{2}}\mathbb{E}_{S,J}{\sqrt{\mathrm{D_{KL}}(Q_{W|S}||P_{W|S_J})}}.
\]
\end{lem}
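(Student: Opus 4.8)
The plan is the standard \emph{data-dependent prior} / change-of-measure argument; I take $m=n-1$ for concreteness, so that the held-out index $i^\ast$ is the unique element of $\{1,\dots,n\}\setminus J$. First I would symmetrize over $J$: since $J$ is uniform and independent of $(S,W)$, averaging $\ell(W,Z_{i^\ast})$ over $J$ recovers $L_S(W)$, so that
\[
\mathcal{E}_\mu(\mathcal{A})=\mathbb{E}_{S,J,W\sim Q_{W|S}}\!\left[L_\mu(W)-\ell(W,Z_{i^\ast})\right].
\]
Conditioning on $(S_J,J)$ and noting that $(S_J,Z_{i^\ast})\sim\mu^{n-1}\otimes\mu$, it then suffices to control, for each fixed $S_J$, the quantity $\mathbb{E}_{Z_{i^\ast}\sim\mu}\mathbb{E}_{W\sim Q_{W|S}}[L_\mu(W)-\ell(W,Z_{i^\ast})]$, where $S=S_J\cup\{Z_{i^\ast}\}$.

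The structural fact that makes the argument work is that $P_{W|S_J}$ depends on the data only through $S_J$: conditionally on $S_J$, a draw $W\sim P_{W|S_J}$ is independent of $Z_{i^\ast}$. Hence, under the prior, $g(W,Z_{i^\ast}):=L_\mu(W)-\ell(W,Z_{i^\ast})$ has conditional mean zero given $S_J$, and for every fixed $W$ it lies in an interval of length $M$ (since $\ell\in[0,M]$); by Hoeffding's lemma, $\log\mathbb{E}_{W\sim P_{W|S_J},\,Z_{i^\ast}\sim\mu}[e^{\lambda g}\mid S_J]\le\lambda^2M^2/8$ for all $\lambda>0$. I would then apply the Donsker--Varadhan variational formula, conditionally on $S_J$, to the pair $(W,Z_{i^\ast})$, comparing the posterior law (first $Z_{i^\ast}\sim\mu$, then $W\sim Q_{W|S}$) with the product prior law ($Z_{i^\ast}\sim\mu$ and $W\sim P_{W|S_J}$ independent). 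Because both laws share the marginal $\mu$ on $Z_{i^\ast}$, the chain rule for KL collapses the joint divergence to $\mathbb{E}_{Z_{i^\ast}\sim\mu}[\mathrm{D_{KL}}(Q_{W|S}\|P_{W|S_J})\mid S_J]$, giving
\[
\lambda\,\mathbb{E}_{Z_{i^\ast},\,W\sim Q_{W|S}}\!\left[g\mid S_J\right]\le\mathbb{E}_{Z_{i^\ast}}\!\left[\mathrm{D_{KL}}(Q_{W|S}\|P_{W|S_J})\mid S_J\right]+\frac{\lambda^2M^2}{8}.
\]
Dividing by $\lambda$ and minimizing over $\lambda>0$ produces the per-$S_J$ bound $\tfrac{M}{\sqrt2}\sqrt{\mathbb{E}_{Z_{i^\ast}}[\mathrm{D_{KL}}(Q_{W|S}\|P_{W|S_J})\mid S_J]}$, and averaging over $(S_J,J)$ yields a generalization bound of the claimed shape.

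The step I expect to require the most care is obtaining the expectation \emph{entirely outside} the square root (as in the statement), since the bound of the previous paragraph keeps $\mathbb{E}_{Z_{i^\ast}}$ inside it and Jensen's inequality moves the square root the wrong way. The fix is to run the change of measure and the $\lambda$-optimization pointwise in the full realization $(S,J)$, after subtracting the correction $\mathbb{E}_{W'\sim P_{W|S_J}}[g(W',Z_{i^\ast})]$ --- whose expectation over the remaining randomness vanishes, again by the independence of $P_{W|S_J}$ from $Z_{i^\ast}$, so it does not affect $\mathcal{E}_\mu(\mathcal{A})$ --- so that the recentered $g$ is bounded and mean-zero under $P_{W|S_J}$ for every fixed $(S,J)$; keeping track of the boundedness range, and hence of the constant, in this recentered argument is the technical heart of \citet[Theorem~2.5]{negrea2019information}. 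For general $m<n-1$ the same proof goes through with the held-out empirical risk $\tfrac{1}{n-m}\sum_{i\notin J}\ell(W,Z_i)$ in place of $\ell(W,Z_{i^\ast})$; this quantity still lies in an interval of length $M$, so the constant is unaffected.
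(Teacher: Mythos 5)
The paper itself gives no proof of this lemma --- it is quoted directly from \citet[Theorem~2.5]{negrea2019information} --- so your attempt can only be compared with the standard argument behind that result, and your main derivation is indeed that argument, carried out correctly: the symmetrization $\mathbb{E}_J[\ell(W,Z_{i^\ast})]=L_S(W)$, conditioning on $(S_J,J)$, the fact that $P_{W|S_J}$ is independent of the held-out point, Hoeffding's lemma in the $Z_{i^\ast}$-direction (range $M$, mean zero for each fixed $w$), Donsker--Varadhan applied to the pair $(W,Z_{i^\ast})$, the chain-rule collapse of the joint KL, and the $\lambda$-optimization are all sound. What this chain of steps delivers, however, is
$\mathcal{E}_\mu(\mathcal{A})\le \frac{M}{\sqrt{2}}\,\mathbb{E}_{S_J,J}\sqrt{\mathbb{E}_{Z_{i^\ast}}\big[\mathrm{D_{KL}}(Q_{W|S}\,||\,P_{W|S_J})\mid S_J,J\big]}$,
which by Jensen's inequality is a \emph{weaker} statement than the one displayed, where the expectation over the full $(S,J)$ sits entirely outside the square root. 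You correctly flag this discrepancy yourself.

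The genuine gap is in the proposed repair. Once you fix the full realization $(S,J)$ and recenter $g$ by $\mathbb{E}_{W'\sim P_{W|S_J}}[g(W',Z_{i^\ast})]$, the randomness against which the moment generating function must be controlled is $W\sim P_{W|S_J}$ with $Z_{i^\ast}$ held fixed; for fixed $z$ the map $w\mapsto L_\mu(w)-\ell(w,z)$ ranges over an interval of length $2M$, not $M$, so Hoeffding gives $\lambda^2 M^2/2$ and the optimized bound is $\sqrt{2}\,M\,\mathbb{E}_{S,J}\sqrt{\mathrm{D_{KL}}(Q_{W|S}||P_{W|S_J})}$ --- a factor of $2$ worse than the claimed constant. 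The constant $M/\sqrt{2}$ comes precisely from the subgaussianity in the held-out-sample direction, which is lost the moment you condition on all of $S$; so ``keeping track of the boundedness range'' is not a bookkeeping afterthought but exactly where the pointwise strategy breaks, and your proposal does not resolve it. As written, you have therefore proved either the right constant with the held-out expectation inside the root, or the right placement with constant $\sqrt{2}M$, but not the lemma exactly as stated; either version would in fact suffice for the way Lemma~\ref{lem:data-dependent-prior} is used downstream (e.g.\ in Theorem~\ref{thm:pacbayes-data-dependent-prior}), but the gap should be acknowledged or closed by following the argument in \citet{negrea2019information} itself.
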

% \begin{rem}
% %In Lemma~\ref{lem:data-dependent-prior}, 
% We may use a subset $S_J$ drawn from the training sample $S$ to conduct a parallel training process based on the same algorithm $\mathcal{A}$ (e.g. SGD) to obtain a data-dependent prior ($P_{W|S_J}$). Then Lemma~\ref{lem:data-dependent-prior} indicates that the generalization error can be bounded by the KL divergence between the distribution of $W$ that obtained by the real training algorithm and this data-dependent prior.
% \end{rem}
 Note that $J$ is drawn before the training starts and is independent of $\{W_t\}_{t=0}^T$. 
 We use the subset $S_J$ to conduct a parallel SGD training process based 
 on the same algorithm $\mathcal{A}$ (e.g. SGD) 
 to obtain a data-dependent prior ($P_{W|S_J}$). When $m=n-1$,  we call this prior process the leave-one-out (LOO) prior.

% We also present the variational representation of mutual information below.
% \begin{lem}[{\citet[Corollary~3.1.]{polyanskiy2019lecture}}]
% \label{lem:mi-center-gravity}
% For two random variables $X$ and $Y$, we have
% \[
% I(X;Y) = \inf_{P} \ex{X}{\mathrm{D_{KL}}(Q_{Y|X}||P)},
% \]
% where the infimum is achieved at $P=Q_Y$.
% \end{lem}

% \vspace{-2mm}
\section{Generalization Bounds Via Full Trajectories}
\label{sec:itb-sde}
We now discuss the generalization of SGD under the approximation of Eq.~(\ref{eq:sgd-update-gaussian}). 
We first unroll the terminal parameters' mutual information $I(W_T;S)$ to the full trajectories' mutual information via the lemma below.
\begin{lem}
\label{lem:mi-unroll}
The MI term in Lemma~\ref{lem:xu's-bound} is upper bounded by
$I(W_T;S)\leq\sum_{t=1}^T I(- G_t + C_t^{1/2}N_t;S|W_{t-1} ).
$
\end{lem}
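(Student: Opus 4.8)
The plan is to unroll the mutual information along the full trajectory $W_{0:T}\triangleq(W_0,W_1,\dots,W_T)$ and to observe that, once we condition on the current state $W_{t-1}$, the update in Eq.~(\ref{eq:sgd-update-gaussian}) turns the noisy increment $U_t\triangleq -G_t+C_t^{1/2}N_t$ into $W_t=W_{t-1}+\eta U_t$, an invertible affine map of $U_t$. First I would apply the data-processing inequality, $I(W_T;S)\le I(W_{0:T};S)$, since $W_T$ is a coordinate projection of the trajectory. The chain rule for mutual information then gives $I(W_{0:T};S)=\sum_{t=0}^T I(W_t;S|W_{0:t-1})$, and the $t=0$ term vanishes because $W_0$ is assumed independent of $S$, leaving $I(W_T;S)\le\sum_{t=1}^T I(W_t;S|W_{0:t-1})$.

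Next I would rewrite each summand in terms of $U_t$. Conditioned on $W_{0:t-1}$ (hence on $W_{t-1}$), the map $u\mapsto W_{t-1}+\eta u$ is a bijection sending $U_t$ to $W_t$, and mutual information is invariant under conditionwise-invertible transformations of one of its arguments; therefore $I(W_t;S|W_{0:t-1})=I(U_t;S|W_{0:t-1})$. The scalar $\eta$ plays no role here, which is why it is absent from the statement.

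The last and, I expect, only nonroutine step is to drop the conditioning on the full past history and keep only $W_{t-1}$. Here I would use that $N_t$ is fresh noise, independent of $(W_{0:t-1},S)$, whereas $G_t$ and $C_t$ are deterministic functions of $(W_{t-1},S)$; consequently $U_t$ is conditionally independent of $W_{0:t-2}$ given $(W_{t-1},S)$, i.e. $W_{0:t-2}-(W_{t-1},S)-U_t$ is a Markov chain. Expanding $I(U_t;S,W_{0:t-2}|W_{t-1})$ by the chain rule in the two natural orders and using $I(U_t;W_{0:t-2}|S,W_{t-1})=0$ (by the Markov property) yields $I(U_t;S|W_{0:t-1})=I(U_t;S|W_{t-1})-I(U_t;W_{0:t-2}|W_{t-1})\le I(U_t;S|W_{t-1})$. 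Chaining all the inequalities gives $I(W_T;S)\le\sum_{t=1}^T I(-G_t+C_t^{1/2}N_t;S|W_{t-1})$, as claimed. The only thing to check en route is that the conditional mutual informations involved are finite — they are, being the mutual information between a Gaussian vector and a Gaussian mixture — so that the chain-rule manipulations are legitimate.
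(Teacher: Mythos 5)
Your proof is correct, but it follows a different decomposition than the paper's. The paper peels off one step at a time directly from the marginal mutual information: writing $W_t = W_{t-1} + \eta\widehat{G}_t$, it applies the data-processing inequality $I(W_t;S)\leq I(W_{t-1},\widehat{G}_t;S)$ and the chain rule $I(W_{t-1},\widehat{G}_t;S)=I(W_{t-1};S)+I(\widehat{G}_t;S\mid W_{t-1})$, and then recurses on $I(W_{t-1};S)$; the conditioning at every stage never involves more than $W_{t-1}$, so no structural property of the dynamics beyond the update equation is needed (scale-invariance of MI removes $\eta$, as in your argument). You instead apply one data-processing inequality to the whole trajectory, chain-rule over $W_{0:T}$, change variables from $W_t$ to $U_t$, and then must discard the conditioning on $W_{0:t-2}$ — a step that is false for general processes and which you correctly justify via the Markov chain $W_{0:t-2}-(W_{t-1},S)-U_t$, using that $N_t$ is fresh noise and $(G_t,C_t)$ depend only on $(W_{t-1},S)$. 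The trade-off: the paper's recursion is shorter and does not require this conditional independence (it would remain valid even if the increment depended on the full history), whereas your route makes the trajectory-level chain rule and the Markovian structure of the discrete SDE explicit, at the cost of the extra conditional-independence argument and the finiteness caveat needed to rearrange the two chain-rule expansions. Both yield exactly the stated bound.
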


This lemma can be proved by recurrently applying the data processing inequality (DPI) and chain rule of the mutual information \citep{polyanskiy2019lecture}.

Define $\widehat{G}_t = -G_t+C_t^{1/2}N_t$. Let $Q_{\hat{G}_t|s,w_{t-1}}$ and $Q_{\hat{G}_t|w_{t-1}}$ be the conditional and marginal distributions fully characterized by the algorithm, respectively. In addition, let $P_{\widehat{G}_t|w_{t-1}}$ be any prior distribution of $\widehat{G}_t$, satisfying $\mathrm{D_{KL}}(Q_{\widehat{G}_t|s,w_{t-1}}||P_{\widehat{G}_t|w_{t-1}})<\infty$.
we first have the following lemma.
% By Lemma \ref{lem:mi-center-gravity}, we have
\begin{lem}
\label{lem:cmi-golden formula}
% Let $\widehat{G}_t = -G_t+C_t^{1/2}N_t$. 
% then
% Let $P_{\widehat{G}_t|w_{t-1}}$ be any prior, 
% satisfying $\mathrm{D_{KL}}(P_{\widehat{G}_t|S,W_{t-1}}||P_{\widehat{N}_t|W_{t-1}})<\infty$.
% At every time step $t$, let $P_{\widehat{G}_t|W_{t-1}}$ be any distribution satisfying $\mathrm{D_{KL}}(P_{\widehat{G}_t|W_{t-1}}||P_{\widehat{G}_t|W_{t-1}})<\infty$, 
For any time step $t$,
we have
% $
% I(\widehat{G}_t;S|W_{t-1})\!\!=\!\! \ex{}{\inf_{P_{\widehat{G}_t|W_{t-1}}} \cex{}{}{\mathrm{D_{KL}}(Q_{\widehat{G}_t|S,W_{t-1}}||P_{\widehat{G}_t|W_{t-1}})}}$,
$
I(\widehat{G}_t;S|W_{t-1}) = \ex{W_{t-1}}{\inf_{P_{\widehat{G}_t|W_{t-1}}} \cex{S}{W_{t-1}}{\mathrm{D_{KL}}(Q_{\widehat{G}_t|S,W_{t-1}}||P_{\widehat{G}_t|W_{t-1}})}}$,
where the infimum is achieved when the prior distribution $P_{\widehat{G}_t|w_{t-1}} = Q_{\widehat{G}_t|w_{t-1}}$ .
\end{lem}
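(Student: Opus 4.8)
The plan is to prove this lemma as an instance of the well-known ``golden formula'' for mutual information (also called the variational characterization of conditional mutual information), applied conditionally on $W_{t-1}$. First I would recall the basic decomposition: for any fixed realization $w_{t-1}$ and any prior $P_{\widehat{G}_t|w_{t-1}}$ that makes the KL divergence finite, one has the identity
\begin{align}
\cex{S}{w_{t-1}}{\mathrm{D_{KL}}(Q_{\widehat{G}_t|S,w_{t-1}}\|P_{\widehat{G}_t|w_{t-1}})} = I(\widehat{G}_t;S|W_{t-1}=w_{t-1}) + \mathrm{D_{KL}}(Q_{\widehat{G}_t|w_{t-1}}\|P_{\widehat{G}_t|w_{t-1}}), \notag
\end{align}
which follows by writing both KL terms as expectations of log-density ratios and noting that the cross term telescopes: $\mathbb{E}\log\frac{dQ_{\widehat{G}_t|S,w_{t-1}}}{dP_{\widehat{G}_t|w_{t-1}}} = \mathbb{E}\log\frac{dQ_{\widehat{G}_t|S,w_{t-1}}}{dQ_{\widehat{G}_t|w_{t-1}}} + \mathbb{E}\log\frac{dQ_{\widehat{G}_t|w_{t-1}}}{dP_{\widehat{G}_t|w_{t-1}}}$, where the first expectation (over $S\sim P_{S|w_{t-1}}$) is exactly the conditional mutual information at $w_{t-1}$ and the second does not depend on $S$.

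Next, since $\mathrm{D_{KL}}(Q_{\widehat{G}_t|w_{t-1}}\|P_{\widehat{G}_t|w_{t-1}})\ge 0$ with equality if and only if $P_{\widehat{G}_t|w_{t-1}} = Q_{\widehat{G}_t|w_{t-1}}$, taking the infimum over priors on the left-hand side yields
\begin{align}
\inf_{P_{\widehat{G}_t|w_{t-1}}} \cex{S}{w_{t-1}}{\mathrm{D_{KL}}(Q_{\widehat{G}_t|S,w_{t-1}}\|P_{\widehat{G}_t|w_{t-1}})} = I(\widehat{G}_t;S|W_{t-1}=w_{t-1}), \notag
\end{align}
with the choice $P_{\widehat{G}_t|w_{t-1}} = Q_{\widehat{G}_t|w_{t-1}}$ attaining it. Finally I would take the expectation over $W_{t-1}$ on both sides; since $I(\widehat{G}_t;S|W_{t-1}) = \ex{W_{t-1}}{I(\widehat{G}_t;S|W_{t-1}=w_{t-1})}$ by definition of conditional mutual information, this gives exactly the claimed formula. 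I should also remark that the marginal $Q_{\widehat{G}_t|w_{t-1}}$ is indeed a valid (measurable) choice of prior since it is fully determined by the algorithm and the dynamics up to step $t-1$, as required in the setup preceding the lemma.

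The main technical point to be careful about — the ``obstacle,'' though a mild one — is the measure-theoretic justification: ensuring that the Radon–Nikodym derivatives are well-defined (absolute continuity $Q_{\widehat{G}_t|S,w_{t-1}} \ll Q_{\widehat{G}_t|w_{t-1}} \ll P_{\widehat{G}_t|w_{t-1}}$), that Fubini/Tonelli applies to interchange the expectation over $S$ with the integral defining the KL, and that moving the infimum outside the outer expectation over $W_{t-1}$ is legitimate (it is, because the infimum is attained pointwise at $Q_{\widehat{G}_t|w_{t-1}}$, so the pointwise-optimal prior induces the same value as the infimum of the expectation). Since $\widehat{G}_t = -G_t + C_t^{1/2}N_t$ is, conditionally on $S$ and $w_{t-1}$, a Gaussian with mean $-G_t(w_{t-1},S)$ and covariance $\eta^{?}C_t$ (a nondegenerate Gaussian when $C_t\succ 0$), the densities exist and the finiteness assumption $\mathrm{D_{KL}}(Q_{\widehat{G}_t|s,w_{t-1}}\|P_{\widehat{G}_t|w_{t-1}})<\infty$ stated in the hypothesis takes care of the integrability, so these checks are routine. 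I would therefore present the proof compactly, emphasizing the telescoping identity and the nonnegativity of KL, and relegate the measure-theoretic caveats to a sentence.
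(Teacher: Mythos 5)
Your proposal is correct and follows essentially the same route as the paper's proof: both rest on the identity $\ex{S}{\mathrm{D_{KL}}(Q_{\widehat{G}_t|S,w_{t-1}}||P_{\widehat{G}_t|w_{t-1}})} = I(\widehat{G}_t;S|W_{t-1}=w_{t-1}) + \mathrm{D_{KL}}(Q_{\widehat{G}_t|w_{t-1}}||P_{\widehat{G}_t|w_{t-1}})$ together with nonnegativity of the KL divergence (attained at $P_{\widehat{G}_t|w_{t-1}} = Q_{\widehat{G}_t|w_{t-1}}$), followed by taking the expectation over $W_{t-1}$, exactly as in the paper's pointwise-then-average argument adapted from the variational representation of mutual information.
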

% \begin{rem}
% Lemma \ref{lem:cmi-golden formula} suggests that for every step $t$, the conditional MI between $\widehat{G}_t$ and $S$ could be upper bounded by the expected KL divergence between $Q_{\widehat{G}_t|S,W_{t-1}}$ and some prior $P_{\widehat{G}_t|W_{t-1}}$. 

% In Lemma~\ref{lem:cmi-golden formula}, $\mathrm{D_{KL}}(Q_{\widehat{G}_t|S,W_{t-1}}||P_{\widehat{G}_t|W_{t-1}})$ may be viewed as an estimate of the sensitivity of the full batch gradient to a specific training sample $S=s$. Later on we will show that if full batch gradient is close to the population gradient (i.e. $\ex{Z}{\nabla\ell(w,Z)}$), 
 %indicating less sensitivity of the full batch gradient w.r.t $S$, 
 % this sensitivity is small, leading to better generalization performance.
 
% \end{rem}
% While $\widehat{G}_t$ is the gradient signal at step $t$, 
 
% Moreover, 
Lemma~\ref{lem:cmi-golden formula} suggests that every choice of the prior $P_{\widehat{G}_t|W_{t-1}}$ gives rise to an upper bound of the MI of interest via 
% $I(\widehat{G}_t;S|W_{t-1}) \le \ex{W_{t-1}}{ \cex{S}{W_{t-1}}{\mathrm{D_{KL}}(Q_{\widehat{G}_t|S,W_{t-1}}||P_{\widehat{G}_t|W_{t-1}})}}$
$I(\widehat{G}_t;S|W_{t-1}) \le \ex{}{{\mathrm{D_{KL}}(Q_{\widehat{G}_t|S,W_{t-1}}||P_{\widehat{G}_t|W_{t-1}})}}$. The closer is $P_{\widehat{G}_t|W_{t-1}}$ to $Q_{\widehat{G}_t|W_{t-1}}$, the tighter is the bound. As the simplest choice, we will first choose an isotropic Gaussian prior, $P_{\widehat{G}_t|w_{t-1}} = \mathcal{N}(\tilde{g}_t, \sigma^2_t\mathrm{I}_d)$ (where both $\tilde{g}_t$ and $\sigma_t$ are only allowed to depend on $W_{t-1}$), and optimize the KL divergence in Lemma~\ref{lem:cmi-golden formula} over $\sigma_t$ for a fixed $\tilde{g}_t$. 
% Then combined with Lemma \ref{lem:mi-unroll}, 
The following result is obtained.
\begin{thm}
\label{thm:isotropic-prior-bound}
Under the conditions of Lemma~\ref{lem:xu's-bound} and assume $C_t$ is a positive-definite matrix. For any $t\in [T]$, let $\tilde{g}_t$ be any constant vector for a given $w_{t-1}$, then
\begin{align}
    \mathcal{E}_{\mu}(\mathcal{A})\leq\sqrt{\frac{R^2}{n}\sum_{t=1}^T\ex{W_{t-1}}{d\log{\frac{h_1(W_{t-1})}{d}}-h_2(W_{t-1})}},
\label{ineq:iso-gen-bound}
\end{align}
% \[
% \mathcal{E}_{\mu}(\mathcal{A})\leq\sqrt{\frac{R^2}{n}\sum_{t=1}^T\ex{W_{t-1}}{d\log{\frac{\cex{S}{W_{t-1}}{\left|\left|G_t-\tilde{g}_t\right|\right|^2 +tr\left\{C_t\right\}}}{d}}-\cex{S}{W_{t-1}}{tr\left\{\log{C_t}\right\}}}},
% \]
where 
$h_1(w) = \cex{S}{w}{\left|\left|G_t-\tilde{g}_t\right|\right|^2 +tr\left\{C_t\right\}}$ and
  $h_2(w) = \cex{S}{w}{tr\left\{\log{C_t}\right\}}$. 
  % and
% \begin{eqnarray*}
%   A_1(t) &=& \ex{}{(\left|\left|G_t-\tilde{g}\right|\right|^2 +tr\left\{C_t\right\}},\\
%   A_2(t) &=& \ex{}{tr\left\{\log{C_t}\right\}},
% \end{eqnarray*}
  % $tr\{\cdot\}$ denotes the trace of a matrix. 
  % and $\mathbb{E}_X^Y$ is the conditional expectation. 
%  Further, the optimal $\sigma_t^* = \sqrt{A_1(t)/d}$ for each step $t$.

Furthermore, if $\tilde{g}_t = \ex{Z}{\nabla\ell(w_{t-1},Z)}$, then
$h_1(w) = \frac{1}{b}\tr{\Sigma_t^\mu}$.
% \begin{align}
%     \mathcal{E}_{\mu}(\mathcal{A})\leq\sqrt{\frac{R^2}{n}\sum_{t=1}^T\ex{W_{t-1}}{d\log{\frac{\tr{\Sigma_t^\mu}}{bd}}-h_2(W_{t-1})}}.
%     \label{ineq:iso-pop-bound}
% \end{align}
\end{thm}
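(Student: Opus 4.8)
The plan is to chain the three lemmas already in place and then evaluate a single Gaussian KL divergence in closed form. Since the output hypothesis is $W=W_T$, Lemma~\ref{lem:xu's-bound} gives $|\mathcal{E}_\mu(\mathcal{A})|\le\sqrt{2R^2 I(W_T;S)/n}$, and Lemma~\ref{lem:mi-unroll} gives $I(W_T;S)\le\sum_{t=1}^{T}I(\widehat{G}_t;S|W_{t-1})$. So it suffices to prove, for each $t$, the per-step bound $I(\widehat{G}_t;S|W_{t-1})\le\tfrac12\,\ex{W_{t-1}}{d\log\frac{h_1(W_{t-1})}{d}-h_2(W_{t-1})}$; summing over $t$ and observing $\sqrt{\tfrac{2R^2}{n}\cdot\tfrac12}=\sqrt{\tfrac{R^2}{n}}$ then gives Eq.~(\ref{ineq:iso-gen-bound}) (the two-sided bound $|\mathcal{E}_\mu(\mathcal{A})|\le\cdots$ a fortiori gives the stated one-sided inequality).

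To get the per-step bound, fix $t$ and apply Lemma~\ref{lem:cmi-golden formula} with the restricted family of priors $P_{\widehat{G}_t|w_{t-1}}=\mathcal{N}(-\tilde g_t,\sigma_t^2\mathrm{I}_d)$, where the center $-\tilde g_t$ and variance $\sigma_t^2>0$ are functions of $w_{t-1}$ only (the minus sign on $\tilde g_t$ is just the convention that makes the mean-discrepancy term come out as $\eucd{G_t-\tilde g_t}^2$). Since the infimum in that lemma runs over \emph{all} priors, each such choice upper bounds $I(\widehat{G}_t;S|W_{t-1})$. Conditionally on $S=s$ and $W_{t-1}=w_{t-1}$, $G_t$ and $C_t$ are deterministic, so $Q_{\widehat{G}_t|s,w_{t-1}}=\mathcal{N}(-G_t,C_t)$, and the Gaussian--Gaussian KL formula evaluates to $\tfrac12\big[\sigma_t^{-2}\big(\mathrm{tr}\{C_t\}+\eucd{G_t-\tilde g_t}^2\big)-d+d\log\sigma_t^2-\mathrm{tr}\{\log C_t\}\big]$, with $\mathrm{tr}\{\log C_t\}$ read as $\log\det C_t$ (finite, and the conditional law nondegenerate, because $C_t\succ0$). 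Taking $\cex{S}{w_{t-1}}{\cdot}$, which is legitimate since $\tilde g_t$ and $\sigma_t$ are not functions of $s$, this becomes $\tfrac12\big[h_1(w_{t-1})/\sigma_t^2-d+d\log\sigma_t^2-h_2(w_{t-1})\big]$ with $h_1,h_2$ exactly as in the statement. This is convex in $1/\sigma_t^2$ and minimized at $\sigma_t^2=h_1(w_{t-1})/d$, which is admissible because $h_1(w_{t-1})\ge\cex{S}{w_{t-1}}{\mathrm{tr}\{C_t\}}>0$; at the minimizer the $h_1/\sigma_t^2$ and $-d$ terms cancel, leaving $\tfrac12\big[d\log\frac{h_1(w_{t-1})}{d}-h_2(w_{t-1})\big]$. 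Applying $\ex{W_{t-1}}{\cdot}$ completes the per-step bound.

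For the ``Furthermore'' claim, take $\tilde g_t=\ex{Z}{\nabla\ell(w_{t-1},Z)}$ and split $h_1(w_{t-1})=\cex{S}{w_{t-1}}{\eucd{G_t-\tilde g_t}^2}+\cex{S}{w_{t-1}}{\mathrm{tr}\{C_t\}}$. Treating the $Z_i$ as i.i.d.\ $\mu$, so that $\mathbb{E}[G_t]=\tilde g_t$, the first term is $\mathrm{tr}(\mathrm{Cov}(G_t))=\tfrac1n\mathrm{tr}\{\Sigma_t^\mu\}$; and since the $1/n$-normalized empirical covariance satisfies $\mathbb{E}[\Sigma_t]=\tfrac{n-1}{n}\Sigma_t^\mu$, the second is $\tfrac{n-b}{b(n-1)}\cdot\tfrac{n-1}{n}\mathrm{tr}\{\Sigma_t^\mu\}=\tfrac{n-b}{bn}\mathrm{tr}\{\Sigma_t^\mu\}$; adding, $h_1(w_{t-1})=\tfrac1b\mathrm{tr}\{\Sigma_t^\mu\}$. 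I expect this last step to be the main subtlety: for $t\ge2$ the conditional law of $S$ given $W_{t-1}$ is not $\mu^{n}$, so the identity $\mathbb{E}[G_t]=\tilde g_t$ --- hence $h_1=\tfrac1b\mathrm{tr}\{\Sigma_t^\mu\}$ --- is exact only at $t=1$ and otherwise should be understood under the independence/weak-approximation premise that underlies the SDE viewpoint. The remaining points are routine bookkeeping (integrability of $\mathrm{tr}\{\log C_t\}$, admissibility of the optimal $\sigma_t^2$), all covered by the assumption $C_t\succ0$.
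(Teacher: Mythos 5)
Your proposal is correct and follows essentially the same route as the paper's own proof: chain Lemma~\ref{lem:xu's-bound}, Lemma~\ref{lem:mi-unroll}, and Lemma~\ref{lem:cmi-golden formula}, evaluate the Gaussian--Gaussian KL with an isotropic prior centered at the fixed $\tilde g_t$, optimize only over $\sigma_t^2$ (optimum $h_1(w_{t-1})/d$), and for the ``furthermore'' part use $\mathbb{E}_S[(G_t-\tilde g_t)(G_t-\tilde g_t)^{\bf T}]=\tfrac1n\Sigma_t^\mu$ and $\mathbb{E}_S[\Sigma_t]=\tfrac{n-1}{n}\Sigma_t^\mu$ exactly as the paper does. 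The subtlety you flag about the conditional law of $S$ given $W_{t-1}$ for $t\ge 2$ is real but is also present (implicitly) in the paper's computation, so it does not distinguish your argument from theirs.
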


Notice that $\tilde{g}_t$ is any reference ``gradient'' independent of $S$, then the first term in $h_1(W_{t-1})$, $\eucd{G_t-\tilde{g}_t}^2$, characterizes the sensitivity of the full-batch gradient to some variation of the training set $S$,  while the second term in $h_1(W_{t-1})$, i.e. $\tr{C_t}$, reflects the gradient noise magnitude induced by the mini-batch based training. For example, if $\tilde{g}_t = \ex{Z}{\nabla\ell(w_{t-1},Z)}$, then $\cex{S}{w_{t-1}}{\eucd{G_t-\tilde{g}_t}^2}$ is the variance of the gradient sample mean, and such $\tilde{g}_t$ will eventually convert $h_1(W_{t-1})$ to the population GNC, namely $h_1(W_{t-1}) = \frac{1}{b}\tr{\Sigma_t^\mu}$.
% (i.e. Eq.~(\ref{eq:population-gradient-noise})) as shown in Eq.~(\ref{ineq:iso-pop-bound}). 

% In addition, 
% seen from Eq.~(\ref{ineq:iso-gen-bound}) in Theorem~\ref{thm:isotropic-prior-bound}, the GNC impacts the generalization error non-monotonically. On the one hand, we hope the diagonal elements in $C_t$ have small values so that $h_1(W_{t-1})$ is small. This corresponds to the case where most gradient signals align with each other or the gradient norm is small, a phenomenon typically occurring when training approaches local minima. On the other hand,  large diagonal elements in $C_t$ can increase the value of $h_2(W_{t-1})$, giving a smaller bound value. This coincides with the intuition from the optimization perspective, where larger gradient noise magnitude helps to escape saddle points. Reportedly this non-monotonicity is controlled by the ratio of the learning rate to the batch size in practice \citep{hoffer2017train,jastrzkebski2017three}. We note that such trade-off of gradient noise has not been observed in the previous information-theoretic analyses for SGLD.

% \begin{rem}
Moreover, if we simply let $\tilde{g}_t = 0$, then Theorem \ref{thm:isotropic-prior-bound} indicates that one can control the generalization performance via controlling the gradient norm along the entire training trajectories, e.g., if we further let $b=1$, then $h_1(W_{t-1})=\frac{1}{n}\sum_{i=1}^n||\nabla\ell_i||^2$. 
% Note that controlling gradient norm can also control the magnitude of the trace of gradient noise covariance. 
This is consistent with the existing practice, for example, applying gradient clipping \citep{wang2022generalization,geiping2021stochastic} and gradient penalty \citep{jastrzebski2021catastrophic,barrett2020implicit,smith2020origin,geiping2021stochastic} as regularization 
techniques to improve generalization. 

As a by-product, we recover previous information-theoretic bounds for the Gradient Langevin dynamics (GLD) with noise distribution $\mathcal{N}(0,\eta^2 \mathrm{I}_d)$ below.
% It is possible to compare Theorem~\ref{thm:isotropic-prior-bound} with some previous  bounds by letting $C_t=\mathrm{I}_d$, in which Eq.~(\ref{eq:sgd-update-gaussian}) reduces to the Gradient Langevin dynamics (GLD) with noise distribution $\mathcal{N}(0,\eta^2 \mathrm{I}_d)$ and the corresponding generalization bound is stated below.
\begin{cor}
\label{cor:langevin-dynamic}
If $C_t=\mathrm{I}_d$, then
\[
\mathcal{E}_{\mu}(\mathcal{A})\leq\sqrt{\frac{R^2d}{n}\sum_{t=1}^T{{\mathbb{E}_{W_{t-1}}{\log\left({\mathbb{E}_{S}^{W_{t-1}}{\left|\left|G_t-\tilde{g}_t\right|\right|^2}}/{d}+1\right)}}}}.
\]
\end{cor}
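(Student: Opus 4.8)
The plan is to obtain Corollary~\ref{cor:langevin-dynamic} as a direct specialization of Theorem~\ref{thm:isotropic-prior-bound} to the degenerate covariance $C_t = \mathrm{I}_d$. Since $\mathrm{I}_d$ is positive-definite, the hypotheses of Theorem~\ref{thm:isotropic-prior-bound} are met (with $\tilde{g}_t$ still an arbitrary $S$-independent reference vector), so it suffices to evaluate the functions $h_1$ and $h_2$ appearing in~(\ref{ineq:iso-gen-bound}) at $C_t = \mathrm{I}_d$ and simplify.

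First I would simplify $h_2$: by the paper's convention $\tr{\log A} = \sum_{k=1}^d \log A_{k,k}$, we get $\tr{\log \mathrm{I}_d} = \sum_{k=1}^d \log 1 = 0$, hence $h_2(w) = \mathbb{E}_S^{w}[\tr{\log C_t}] = 0$ for every $w$. Next, $\tr{C_t} = \tr{\mathrm{I}_d} = d$, so $h_1(w) = \mathbb{E}_S^{w}[\,\|G_t - \tilde{g}_t\|^2 + \tr{C_t}\,] = \mathbb{E}_S^{w}[\,\|G_t - \tilde{g}_t\|^2\,] + d$. In particular $h_1(w) \ge d > 0$, so $d\log(h_1(w)/d)$ is well-defined and nonnegative, and no degeneracy arises in the bound.

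Substituting into the summand of~(\ref{ineq:iso-gen-bound}) yields $d\log\frac{h_1(w)}{d} - h_2(w) = d\log\!\big(\frac{\mathbb{E}_S^{w}[\|G_t - \tilde{g}_t\|^2]}{d} + 1\big)$. Factoring the constant $d$ out of the sum over $t$ and absorbing it into the prefactor converts $\frac{R^2}{n}$ into $\frac{R^2 d}{n}$, which is exactly the claimed inequality. I expect essentially no technical obstacle here: the entire argument is a substitution, and the only points worth checking are the positive-definiteness of $\mathrm{I}_d$ and the sign convention for $\tr{\log \cdot}$. I would finish with the remark explaining the "Langevin" label: when $C_t = \mathrm{I}_d$, the Euler--Maruyama update~(\ref{eq:sgd-update-gaussian}) reads $W_t = w_{t-1} - \eta G_t + \eta N_t$ with $\eta N_t \sim \mathcal{N}(0, \eta^2 \mathrm{I}_d)$, i.e.\ the Gradient Langevin dynamics recursion, so the bound coincides with the known information-theoretic guarantees for GLD.
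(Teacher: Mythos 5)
Your proposal is correct and matches the paper's own proof: both simply instantiate Theorem~\ref{thm:isotropic-prior-bound} with $C_t=\mathrm{I}_d$, use $\tr{\log \mathrm{I}_d}=0$ and $\tr{\mathrm{I}_d}=d$, and rewrite $d\log\bigl(\cex{S}{w}{\|G_t-\tilde{g}_t\|^2+d}/d\bigr)$ as $d\log\bigl(\cex{S}{w}{\|G_t-\tilde{g}_t\|^2}/d+1\bigr)$. No gaps; your added remarks on positive-definiteness and the GLD interpretation are consistent with the paper.
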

% It is easy to verify 
Note that the bound in Corollary \ref{cor:langevin-dynamic} can recover the bound in \citet[Proposition~3.]{neu2021information} by using the inequality
% the inequality 
$\log(x+1)\leq x$. 
% for $x>0$ 
Furthermore, it can also recover the bound in \citet{pensia2018generalization} because we use a state-dependent quantity $\cex{S}{w_{t-1}}{||G_t-\tilde{g}_t||^2}$, 
% rather than 
which is smaller than the global Lipschitz constant used in \citet{pensia2018generalization}.

%Intuitively, if the gradient noise distribution is an isotropic Gaussian, independent of the parameter and the distribution, then our optimal isotropic Gaussian prior used in Lemma \ref{lem:cmi-golden formula} will match the posterior distribution better, leading to a tightest bound. 

% It's important to note that this does not indicate that GLD can outperform SGD. On the one hand, the isotropic gradient noise also has some implicit impact on the term $\ex{}{||G_t-\tilde{g}_t||^2}$, and on the other hand, Langevin dynamics may require longer training time to converge to a minimum than mini-batch SGD, i.e., having larger $T$.

% Theorem \ref{thm:isotropic-prior-bound} and Corollary \ref{cor:langevin-dynamic} both indicate that one can control the generalization performance via controlling the gradient norm along the entire training trajectories. Note that controlling gradient norm can also control the magnitude of the trace of gradient noise covariance. This is consistent with many previous practical applications, for example, applying gradient clipping \citep{wang2022generalization,geiping2021stochastic} and gradient penalty \citep{jastrzebski2021catastrophic,barrett2020implicit,smith2020origin,geiping2021stochastic} as regularization 
% techniques to improve the generalization performance. 
\begin{figure*}[!ht]
    \centering
    \begin{subfigure}[b]{0.245\textwidth}
\includegraphics[scale=0.28]{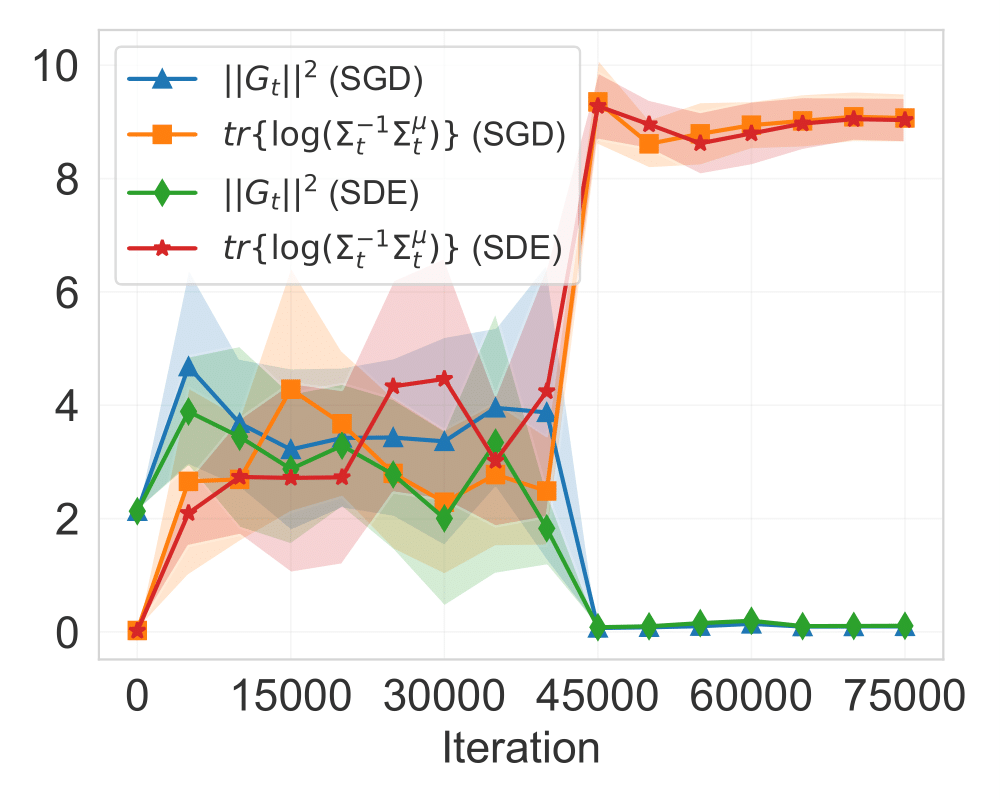}    
\caption{VGG on (small) SVHN}            \label{fig:vgg-svhn-cov}
    \end{subfigure}
\begin{subfigure}[b]{0.245\textwidth}
\includegraphics[scale=0.28]{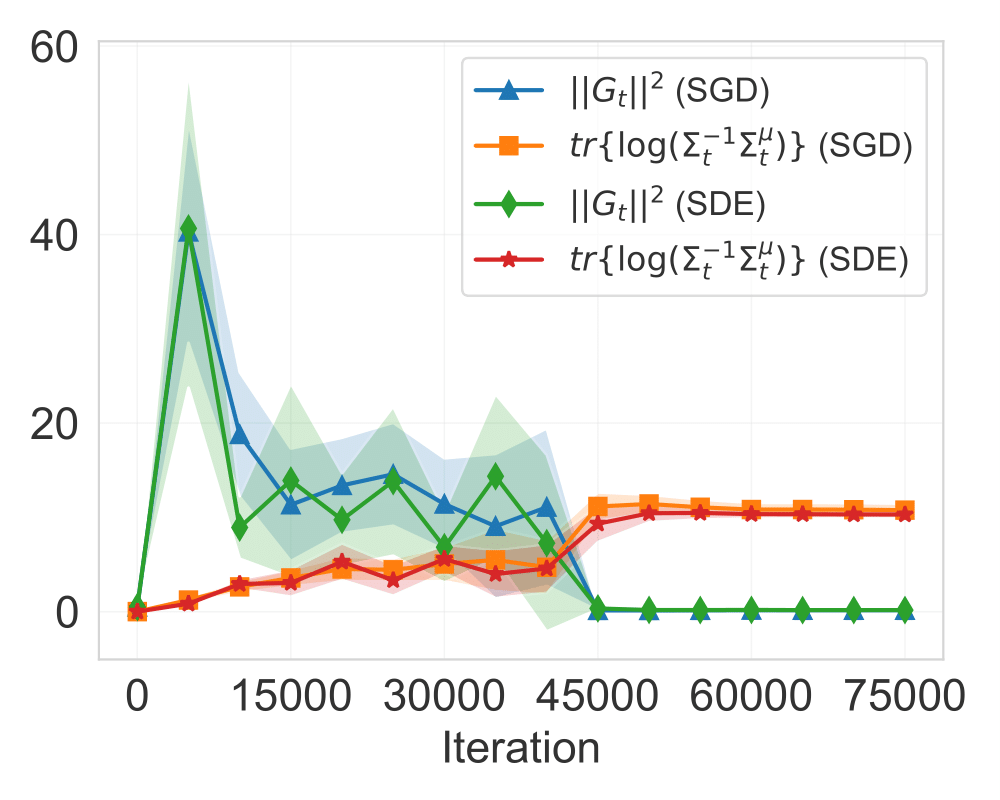}
\caption{VGG on CIFAR10}
    \label{fig:vgg-cifa10-cov}
\end{subfigure}
 \begin{subfigure}[b]{0.245\textwidth}
\includegraphics[scale=0.28]{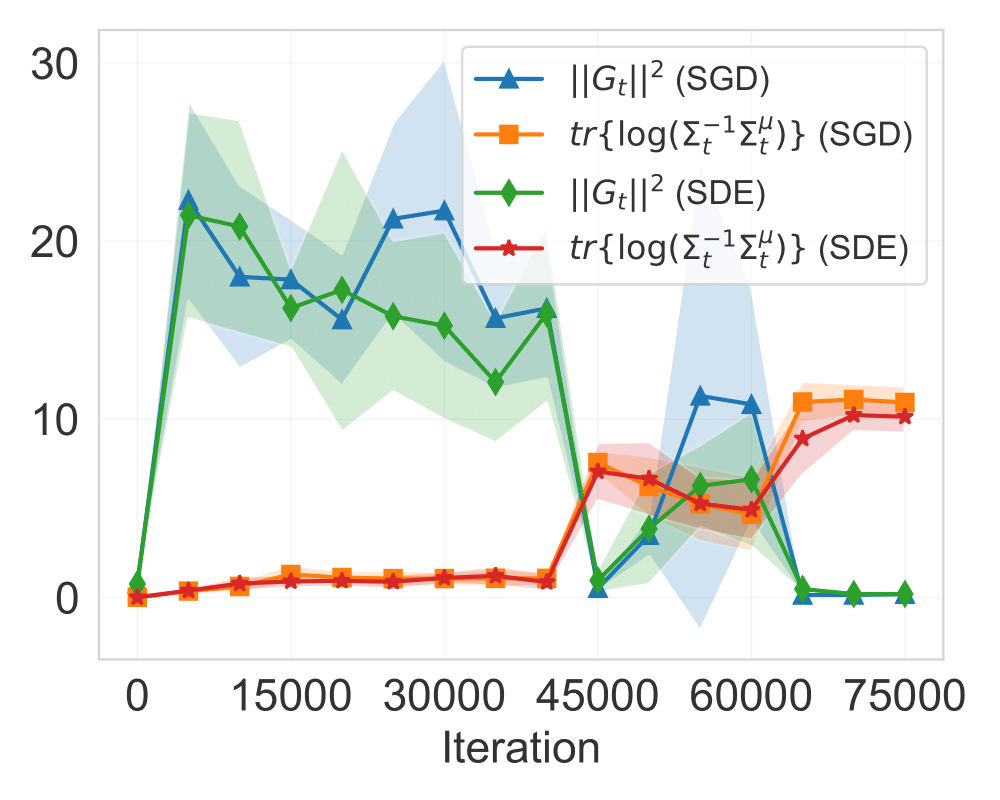}
\caption{ResNet on CIFAR10}
\label{fig:resnet-cifa10-cov}
    \end{subfigure}
\begin{subfigure}[b]{0.245\textwidth}
\includegraphics[scale=0.28]{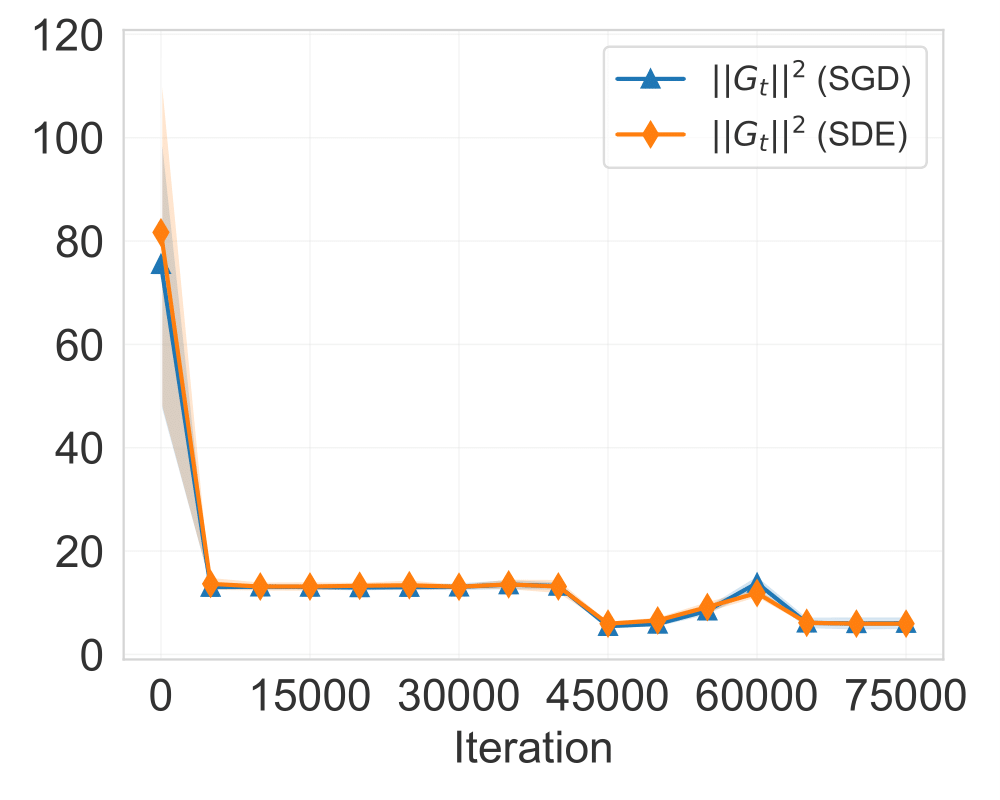}
\caption{ResNet on CIFAR100}
\label{fig:resnet-cifa100-cov}
\end{subfigure}
\caption{Gradient-related quantities of SGD or its discrete SDE approximation. In (d), since per-sample gradient is ill-defined when BatchNormalization is used, we do not track $\tr{\log\pr{\Sigma^{-1}_t\Sigma^{\mu}_t}}$.}\label{fig:Cov-Dynamics}
% \vspace{-4mm}
\end{figure*}
% Arguably, the  upper  bound  in Theorem \ref{thm:isotropic-prior-bound}  has  the  deficiency  of being dependent on an isotropic Gaussian prior. 
While choosing the isotropic Gaussian prior is common in the GLD or SGLD setting, given that we already know $C_t$ is an anisotropic covariance, one can select an anisotropic prior to better incorporate the geometric structure in the prior distribution. A natural choice of the covariance is a scaled population GNC, namely $\tilde{c}_t\Sigma_t^{\mu}$, where $\tilde{c}_t$ is some positive state-dependent scaling factor. Let $\tilde{g}_t = \ex{Z}{\nabla\ell(w_{t-1},Z)}$ be the state-dependent mean. 
% The following bound is achieved 
By optimizing over $c_t$, we have the bound below.

\begin{thm}
\label{thm:anisotropic-prior-bound}
Under the conditions of Lemma~\ref{lem:xu's-bound} and assume $C_t$ and $\Sigma^\mu_t$ are positive-definite matrices,
% Let $P_{\widehat{G}_t|w_{t-1}}=\mathcal{N}\pr{\ex{Z}{\nabla\ell(w_{t-1},Z)},\tilde{c}_t\Sigma_t^{\mu}}$, 
then
% \[
% \mathcal{E}_{\mu}(\mathcal{A})\leq\sqrt{\frac{R^2}{n}\sum_{t=1}^T\ex{W_{t-1}}{A_3(t)}+\frac{R^2dT}{n}\log\frac{1}{b}},
% \]
% where $A_3(t)=tr\left\{\log\Sigma^\mu_t-\cex{S}{W_{t-1}}{\log C_t}\right\}$.
\[
\mathcal{E}_{\mu}(\mathcal{A})\leq\sqrt{\frac{R^2}{n}\sum_{t=1}^T\ex{W_{t-1},S}{\tr{\log\frac{\Sigma^\mu_tC_t^{-1}}{b}}}}.
\]
% where $A_2(t)$ is defined as the same in Theorem~\ref{thm:isotropic-prior-bound}.
\end{thm}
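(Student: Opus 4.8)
The plan is to follow the same recipe used for Theorem~\ref{thm:isotropic-prior-bound}, but with the anisotropic Gaussian prior $P_{\widehat{G}_t|w_{t-1}} = \mathcal{N}(-\tilde g_t,\, c_t\Sigma^\mu_t)$, where $\tilde g_t = \ex{Z}{\nabla\ell(w_{t-1},Z)}$ and $c_t>0$ is a state-dependent scalar to be optimized. Since $\widehat G_t = -G_t + C_t^{1/2}N_t$ with $N_t\sim\mathcal N(0,\mathrm I_d)$, the conditional law $Q_{\widehat G_t|s,w_{t-1}}$ is Gaussian with mean $-G_t$ and covariance $C_t$ (both functions of $s$ through $w_{t-1}$ fixed and the full batch). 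First I would write down the KL divergence between two $d$-dimensional Gaussians: for $Q=\mathcal N(-G_t, C_t)$ and $P=\mathcal N(-\tilde g_t, c_t\Sigma^\mu_t)$,
\begin{align*}
\mathrm{D_{KL}}(Q\|P) = \tfrac12\Bigl[\tr{(c_t\Sigma^\mu_t)^{-1}C_t} - d + (G_t-\tilde g_t)^{\mathrm{\bf T}}(c_t\Sigma^\mu_t)^{-1}(G_t-\tilde g_t) + \log\tfrac{\det(c_t\Sigma^\mu_t)}{\det C_t}\Bigr].
\end{align*}
Then by Lemma~\ref{lem:cmi-golden formula}, $I(\widehat G_t;S|W_{t-1})$ is at most the expectation over $S$ (given $W_{t-1}$) of this quantity, for any admissible choice of $c_t$; I would take the expectation $\cex{S}{w_{t-1}}{\cdot}$ inside and then minimize the resulting expression over $c_t>0$.

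The key computation is the optimization over $c_t$. Writing $a = \cex{S}{w_{t-1}}{\tr{(\Sigma^\mu_t)^{-1}C_t} + (G_t-\tilde g_t)^{\mathrm{\bf T}}(\Sigma^\mu_t)^{-1}(G_t-\tilde g_t)}$, the $c_t$-dependent part of $2\,\cex{S}{w_{t-1}}{\mathrm{D_{KL}}}$ is $a/c_t + d\log c_t + (\text{terms independent of }c_t)$; setting the derivative to zero gives $c_t^\star = a/d$, and substituting back yields $d\log(a/d) + d + (\text{const})$. The crucial point — and the step I expect to do the real work — is recognizing that with the particular mean choice $\tilde g_t = \ex{Z}{\nabla\ell(w_{t-1},Z)}$, the term $\cex{S}{w_{t-1}}{(G_t-\tilde g_t)^{\mathrm{\bf T}}(\Sigma^\mu_t)^{-1}(G_t-\tilde g_t)}$ simplifies: since $G_t$ is the empirical mean of $n$ i.i.d. gradients with mean $\tilde g_t$ and covariance $\Sigma^\mu_t$, we have $\cex{S}{w_{t-1}}{(G_t-\tilde g_t)(G_t-\tilde g_t)^{\mathrm{\bf T}}} = \tfrac1n\Sigma^\mu_t$, hence $\cex{S}{w_{t-1}}{(G_t-\tilde g_t)^{\mathrm{\bf T}}(\Sigma^\mu_t)^{-1}(G_t-\tilde g_t)} = \tfrac dn$, which is $O(d/n)$ and negligible/absorbable. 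More importantly, I expect that the paper's intent is that this cross term is dropped or bounded so that $a$ reduces to (essentially) $\cex{S}{w_{t-1}}{\tr{(\Sigma^\mu_t)^{-1}C_t}}$; combined with $b C_t \approx \Sigma_t$ (from $C_t = \Sigma_t/b$ when $n\gg b$, or more precisely the relation between $C_t$ and $\Sigma_t$) the logarithmic term becomes $\tr{\log\frac{\Sigma^\mu_t C_t^{-1}}{b}}$ after using concavity of $\log$ (Jensen) to pull the expectation over $S$ outside the $\tr\log$, i.e. $d\log\bigl(\tfrac1d\cex{S}{w_{t-1}}{\tr{(\Sigma^\mu_t)^{-1}C_t}}\bigr) \le \cex{S}{w_{t-1}}{\tr{\log((\Sigma^\mu_t)^{-1}C_t)}} + \log\frac{\text{(stuff)}}{b}$-type manipulation; I need to be careful about the direction of this inequality and about the convention $\tr{\log A}=\sum_k\log A_{k,k}$ stated in the notation section versus $\log\det$.

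Finally I would sum over $t=1,\dots,T$, take the outer expectation over $W_{t-1}$, feed the bound on $\sum_t I(\widehat G_t;S|W_{t-1})$ into Lemma~\ref{lem:mi-unroll} to bound $I(W_T;S)$, and then invoke Lemma~\ref{lem:xu's-bound} with the $R$-subgaussian loss assumption to get $|\mathcal E_\mu(\mathcal A)| \le \sqrt{\tfrac{2R^2}{n}I(W_T;S)} \le \sqrt{\tfrac{R^2}{n}\sum_t \ex{W_{t-1},S}{\tr{\log\frac{\Sigma^\mu_t C_t^{-1}}{b}}}}$, where the factor of $2$ is absorbed by the factor $\tfrac12$ in the Gaussian KL. The main obstacle is getting the constants and the log-trace vs. log-det bookkeeping exactly right so that the cross term vanishes cleanly and the $\tfrac1b$ lands inside the trace-log as stated; the rest is routine Gaussian KL algebra, a single scalar optimization, and Jensen's inequality.
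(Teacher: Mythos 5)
Your skeleton (the prior $\mathcal{N}(-\tilde g_t,\,c_t\Sigma^\mu_t)$ with $\tilde g_t=\ex{Z}{\nabla\ell(w_{t-1},Z)}$, the Gaussian KL, a single scalar optimization over $c_t$, then Lemma~\ref{lem:mi-unroll} and Lemma~\ref{lem:xu's-bound}) is exactly the paper's route, and your bookkeeping of the factor $2$ against the $\tfrac12$ in the KL is right. The gap is in the step you flag as "the real work": you propose to drop the quadratic mean term and then convert $d\log\bigl(\tfrac1d\cex{S}{w_{t-1}}{\tr{(\Sigma^\mu_t)^{-1}C_t}}\bigr)$ into $\cex{S}{w_{t-1}}{\tr{\log((\Sigma^\mu_t)^{-1}C_t)}}$ by Jensen. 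Neither move works: dropping a nonnegative term from $\cex{S}{w_{t-1}}{\mathrm{D_{KL}}}$ destroys the upper bound on $I(\widehat G_t;S|W_{t-1})$, and Jensen (equivalently the log-sum inequality, Lemma~\ref{lem:log-sum-ineq}) gives $\cex{S}{w_{t-1}}{\tr{\log X}}\le d\log\bigl(\tfrac1d\cex{S}{w_{t-1}}{\tr{X}}\bigr)$, i.e.\ the opposite direction of what you need.

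The actual mechanism requires neither approximation nor Jensen. The data-dependent $\tr{\log C_t^{-1}}$ in the final bound does not come from the $\tr{(\Sigma^\mu_t)^{-1}C_t}$ term at all; it comes from the $-\log\det C_t$ part of the KL, which sits linearly inside $\cex{S}{w_{t-1}}{\cdot}$ and is simply carried along as $-\cex{S}{w_{t-1}}{\tr{\log C_t}}$. The two $1/c_t$-terms are instead evaluated \emph{exactly} in expectation: $\cex{S}{w_{t-1}}{(G_t-\tilde g_t)(G_t-\tilde g_t)^{\bf T}}=\tfrac1n\Sigma^\mu_t$ gives $\tfrac{d}{c_t n}$ (the term you wanted to discard), and $\cex{S}{w_{t-1}}{C_t}=\tfrac{n-b}{b(n-1)}\cex{S}{w_{t-1}}{\Sigma_t}=\tfrac{n-b}{bn}\Sigma^\mu_t$ gives $\tfrac{(n-b)d}{c_t bn}$; their sum is exactly $\tfrac{d}{bc_t}$ since $\tfrac1n+\tfrac{n-b}{bn}=\tfrac1b$ (no $n\gg b$ approximation, and the cross term is essential to this cancellation). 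Minimizing $\tfrac{d}{bc_t}+d\log c_t$ gives $c_t^*=1/b$ and the value $d+d\log\tfrac1b$, so the per-step contribution is $\tfrac{d}{2}\log\tfrac1b+\tfrac12\tr{\log\Sigma^\mu_t-\cex{S}{w_{t-1}}{\log C_t}}=\tfrac12\cex{S}{w_{t-1}}{\tr{\log\tfrac{\Sigma^\mu_t C_t^{-1}}{b}}}$, which after summing over $t$ and applying Lemma~\ref{lem:xu's-bound} yields the stated bound. As written, your plan would not produce a valid proof of the theorem without this correction.
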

\begin{rem}
    If we let the diagonal element of ${\Sigma^\mu_t}$ in dimension $k$ be $\alpha_t(k)$ and let the corresponding diagonal element of $\Sigma_t$ be $\beta_t(k)$, and assume $n\gg b$ (so $\Sigma_t = bC_t$), then $\tr{\log({\Sigma^\mu_tC^{-1}_t}/{b})}=\sum_{k=1}^d\log{\frac{\alpha_t(k)}{\beta_t(k)}}$.
Thus, Theorem~\ref{thm:anisotropic-prior-bound} implies that a favorable alignment between the diagonal values of $\Sigma_t$ and $\Sigma^\mu_t$ will positively impact generalization performance. In other words, the perfect alignment of these two matrices indicates that SGD is insensitive to the randomness of $S$. Recall the key quantity in Lemma~\ref{lem:xu's-bound}, $I(W;S)$, which also measures the dependence of $W$ with the randomness of $S$, the term $\Sigma^{\mu}_{t}\Sigma^{-1}_t$ conveys a similar intuition in this context.
\end{rem}

Compared with Theorem~\ref{thm:isotropic-prior-bound} under the same choice of $\tilde{g}_t$, we notice that the main difference is that the term $tr\left\{\log({\Sigma^\mu_t}/{b})\right\}$, instead of  $d\log({tr\left\{\Sigma^\mu_t\right\}}/{bd})$, appears in the bound of Theorem~\ref{thm:anisotropic-prior-bound}. The following lemma demonstrates that Theorem~\ref{thm:anisotropic-prior-bound} is tighter than the bound in Theorem~\ref{thm:isotropic-prior-bound}.

\begin{lem}
    \label{lem:compare-iso-noniso}
    For any $t$, we have $\tr{\log\frac{\Sigma^\mu_t}{b}}\!\!\leq\! d\log{\frac{\tr{\Sigma_t^\mu}}{bd}}$, with the equality holds when all the diagonal elements in $\Sigma^\mu_t$ have the same value, i.e. $\alpha_t(1)=\alpha_t(2)=\cdots=\alpha_t(d)$.
\end{lem}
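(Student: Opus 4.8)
The plan is to reduce the matrix inequality to a scalar Jensen/AM--GM statement using the paper's convention that $\tr{\log A}=\sum_{k=1}^d\log A_{k,k}$. Write $\alpha_t(k)\triangleq(\Sigma_t^\mu)_{k,k}$ for the $k$-th diagonal entry of $\Sigma_t^\mu$; since $\Sigma_t^\mu$ is positive-definite, each $\alpha_t(k)>0$. Then the left-hand side of the claimed inequality equals $\sum_{k=1}^d\log\frac{\alpha_t(k)}{b}$, while $\tr{\Sigma_t^\mu}=\sum_{k=1}^d\alpha_t(k)$, so the right-hand side equals $d\log\frac{\sum_{k=1}^d\alpha_t(k)}{bd}$.

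Next I would set $a_k\triangleq\alpha_t(k)/b>0$, so the claim becomes $\sum_{k=1}^d\log a_k\le d\log\bigl(\tfrac1d\sum_{k=1}^d a_k\bigr)$, equivalently $\tfrac1d\sum_{k=1}^d\log a_k\le\log\bigl(\tfrac1d\sum_{k=1}^d a_k\bigr)$. This is exactly Jensen's inequality for the concave map $x\mapsto\log x$ on $(0,\infty)$ against the uniform distribution on $\{a_1,\dots,a_d\}$ (equivalently, the arithmetic--geometric mean inequality), which delivers the bound immediately.

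Finally, because $\log$ is strictly concave, equality in Jensen holds if and only if $a_1=\cdots=a_d$, i.e. $\alpha_t(1)=\cdots=\alpha_t(d)$, which is precisely the stated equality condition. There is essentially no obstacle here: the only points to verify are that the diagonal entries of $\Sigma_t^\mu$ are positive so that the logarithms are well defined (guaranteed by positive-definiteness) and that $b>0$ (immediate, as $b$ is a batch size).
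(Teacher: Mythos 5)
Your proof is correct and follows essentially the same route as the paper: the paper reduces the statement to the scalar inequality $\sum_{k=1}^d\log a_k\le d\log\bigl(\tfrac1d\sum_{k=1}^d a_k\bigr)$ for the diagonal entries $a_k$ of $\Sigma_t^\mu/b$ and invokes its Lemma~\ref{lem:log-sum-ineq} (a log-sum inequality proved via Jensen for the concave logarithm), which is exactly your direct Jensen/AM--GM argument, including the same equality condition.
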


% Let the diagonal element of ${\Sigma^\mu_t}/{b}$ in dimension $k$ be $a_k$, then
% \begin{align*}
%     \sum_{k=1}^d\log a_k\leq
%     (\sum_{k=1}^d 1) \cdot \log{(\sum_{k=1}^d a_k)}/{(\sum_{k=1}^d 1)}=d\log({tr\left\{\Sigma^\mu_t\right\}}/{bd}),
% \end{align*}
% where we invoked a variant of the Log sum inequality \citep[Theorem~2.7.1]{thomas2006elements}, See Lemma~\ref{lem:log-sum-ineq} in Appendix.
% i.e., $\sum_{i=1}^n b_i\log(a_i/b_i)\leq (\sum_{i=1}^n b_i)\log(\sum_{i=1}^n a_i/\sum_{i=1}^n b_i)$ for non-negative numbers $\{a_i\}_{i=1}^n$ and $\{b_i\}_{i=1}^n$ (See Lemma~\ref{lem:log-sum-ineq} in Appendix).
% Thus, the bound in Theorem~\ref{thm:anisotropic-prior-bound} is tighter than the bound in Eq.~(\ref{ineq:iso-pop-bound}), with the equality holds when all the diagonal elements in $\Sigma^\mu_t$ have the same value.

% \vspace{-0.3in}

The trajectory-based bounds in Theorem~\ref{thm:isotropic-prior-bound} and Theorem~\ref{thm:anisotropic-prior-bound} emphasize the significance of gradient-related information along entire trajectories, including metrics such as gradient norm and gradient covariance alignment, in comprehending the generalization dynamics of 
 understanding the generalization of SGD. In Figure~\ref{fig:Cov-Dynamics}, we visually show that these key gradient-based measures during SDE training closely mirror the dynamics observed in SGD.  
 % This further showcases that we can analyze the generalization of SGD via analyzing SDE.

 % \textcolor{red}{
% \paragraph{Dependency on Time} 
Notably, these trajectory-based information-theoretic bounds are time-dependent, indicating that these bounds may grow with the training iteration number $T$, unless the gradient norm becomes negligible at some point during training. While the stability-based bounds for GD/SGD are also time-dependent \citep{hardt2016train, bassily2020stability} (in the convex learning case), the learning rate in these bounds helps mitigate the growth of $T$. However, the learning rate does not appear in our trajectory-based information-theoretic bounds, making the dependency on $T$ even worse.
% } 

% \textcolor{red}{
Note that \citet{wang2021analyzing} uses the strong data processing inequality to reduce this deficiency, but the bound still increases with $T$. To tackle this weakness, we will invoke some asymptotic SDE results on the terminal parameters of the algorithm,  which will give us a crisp way to characterize the expected generalization gap  without decomposing the mutual information.
\section{Generalization Bounds Via Terminal State}
\label{sec:pac-bayes}
% The connection between information-theoretic bounds and PAC-Bays bounds have already been discussed in many previous works \citep{bassily2018learners,hellstrom2020generalization,alquier2021user}. Roughly speaking, the most significant component of a PAC-Bayes bound is the KL divergence between the posterior distribution of a randomized algorithm output and a prior distribution, i.e. $\mathrm{D_{KL}}(Q_{W_T|S}||P_{N})$ for some prior $P_N$. In essence, information-theoretic bounds can be view as having the same spirit. For concreteness, in Lemma \ref{lem:xu's-bound}, 
% $I(W_T;S)=\mathbb{E}_S[\mathrm{D_{KL}}(Q_{W_T|S}||P_{W_T})]$, in which case the marginal $P_{W_T}$ is used as a prior of the algorithm output. Furthermore, by using  Lemma \ref{lem:mi-center-gravity}, we have $I(W_T;S) \leq \inf_{P_N} \mathbb{E}_S[\mathrm{D_{KL}}(Q_{W_T|S}||P_N)]$. Hence, Lemma \ref{lem:xu's-bound} can be regarded as a PAC-Bayes bound with the optimal prior. In addition, the PAC-Bayes framework is usually used to provide a high-probability bound, %(with respect to the randomness of $S$), 
% while information-theoretic analysis is applied to bounding the expected generalization error. In this sense, information-theoretic framework is closer to another concept called MAC-Bayes \citep{grunwald2021pac}.
\begin{figure*}[!ht]
% \vspace{-2mm}
    \centering
    \begin{subfigure}[b]{0.245\textwidth}
\includegraphics[scale=0.28]{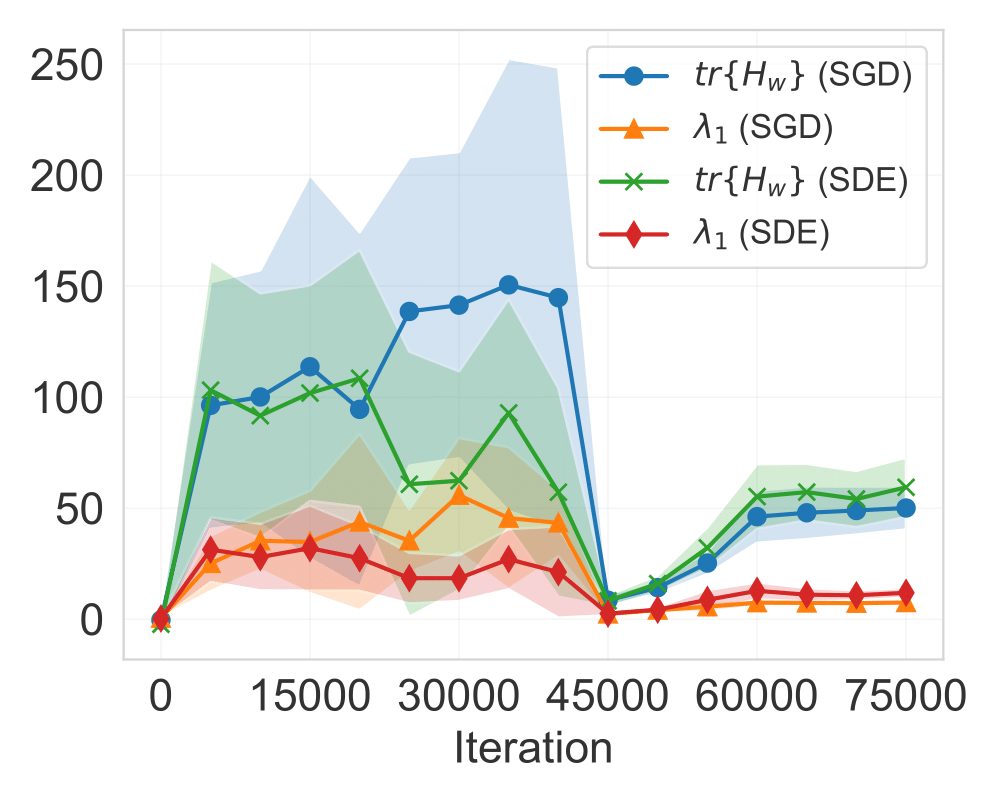}    
\caption{VGG on (small) SVHN}            \label{fig:vgg-svhn-hess}
    \end{subfigure}
\begin{subfigure}[b]{0.245\textwidth}
\includegraphics[scale=0.28]{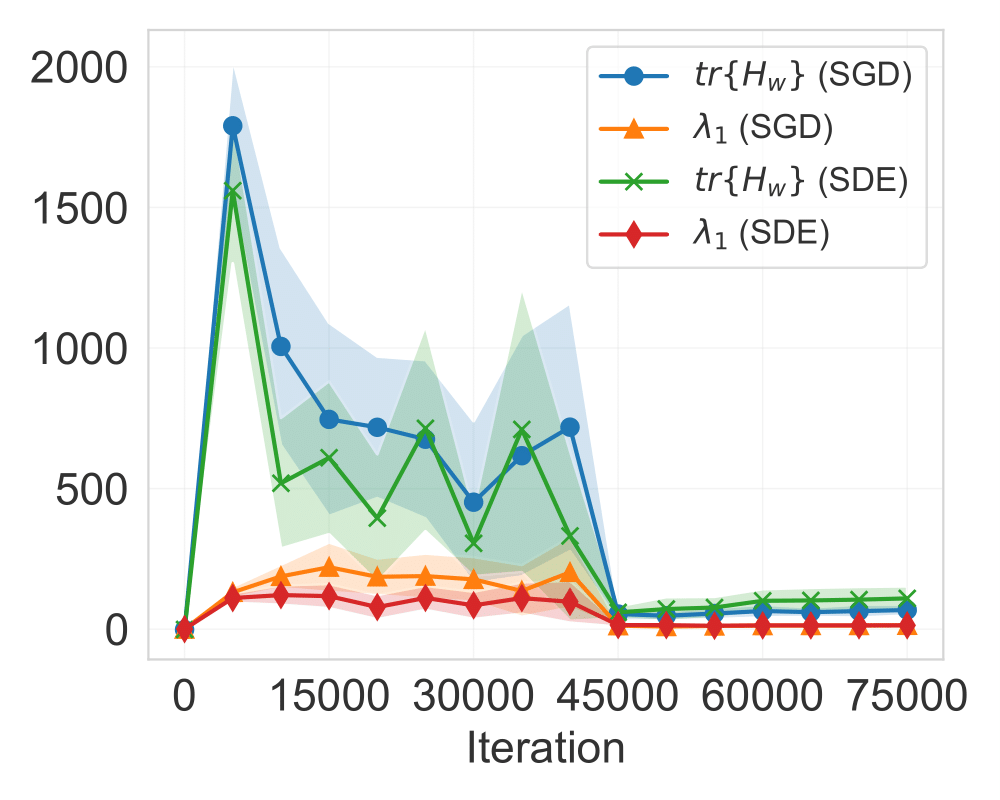}
\caption{VGG on CIFAR10}
    \label{fig:vgg-cifa10-hess}
\end{subfigure}
 \begin{subfigure}[b]{0.245\textwidth}
\includegraphics[scale=0.28]{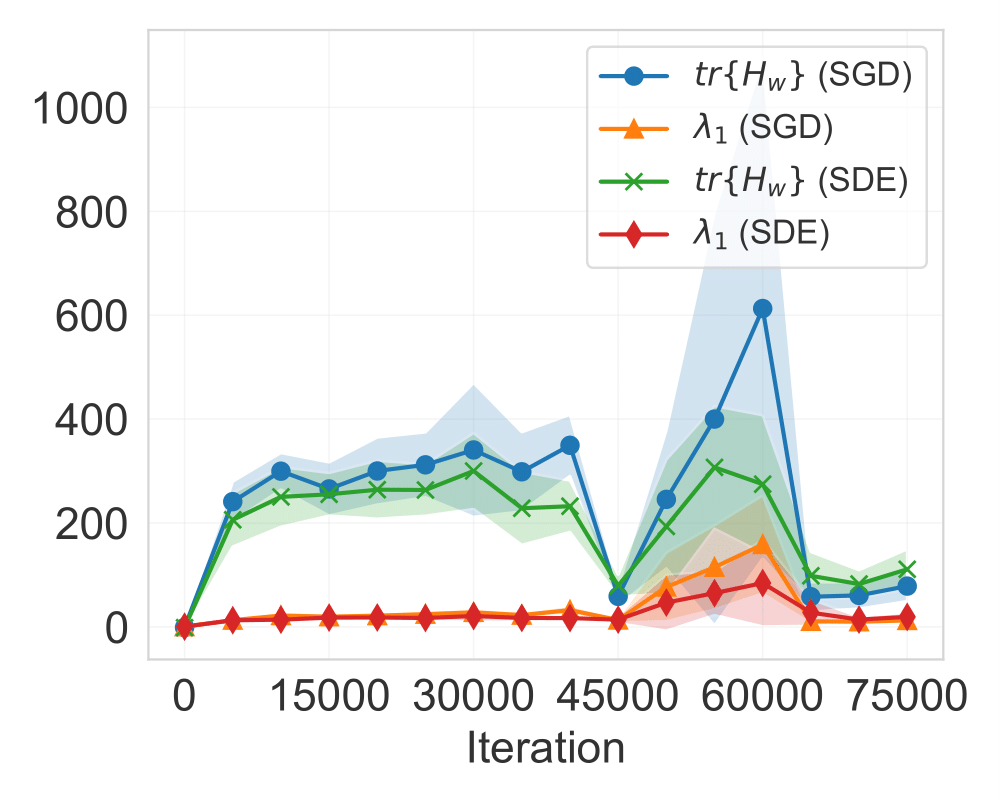}
\caption{ResNet on CIFAR10}
\label{fig:resnet-cifa10-hess}
    \end{subfigure}
\begin{subfigure}[b]{0.245\textwidth}
\includegraphics[scale=0.28]{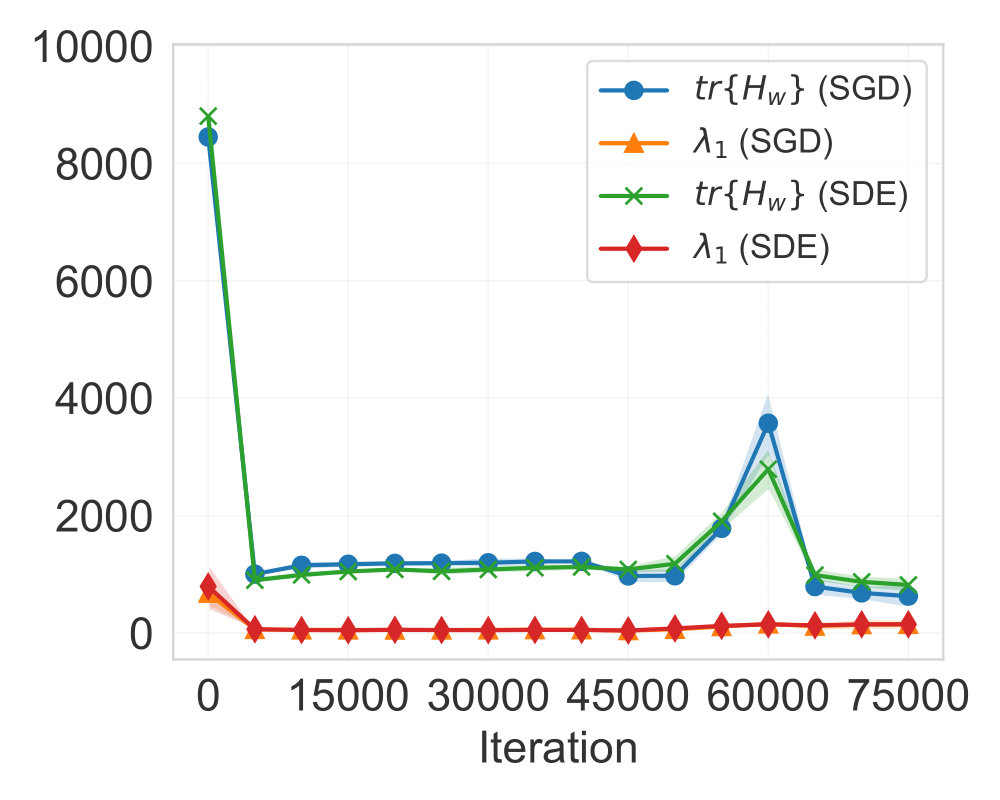}
\caption{ResNet on CIFAR100}
\label{fig:resnet-cifa100-hess}
\end{subfigure}
\caption{Hessian-related quantities of SGD or its discrete SDE approximation.}\label{fig:Hess-Dynamics}
% \vspace{-4mm}
\end{figure*}

In this section, we directly upper-bound the generalization error by the properties of the terminal state instead of using the full training trajectory information. Particularly, we will first use the stationary distribution of weights at the end of training as $Q_{W_T|S}$. 
To overcome the explicit time-dependence present in the bounds discussed in Section~\ref{sec:itb-sde}, one has to introduce additional assumptions, with these assumptions being the inherent cost. For example, an important approximation used  in this section is the quadratic approximation of the loss. Specifically, let $w_s^*$ be a local minimum for a given training sample $S=s$, when $w$ is close to $w_s^*$, we can use a second-order Taylor expansion to approximate the value of the loss at $w$, 
\begin{eqnarray}
  L_s(w) = L_s(w_s^*) + \frac{1}{2}(w-w_s^*)^\mathrm{\bf T} H_{w_s^*}(w-w_s^*).
  \label{eq:second-order-taylor}
\end{eqnarray}
where $H_{w_s^*}$ is the Hessian matrix of $s$ at $w^*$. Note that in this case, when $w_t\to w_s^*$, we have $G_t=\nabla L_s(w_t)=H_{w_s^*}\pr{w_t-w_s^*}$. Our remaining analysis assumes the validity of Eq.~(\ref{eq:second-order-taylor}).
% two typical choices of the covariance of posterior $Q_{W_T|S}$, the first one is the stationary distribution of weights at the end of training, in which case the model fluctuates around a local minimum, and the second one is the inverse Fisher information matrix (FIM).

% \subsection{Steady-State Covariance as Posterior Covariance}

% Given that SGD can be well approximated by SDE (e.g., Eq.~\ref{eq:ito-sde}, 
In view of Eq.~(\ref{eq:second-order-taylor}), a classical result by \citet{mandt2017stochastic} shows that the posterior distribution $Q_{W|s}$ around $w_s^*$ is a Gaussian distribution $\mathcal{N}(w_s^*, \Lambda_{w_s^*})$, where $\Lambda_{w^*}\triangleq\ex{}{(W-w^*)(W-w^*)^\mathrm{\bf T}}$ is the covariance of the stationary distribution (see Appendix~\ref{sec:Gaussian-local} for an elaboration).  Furthermore, in the context of nonconvex learning, such as deep learning, where multiple local minima exist, we have multiple $w_s^*$ for a give $S=s$. Therefore, it is necessary to treat the local minimum itself as a random variable for a fixed $s$, denoted as $W_s^*\sim Q_{W_s^*|s}$. In this case, we have $Q_{W|s,w_s^*}=\mathcal{N}(w_s^*, \Lambda_{w_s^*})$ and the posterior distribution $Q_{W|s}=\cex{W_s^*}{s}{\mathcal{N}(W_s^*, \Lambda_{W_s^*})}$ should be a mixture of Gaussian distributions.

In addition, recall that $I(W_T;S)=\inf_{P_{W_{T}}}\mathbb{E}_{S}{\mathrm{D_{KL}}(Q_{W_T|S}||P_{W_{T}})}$ where $P_{W_{T}}=Q_{W_{T}}$ achieves the infimum. Here, the oracle prior $Q_{W_{T}}=\ex{S,W_S^*}{\mathcal{N}(W_S^*, \Lambda_{W_S^*})}$ is also a mixture of Gaussian distributions. From a technical standpoint, given that the KL divergence between two mixtures of Gaussian distributions does not have a closed-form expression, we turn to analyze its upper bound, namely
% we may not be able to direct compute or lower bound $I(W_T;S)$ without further knowledge even if we know $Q_{W|s,w_s^*}$ is a Gaussian distribution. Thus, 
% we turn to analyze its upper bound, namely 
$\inf_{P_{W_{T}}}\mathbb{E}_{S,W_S^*}{\mathrm{D_{KL}}(Q_{W_T|S,W_S^*}||P_{W_{T}})}$.
% which is an upper bound of $I(W_T;S)$. 
When each $s$ has only one local minimum, $I(W;S)$ reaches this upper bound. \looseness=-1

We are ready to give the terminal-state-based bounds.

\begin{thm}
\label{thm:opt-state-inde-bound}
    % Let $P_{W_{T}}=\mathcal{N}\pr{w^*_{\mu}, \Lambda^{\mu}_{w^*_{\mu}}}$, where $w^*_{\mu}=\ex{}{W_S^*}$ is the average ERM solution, and $\Lambda_{w^*_{\mu}}=\ex{W_T}{\pr{W_T-w^*_{\mu}}\pr{W_T-w^*_{\mu}}^T}$ is the population stationary covariance. 
    % Under the conditions in Lemma~\ref{lem:xu's-bound}. 
    Let $w^*_{\mu}=\ex{}{W_S^*}$ be the expected ERM solution and let $\Lambda_{w^*_{\mu}}=\ex{}{\pr{W_T-w^*_{\mu}}\pr{W_T-w^*_{\mu}}^\mathrm{\bf T}}$ be its corresponding stationary covariance, then
    \begin{align*}
        \mathcal{E}_{\mu}(\mathcal{A})\leq\frac{R}{\sqrt{2n}}\sqrt{\ex{S,W_S^*}{\tr{\log\pr{\Lambda^{-1}_{W^*_S}\Lambda_{w^*_\mu}}}}}. 
        % \label{ineq:state-bound-1}
        % \\
        % &\left.\sqrt{d\log\pr{\frac{\ex{}{\mathrm{d}^2_{\mathrm{M}}\pr{W_S^*,w^*_\mu;\ex{}{\Lambda_{W^*_S}}}}}{d}+1}+\ex{}{\tr{\log\pr{\Lambda_{W^*_S}^{-1}\ex{}{\Lambda_{W^*_S}}}}}}\right\},\label{ineq:state-bound-2}
    \end{align*}
    % where $\mathrm{d}_{\mathrm{M}}\pr{x,y;\Sigma}\triangleq \sqrt{(x-y)^T\Sigma^{-1}(x-y)}$ is the Mahalanobis distance, and all the expectation above are taken over $\pr{S,W_S^*}\sim Q_{S,W_S^*}$.
\end{thm}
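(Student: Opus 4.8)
The plan is to bound the mutual information $I(W_T;S)$ and feed it into Lemma~\ref{lem:xu's-bound}. Starting from the variational bound recalled just above, $I(W_T;S)\le\inf_{P_{W_T}}\ex{S,W^*_S}{\kl{Q_{W_T|S,W^*_S}}{P_{W_T}}}$, the natural move is to evaluate the right-hand side at the Gaussian prior $P_{W_T}=\mathcal{N}\pr{w^*_\mu,\Lambda_{w^*_\mu}}$, which is legitimate since $w^*_\mu$ and $\Lambda_{w^*_\mu}$ are deterministic (they are full expectations, hence independent of $S$ and $W^*_S$). Under the quadratic approximation of Eq.~(\ref{eq:second-order-taylor}) and the result of \citet{mandt2017stochastic}, the conditional law is $Q_{W_T|s,w^*_s}=\mathcal{N}\pr{w^*_s,\Lambda_{w^*_s}}$, so each summand is the closed-form KL between two non-degenerate Gaussians:
\begin{align*}
2\,\kl{\mathcal{N}(w^*_s,\Lambda_{w^*_s})}{\mathcal{N}(w^*_\mu,\Lambda_{w^*_\mu})}
&=\tr{\Lambda^{-1}_{w^*_\mu}\Lambda_{w^*_s}}-d+\tr{\log\pr{\Lambda^{-1}_{w^*_s}\Lambda_{w^*_\mu}}}\\
&\quad+\pr{w^*_s-w^*_\mu}^{\mathrm{\bf T}}\Lambda^{-1}_{w^*_\mu}\pr{w^*_s-w^*_\mu}.
\end{align*}

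The core of the argument is then to take $\ex{S,W^*_S}{\cdot}$ of this identity and observe that the first, second, and fourth terms cancel. Writing the quadratic form as $\tr{\Lambda^{-1}_{w^*_\mu}\pr{w^*_s-w^*_\mu}\pr{w^*_s-w^*_\mu}^{\mathrm{\bf T}}}$ and using linearity of the trace, the expectation of the first and fourth terms together equals $\tr{\Lambda^{-1}_{w^*_\mu}\ex{S,W^*_S}{\Lambda_{W^*_S}+\pr{W^*_S-w^*_\mu}\pr{W^*_S-w^*_\mu}^{\mathrm{\bf T}}}}$. Now I would apply the law of total covariance to $W_T$: conditionally on $(S,W^*_S)$ it has mean $W^*_S$ and covariance $\Lambda_{W^*_S}$, so $\ex{}{W_T}=\ex{}{W^*_S}=w^*_\mu$ and, by the very definition of $\Lambda_{w^*_\mu}$, $\Lambda_{w^*_\mu}=\ex{S,W^*_S}{\Lambda_{W^*_S}+\pr{W^*_S-w^*_\mu}\pr{W^*_S-w^*_\mu}^{\mathrm{\bf T}}}$. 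Hence the above expectation is exactly $\tr{\Lambda^{-1}_{w^*_\mu}\Lambda_{w^*_\mu}}=d$, which cancels the $-d$, and all that survives is $\ex{S,W^*_S}{\tr{\log\pr{\Lambda^{-1}_{W^*_S}\Lambda_{w^*_\mu}}}}$. Therefore $I(W_T;S)\le\tfrac12\ex{S,W^*_S}{\tr{\log\pr{\Lambda^{-1}_{W^*_S}\Lambda_{w^*_\mu}}}}$, and substituting this into Lemma~\ref{lem:xu's-bound} (keeping track of the numerical constant) yields the claimed inequality.

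I expect the cancellation step to be the main obstacle, at least conceptually: it works precisely because $\Lambda_{w^*_\mu}$ is the \emph{total} stationary covariance of $W_T$ about $w^*_\mu$ rather than the averaged conditional covariance $\ex{}{\Lambda_{W^*_S}}$ --- only then does the between-minima spread $\ex{}{\pr{W^*_S-w^*_\mu}\pr{W^*_S-w^*_\mu}^{\mathrm{\bf T}}}$ get absorbed so that the trace term collapses to $d$ exactly. Everything else is routine bookkeeping: one needs $\Lambda_{W^*_S}$ and $\Lambda_{w^*_\mu}$ positive-definite so that the inverses, matrix logarithms, and Gaussian KL divergences are all well-defined and finite (this is where the positive-definiteness hypotheses enter), one should confirm that the chosen Gaussian prior is genuinely data-independent, and one invokes the variational bound $I(W_T;S)\le\inf_{P_{W_T}}\ex{S,W^*_S}{\kl{Q_{W_T|S,W^*_S}}{P_{W_T}}}$ --- which follows from the data-processing inequality applied to $S$ as a function of $(S,W^*_S)$ together with the Gibbs variational principle --- in the direction stated above.
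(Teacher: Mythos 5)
Your argument matches the paper's own proof essentially step for step: the paper likewise upper-bounds $I(W_T;S)$ by $\ex{S,W_S^*}{\mathrm{D_{KL}}(Q_{W_T|S,W_S^*}||P_{W_T})}$ with the data-independent Gaussian prior $P_{W_T}=\mathcal{N}(w^*_\mu,\Lambda_{w^*_\mu})$, expands the closed-form Gaussian KL, and cancels the trace and Mahalanobis terms using exactly your total-covariance identity $\Lambda_{w^*_\mu}=\ex{}{\Lambda_{W^*_S}}+\ex{}{\pr{W_S^*-w^*_\mu}\pr{W_S^*-w^*_\mu}^{\bf T}}$ (written there as $\ex{}{\Lambda_{W^*_S}}=\Lambda_{w^*_\mu}-\widetilde{\Sigma}_\mu$), leaving only the log-determinant term. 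The sole caveat is the constant: substituting the KL bound into Lemma~\ref{lem:xu's-bound} literally yields a prefactor $\frac{R}{\sqrt{n}}$ rather than $\frac{R}{\sqrt{2n}}$, a factor-$\sqrt{2}$ discrepancy that is inherited from the paper's own intermediate result (Theorem~\ref{thm:state-bound-dis-prior}) rather than introduced by your derivation.
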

This result bears resemblance to Theorem~\ref{thm:anisotropic-prior-bound} since both involve the alignment between a population covariance matrix and a sample (or batch) covariance matrix.

Note that $\Lambda_{w^*_{\mu}}\!=\!\ex{}{\pr{W^*_S-w^*_{\mu}}\pr{W^*_S-w^*_{\mu}}^\mathrm{\bf T}}\!+\!\ex{}{\Lambda_{W^*_S}}$. By Jensen's inequality, we can move the expectation over $W_s^*$  inside the logarithmic function. Additionally, if $\ex{W^*_s}{\Lambda_{W^*_s}^{-1}\ex{}{\Lambda_{W^*_s}}}$ is close to the identity matrix---especially evident in scenarios where each $s$ has only one minimum, as in convex learning---we obtain the upper bound 
$
\mathcal{O}\pr{\sqrt{{\ex{}{\mathrm{d}^2_{\mathrm{M}}\pr{W_S^*,w^*_\mu;{\Lambda_{W^*_S}}}}}/{n}}},
$
where $\mathrm{d}_{\mathrm{M}}\pr{x,y;\Sigma}\triangleq \sqrt{(x-y)^{\bf T}\Sigma^{-1}(x-y)}$ is the Mahalanobis distance. Intuitively, this quantity measures the sensitivity of a local minimum to the combined randomness introduced by both the algorithm and the training sample, relative to its local geometry.

In practice, one can estimate $\Lambda_{w^*_{\mu}}$ and $\Lambda_{w^*_{s}}$ by repeatedly conducting training processes and storing numerous checkpoints at the end of each training run. This is still much easier than estimating $I(W;S)$ directly. As an alternative strategy, one may leverage the analytical expression available for $\Lambda_{w_s^*}$.
% Unlike Theorem~\ref{thm:anisotropic-prior-bound}, where we can utilize $\cex{S}{w_{t-1}}{C_t}=\frac{n-b}{bn}\Sigma_t^\mu$ to further develop a dimension-dependent lower bound in Theorem~\ref{thm:lower-bound-traj}. In Theorem~\ref{thm:opt-state-inde-bound}, it's not clear the relationship between $\ex{S,W_S^*}{\Lambda_{W_S^*}}$ and $\Lambda_{w_\mu^*}$. Thus, we need more analytic form for $\Lambda_{W_S^*}$.
\citet{mandt2017stochastic} provides such analysis and give a equation to solve for $\Lambda_{w_s^*}$. However, the result in \citet{mandt2017stochastic} relies on the unrealistic small learning rate, and the GNC in their analysis is regarded as a state-independent covariance matrix. To overcome these limitations, we give the following result under a quadratic approximation of the loss, which is refined from \citet[Theorem~1]{liu2021noise} by using the state-dependent GNC.

\begin{lem}
\label{lem:stationary-real}
% Let $H_{w^*}$ be the Hessian matrix of $s$ at $w^*$. If $ L_s(w) \approx L_s(w^*) + \frac{1}{2}(w-w^*)^T H_{w^*}(w-w^*)$ holds when $w$ is close to any local minimal $w^*$,
% If Eq. (\ref{eq:second-order-taylor}) holds, 
% then i
In the long term limit, we have
% the covariance $\Lambda_{w^*}$ satisfies
$
\Lambda_{w^*} H_{w^*} + H_{w^*} \Lambda_{w^*}-\eta H_{w^*} \Lambda_{w^*}H_{w^*} = \eta C_{T}.
$
Moreover, consider the following conditions: 
% \begin{itemize}
\begin{enumerate}[topsep=-0.1cm, parsep=-0cm, align=parleft, labelsep=0.6cm, label=(\roman*)]
    \item $H_{w^*}\Lambda_{w^*}=\Lambda_{w^*}H_{w^*}$;
    \item $H_{w^*}^{-1}\Sigma_T= \mathrm{I}_d$;
    \item $\frac{2}{\eta}\gg \lambda_1$, where $\lambda_1$ is the top-$1$ eigenvalue of $H_{w^*}$.
\end{enumerate}
% \end{itemize}
% (i) $H_{w^*}\Lambda_{w^*}=\Lambda_{w^*}H_{w^*}$; 
% % and $\Lambda_{w^*}$ commute; 
% (ii) $H_{w^*}^{-1}\Sigma_T= \mathrm{I}_d$; (iii) $\frac{2}{\eta}\gg \lambda_1$ where $\lambda_1$ is the top-$1$ eigenvalue of $H_{w^*}$. 
Then, given (i), we have $\Lambda_{w^*}\!=\! \br{H_{w^*}\pr{\frac{2}{\eta}\mathrm{I}_d\!-\! H_{w^*}}}^{-1}\!C_{T}$; given (i-ii), we have $\Lambda_{w^*}= (\frac{2}{\eta}\mathrm{I}_d-H_{w^*})^{-1}$; given (i-iii), we have $\Lambda_{w^*}  = \frac{\eta}{2b} \mathrm{I}_d$.
% If Eq.~\ref{eq:approx-hessian-gradient} holds, then
% $
% \Lambda_{w^*}=\frac{\eta}{b}(2\mathrm{I}_d-\eta H_{w^*})^{-1}.
% $
\end{lem}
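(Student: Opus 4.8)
The plan is to track the second moment of the iterates under the Euler--Maruyama recursion Eq.~(\ref{eq:sgd-update-gaussian}) together with the quadratic model Eq.~(\ref{eq:second-order-taylor}), and then pass to the stationary regime. Writing $U_t \triangleq W_t - w^*$ and using $G_t = H_{w^*}(W_{t-1}-w^*)$ near the minimum, Eq.~(\ref{eq:sgd-update-gaussian}) becomes the linear recursion $U_t = (\mathrm{I}_d - \eta H_{w^*})U_{t-1} + \eta C_t^{1/2}N_t$ with state-dependent noise. Since $N_t$ is standard Gaussian and independent of $W_{t-1}$, so that $\ex{}{C_t^{1/2}N_t\mid W_{t-1}}=0$ and $\ex{}{C_t^{1/2}N_tN_t^{\mathrm{\bf T}}C_t^{1/2}\mid W_{t-1}}=C_t$, taking covariances kills the cross terms and, using symmetry of $H_{w^*}$, gives $\Lambda_t = (\mathrm{I}_d-\eta H_{w^*})\Lambda_{t-1}(\mathrm{I}_d-\eta H_{w^*}) + \eta^2\ex{}{C_t}$.

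First I would take the long-term limit: as $W_{t-1}\to w^*$ the state-dependent GNC satisfies $\ex{}{C_t}\to C_T \triangleq C_{w^*}$, and assuming the covariance converges, $\Lambda_t,\Lambda_{t-1}\to\Lambda_{w^*}$. Substituting and expanding $(\mathrm{I}_d-\eta H_{w^*})\Lambda_{w^*}(\mathrm{I}_d-\eta H_{w^*}) = \Lambda_{w^*} - \eta(H_{w^*}\Lambda_{w^*}+\Lambda_{w^*}H_{w^*}) + \eta^2 H_{w^*}\Lambda_{w^*}H_{w^*}$, the $\Lambda_{w^*}$ terms cancel and dividing by $\eta$ yields exactly $\Lambda_{w^*}H_{w^*}+H_{w^*}\Lambda_{w^*}-\eta H_{w^*}\Lambda_{w^*}H_{w^*}=\eta C_T$; the $-\eta H_{w^*}\Lambda_{w^*}H_{w^*}$ term is the $\mathcal{O}(\eta)$ discretization correction to the continuous Ornstein--Uhlenbeck Lyapunov equation $H_{w^*}\Lambda_{w^*}+\Lambda_{w^*}H_{w^*}=\eta C_T$.

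Next I would specialize. Under (i), $H_{w^*}$ and $\Lambda_{w^*}$ commute, so $H_{w^*}\Lambda_{w^*}+\Lambda_{w^*}H_{w^*}=2H_{w^*}\Lambda_{w^*}$ and $\eta H_{w^*}\Lambda_{w^*}H_{w^*}=\eta H_{w^*}^2\Lambda_{w^*}$; factoring gives $H_{w^*}\pr{\tfrac{2}{\eta}\mathrm{I}_d-H_{w^*}}\Lambda_{w^*}=C_T$, hence $\Lambda_{w^*}=\br{H_{w^*}\pr{\tfrac{2}{\eta}\mathrm{I}_d-H_{w^*}}}^{-1}C_T$ (equivalently, simultaneously diagonalize and read off the equation eigendirection by eigendirection). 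Adding (ii), which forces $\Sigma_T=H_{w^*}$ and hence—via $C_T=\tfrac{1}{b}\Sigma_T$ for $n\gg b$—makes $C_T$ a scalar multiple of $H_{w^*}$, the factor $H_{w^*}^{-1}$ cancels and leaves $\Lambda_{w^*}$ proportional to $\pr{\tfrac{2}{\eta}\mathrm{I}_d-H_{w^*}}^{-1}$. Finally, under (iii) every eigenvalue of $H_{w^*}$ is $\ll\tfrac{2}{\eta}$, so $\tfrac{2}{\eta}\mathrm{I}_d-H_{w^*}\approx\tfrac{2}{\eta}\mathrm{I}_d$ and $\Lambda_{w^*}\approx\tfrac{\eta}{2b}\mathrm{I}_d$.

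The main obstacle is the ``long-term limit'' step: one must justify that the iterates—with state-dependent noise and an only approximately quadratic loss—actually enter a stationary regime in which $\Lambda_t$ converges and $\ex{}{C_t}$ stabilizes to $C_T$, i.e. that the limit may be interchanged with the covariance recursion. This needs a spectral condition such as $0<\eta\lambda_i<2$ for every eigenvalue $\lambda_i$ of $H_{w^*}$ (equivalently $\|\mathrm{I}_d-\eta H_{w^*}\|<1$), which is exactly what makes $\tfrac{2}{\eta}\mathrm{I}_d-H_{w^*}$ and the discrete Lyapunov operator invertible; I would carry it as a standing assumption inherited from the quadratic approximation. Everything else—the algebra in the three special cases—is routine once commutativity in (i) collapses the matrix equation to a scalar one along each eigendirection.
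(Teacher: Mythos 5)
Your proposal is correct and follows essentially the same route as the paper's proof: linearize the update around $w^*$ via $G_t=H_{w^*}(W_{t-1}-w^*)$, take second moments so the zero-mean noise kills the cross terms, pass to the stationary limit to obtain $\Lambda_{w^*}H_{w^*}+H_{w^*}\Lambda_{w^*}-\eta H_{w^*}\Lambda_{w^*}H_{w^*}=\eta C_T$, and then specialize under (i)--(iii) exactly as the paper does (your explicit spectral condition $0<\eta\lambda_i<2$ and careful tracking of the $1/b$ factor from $C_T=\Sigma_T/b$ are welcome refinements, not a different argument).
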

% \begin{rem}
%     Note that this result does not rely on Eq.~(\ref{eq:sgd-update-gaussian}), and it holds for any type of gradient noise $V_t$ as long as its covariance is $C_t$. 
%     % In addition, the classical result of  a continues analysis of SGD in \cite[Eq.~(13)]{mandt2017stochastic} (see Lemma~\ref{lem:posterior-covariance}) also gives us $\Lambda_{w^*}  = \frac{\eta}{2b} \mathrm{I}_d$. We recover this result under a mild condition instead of assuming $\eta\ll 1$.
% \end{rem}

\begin{figure*}[!ht]
% \vspace{-3mm}
    \centering
    \begin{subfigure}[b]{0.245\textwidth}
\includegraphics[scale=0.28]{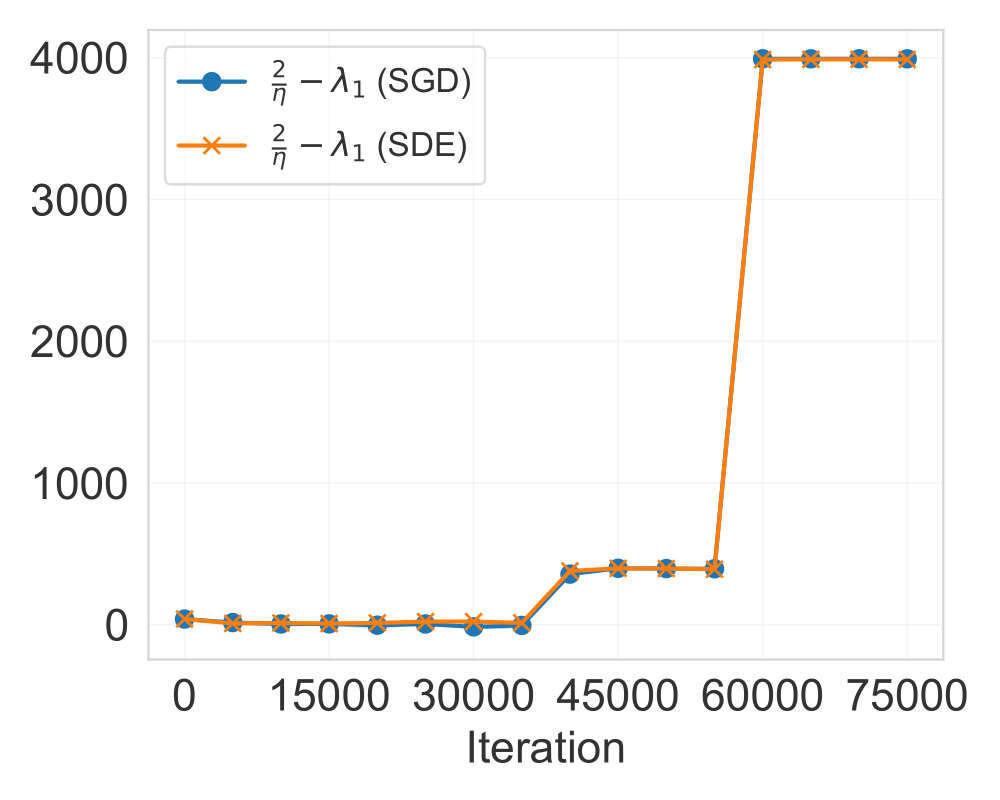}    
\caption{VGG on (small) SVHN}            \label{fig:vgg-svhn-stable}
    \end{subfigure}
\begin{subfigure}[b]{0.245\textwidth}
\includegraphics[scale=0.28]{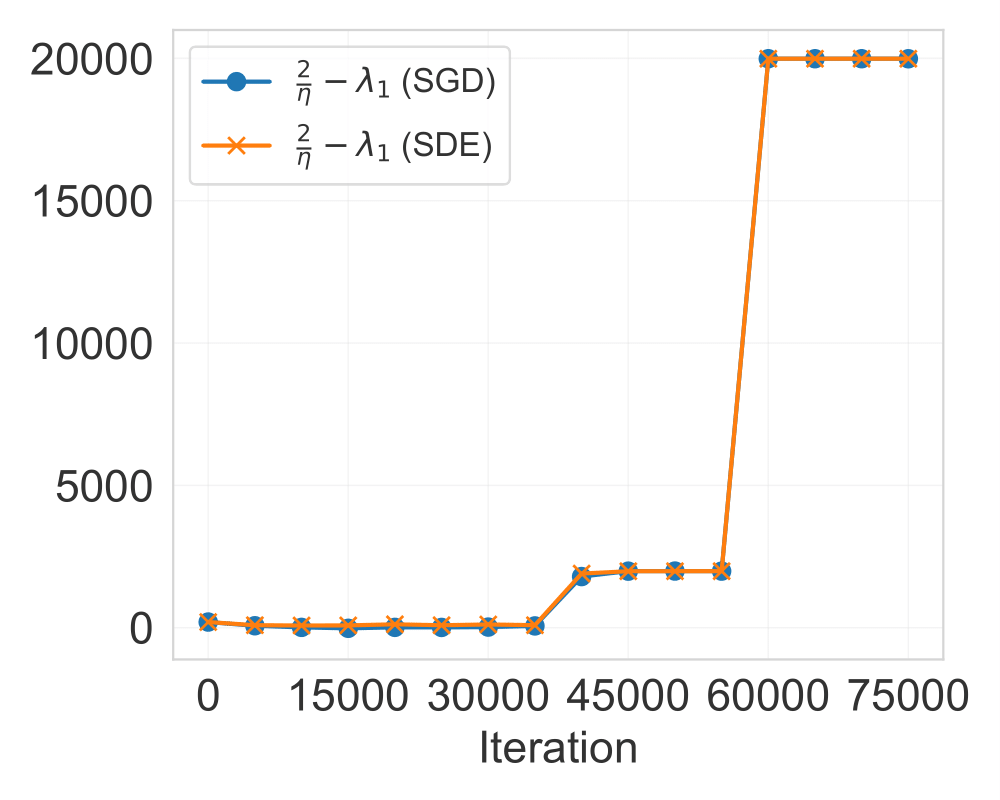}
\caption{VGG on CIFAR10}
    \label{fig:vgg-cifa10-stable}
\end{subfigure}
 \begin{subfigure}[b]{0.245\textwidth}
\includegraphics[scale=0.28]{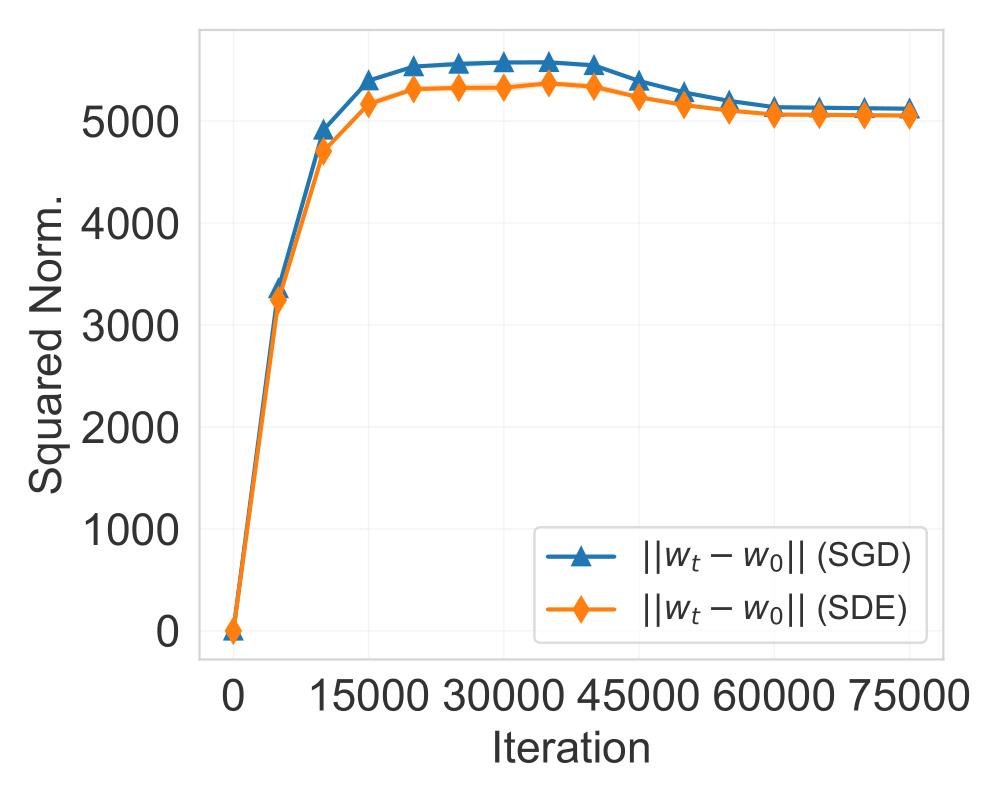}
\caption{VGG on (small) SVHN}
\label{fig:vgg-svhn-weight}
    \end{subfigure}
\begin{subfigure}[b]{0.245\textwidth}
\includegraphics[scale=0.28]{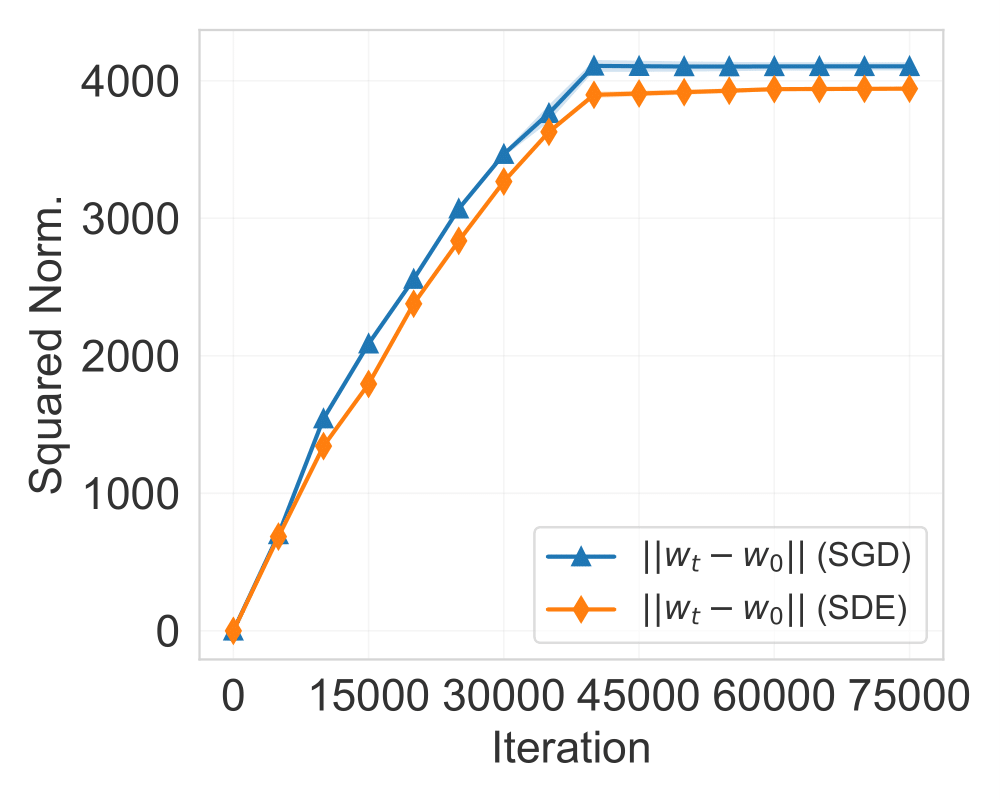}
\caption{VGG on CIFAR10}
\label{fig:vgg-cifar10-weight}
\end{subfigure}
\begin{subfigure}[b]{0.245\textwidth}
\includegraphics[scale=0.28]{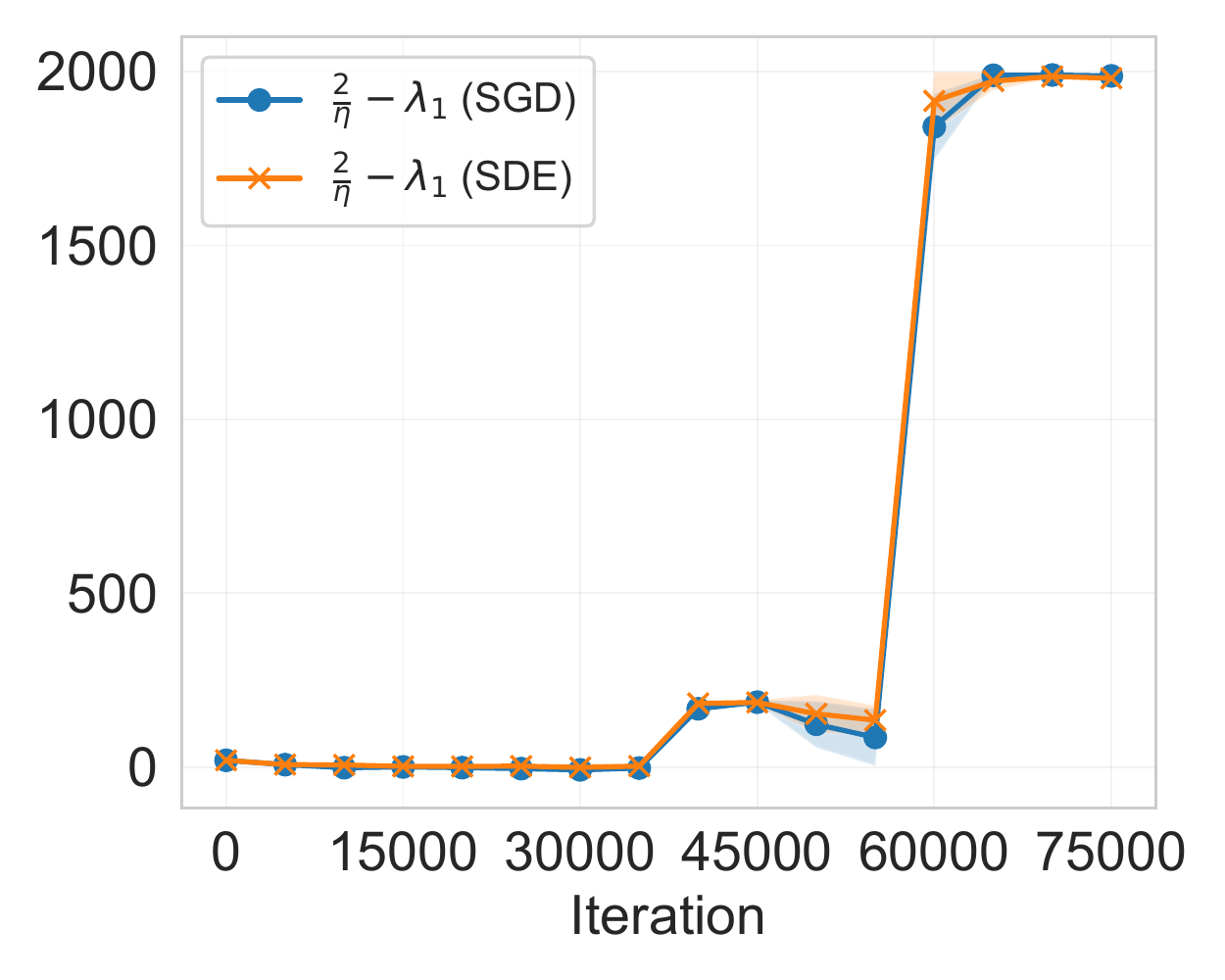}    
\caption{ResNet on CIFAR10}            \label{fig:resnet-cifar10-stable}
    \end{subfigure}
\begin{subfigure}[b]{0.245\textwidth}
\includegraphics[scale=0.28]{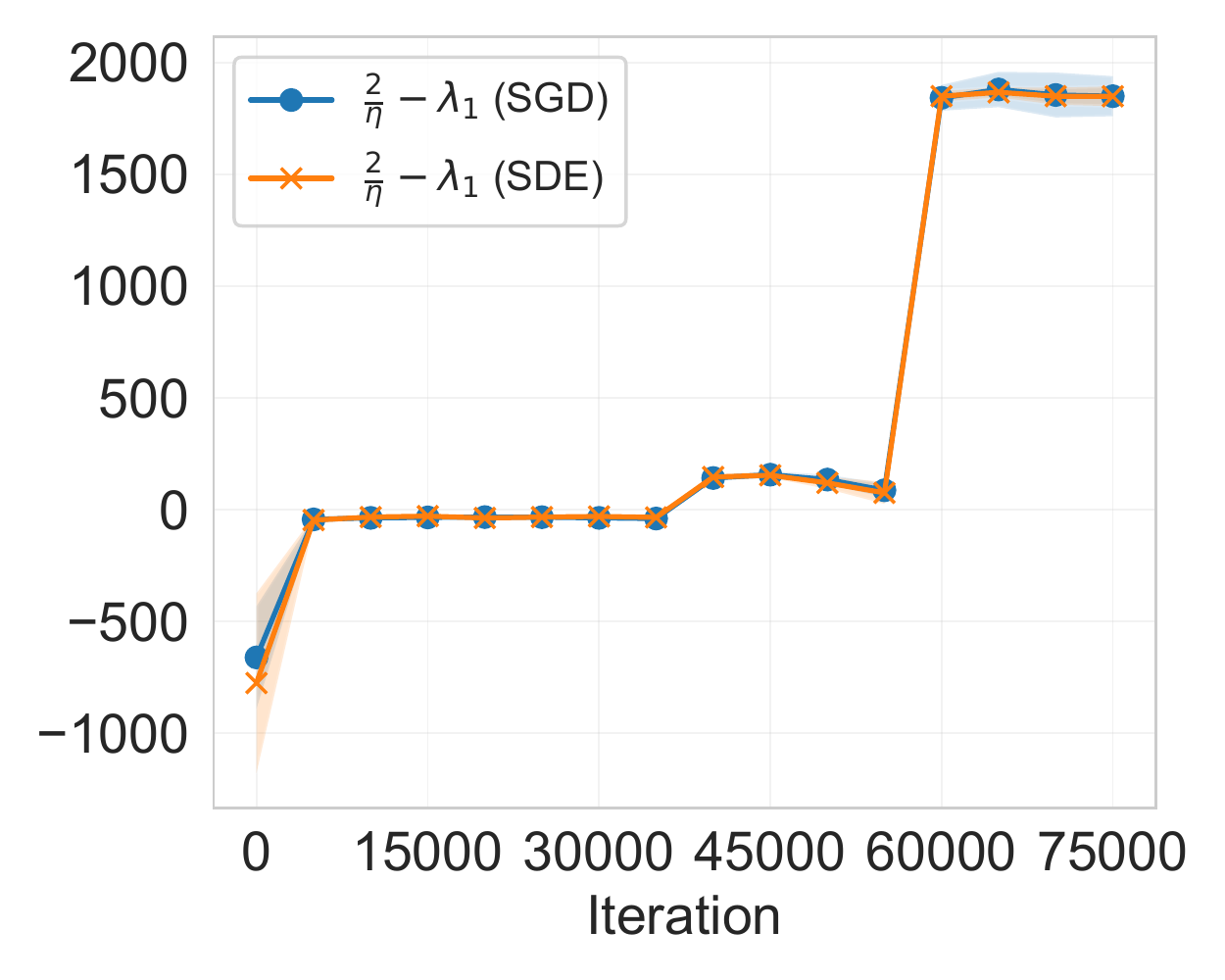}
\caption{ResNet on CIFAR100}
    \label{fig:resnet-cifa100-stable}
\end{subfigure}
 \begin{subfigure}[b]{0.245\textwidth}
\includegraphics[scale=0.28]{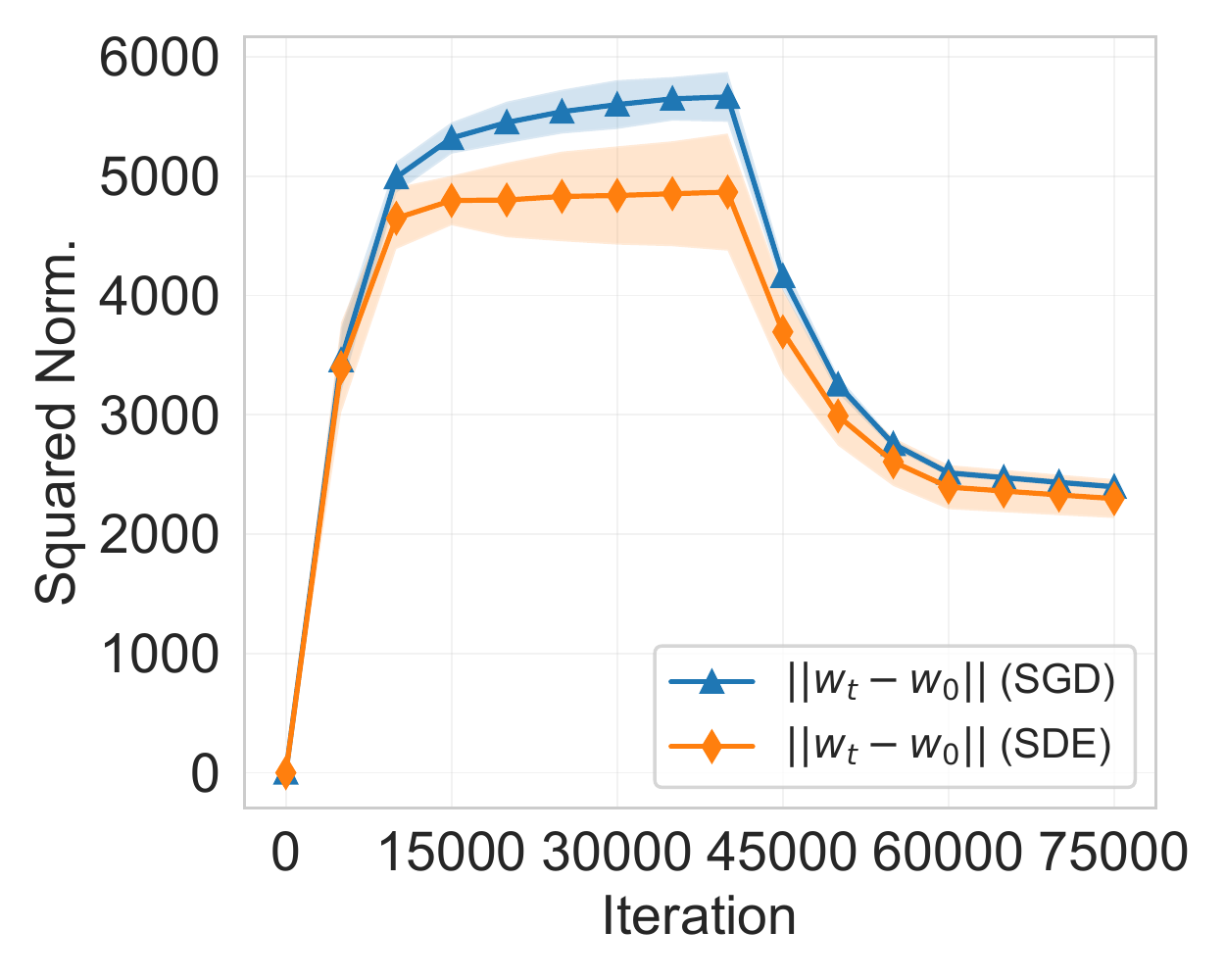}
\caption{ResNet on CIFAR10}
\label{fig:resnet-cifar10-weight}
    \end{subfigure}
\begin{subfigure}[b]{0.245\textwidth}
\includegraphics[scale=0.28]{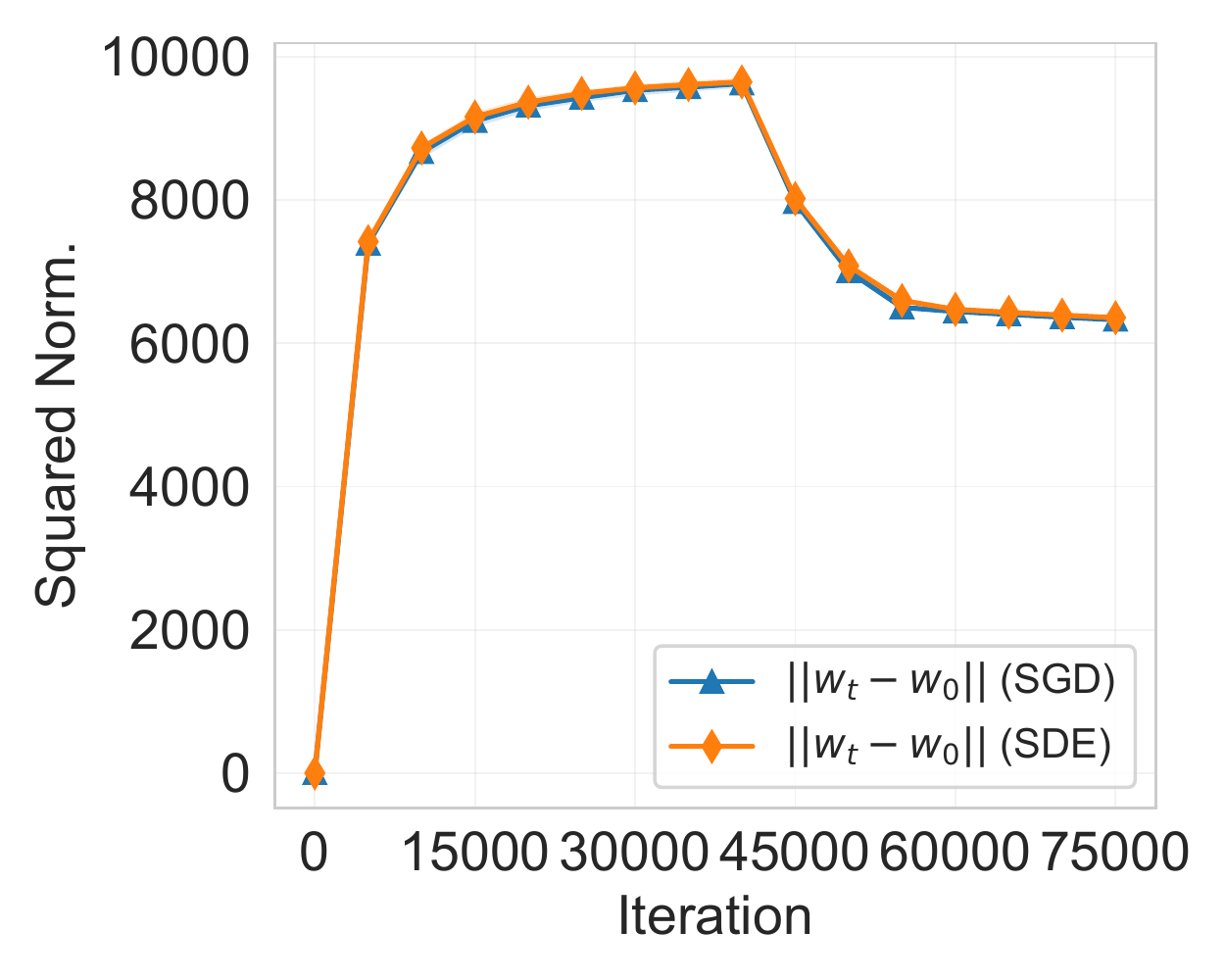}
\caption{ResNet on CIFAR100}
\label{fig:resnet-cifar100-weight}
\end{subfigure}
\caption{(a-b,e-f) The dynamics of $\eta/2-\lambda_1$. Note that learning rate decays by $0.1$ at the $40,000^{\rm th}$ and the $60,000^{\rm th}$ iteration. (c-d,g-h) The distance of current model parameters from its initialization.}\label{fig:Sta-Dynamics}
% \vspace{-4mm}
\end{figure*}
% \begin{figure*}[!ht]
%     \centering
%     \begin{subfigure}[b]{0.245\textwidth}
% \includegraphics[scale=0.28]{figs/stable-plot-cifar10-resnetwobn.pdf}    
% \caption{ResNet on CIFAR10}            \label{fig:resnet-cifar10-stable}
%     \end{subfigure}
% \begin{subfigure}[b]{0.245\textwidth}
% \includegraphics[scale=0.28]{figs/stable-plot-cifa100-resnet.pdf}
% \caption{ResNet on CIFAR100}
%     \label{fig:resnet-cifa100-stable}
% \end{subfigure}
%  \begin{subfigure}[b]{0.245\textwidth}
% \includegraphics[scale=0.28]{figs/weight-plot-cifar10-resnetwobn.pdf}
% \caption{ResNet on CIFAR10}
% \label{fig:resnet-cifar10-weight}
%     \end{subfigure}
% \begin{subfigure}[b]{0.245\textwidth}
% \includegraphics[scale=0.28]{figs/weight-plot-cifa100-resnet.pdf}
% \caption{ResNet on CIFAR100}
% \label{fig:resnet-cifar100-weight}
% \end{subfigure}
% \caption{(a-b) The dynamics of $\eta/2-\lambda_1$. Note that learning rate decays by $0.1$ at the $40,000^{\rm th}$ and the $60,000^{\rm th}$ iteration. (c-d) The distance of current model parameters from its initialization.}\label{fig:Sta-Dynamics-2}
% \end{figure*}

Notably, all the conditions in Lemma~\ref{lem:stationary-real} are only discussed in the context of the terminal state of SGD training. Regarding the condition (ii), as being widely used in the literature \citep{jastrzkebski2017three,zhu2019anisotropic,li2020hessian,xie2020diffusion,xie2021positive,liu2021noise}, Hessian is proportional to the GNC near local minima when the loss is the negative log likelihood, i.e. cross-entropy loss. To see this, when $w_t\to w^*$, we have $
\Sigma_{w^*}=\frac{1}{n}\sum_{i=1}^n\nabla \ell_i\nabla \ell_i^\mathrm{\bf T}-G_tG_t^\mathrm{\bf T}\approx \frac{1}{n}\sum_{i=1}^n\nabla \ell_i\nabla \ell_i^\mathrm{\bf T}=F_{w^*}$, where $F_{w^*}$ is the \textit{Fisher information matrix} (FIM). This approximation is true because gradient noise dominates over gradient mean near local minima. Moreover, FIM is close to the Hessian near local minima with the log-loss \citep[Chapter~8]{pawitan2001all}, namely, $F_{w^*}\approx H_{w^*}$. Let $n\gg b$, we have
$H_{w^*} \approx \Sigma_{w^*} =  b C_{w^*}$. Consequently, when $\Sigma_{T}$ is sufficiently close to $\Sigma_{w^*}$, condition (ii) is satisfied. It's important to note that the debate surrounding $H_{w^*}\approx F_{w^*}$ arises when the loss function deviates from cross-entropy \citep{ziyin2022strength}.
% Similar approximation is widely used in the literature \citep{jastrzkebski2017three,zhu2019anisotropic,li2020hessian,xie2020diffusion,xie2021positive,liu2021noise}. 

For condition (iii), the initial learning rate is typically set at a high value, and this condition may not be satisfied until the learning rate undergoes decay in the later stages of SGD training. This observation is evident in Figure~\ref{fig:vgg-svhn-stable}-\ref{fig:vgg-cifa10-stable}, where the condition becomes easily met at the terminal state following the learning rate decay. Moreover, the interplay between $\frac{2}{\eta}$ and $\lambda_1$ is extensively explored in the context of the {\em edge of stability} \citep{wu2018sgd,cohen2021gradient,arora2022understanding}, which suggests that during the training of GD, $\lambda_1$ approaches $\frac{2}{\eta}$ and hovers just above it in the ``edge of stability'' regime. 
In the context of Theorem~\ref{thm:opt-state-inde-bound},  as indicated by Lemma~\ref{lem:stationary-real}, the diagonal elements of $\Lambda_{w_s^*}$ tend to be close to zero before reaching the ``edge of stability'' regime, the bound presented in Theorem~\ref{thm:opt-state-inde-bound} diverges to infinity.
% In this case, as indicated by Lemma~\ref{lem:stationary-real}, the diagonal elements of $\Lambda_{w_s^*}$ tend to be close to zero before reaching the ``edge of stability''. Consequently, the bound presented in Theorem~\ref{thm:opt-state-inde-bound} diverges to infinity. 
This, as a by-product, provides an alternative explanation to the failure mode of $I(W;S)\to\infty$ in the deterministic algorithm 
% aligns with the fact that $I(W;S)$ may approach infinity for deterministic algorithms, 
(e.g., GD with a fixed initialization).

The following results can be obtained by combining Theorem~\ref{thm:opt-state-inde-bound} and Lemma~\ref{lem:stationary-real}.
% gives a special case that Theorem~\ref{thm:opt-state-inde-bound} is dimension-independent.
\begin{cor}
\label{cor:pacbayes-anisotropic-prior}
Under (i,iii) in Lemma~\ref{lem:stationary-real}, then \[
\mathcal{E}_{\mu}(\mathcal{A})\leq\frac{R}{\sqrt{n\eta}}\sqrt{\ex{}{\tr{\log\pr{\br{H_{w^*}C^{-1}_{T}}\Lambda_{w^*_\mu}}}}}.
\]
% where $\tilde{w}=w_\mu^* =\ex{}{W_S^*}$ and the optimal $\sigma^*=\sqrt{\ex{}{||W_S^*-\tilde{w}||^2/d+\frac{\eta }{2b}}}$.
\end{cor}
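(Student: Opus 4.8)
This statement is a corollary: the plan is simply to substitute the closed form for the stationary covariance supplied by Lemma~\ref{lem:stationary-real} into the abstract terminal-state bound of Theorem~\ref{thm:opt-state-inde-bound}, and then tidy up the scalar factors and the ordering of matrix products inside the trace-log. Concretely, Theorem~\ref{thm:opt-state-inde-bound} already gives $\mathcal{E}_{\mu}(\mathcal{A})\le \tfrac{R}{\sqrt{2n}}\sqrt{\ex{S,W_S^*}{\tr{\log\pr{\Lambda^{-1}_{W^*_S}\Lambda_{w^*_\mu}}}}}$ under the quadratic approximation Eq.~(\ref{eq:second-order-taylor}), so all that remains is to re-express $\Lambda^{-1}_{W^*_S}$ in terms of the Hessian and the GNC.

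First I would invoke condition~(i), under which Lemma~\ref{lem:stationary-real} yields $\Lambda_{w^*}=\br{H_{w^*}\pr{\tfrac{2}{\eta}\mathrm{I}_d- H_{w^*}}}^{-1}C_{T}$. Then I would bring in condition~(iii), i.e. $\tfrac{2}{\eta}\gg\lambda_1\ge\lambda_{\max}(H_{w^*})$, to drop $H_{w^*}$ against $\tfrac{2}{\eta}\mathrm{I}_d$ at leading order, obtaining $\Lambda_{w^*}\approx\tfrac{\eta}{2}H_{w^*}^{-1}C_{T}$, equivalently $\Lambda_{w^*}^{-1}\approx\tfrac{2}{\eta}C_{T}^{-1}H_{w^*}$; this is the only place beyond~(i) where the hypotheses are used. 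Plugging $\Lambda^{-1}_{W^*_S}$ into the trace-log turns the argument into $\tfrac{2}{\eta}\,C_{T}^{-1}H_{W^*_S}\Lambda_{w^*_\mu}$. Using that $\tr{\log(\cdot)}$ behaves like $\log\det$ here — hence is invariant under cyclic reordering of the product (so $C_{T}^{-1}H_{W^*_S}$ may be replaced by $H_{W^*_S}C_{T}^{-1}$, which matches the bracketed factor in the claim) and the scalar $\tfrac{2}{\eta}$ can be pulled out — I would collect the $\tfrac{2}{\eta}$ contribution and absorb it into the prefactor, which is what carries $\tfrac{R}{\sqrt{2n}}$ into $\tfrac{R}{\sqrt{n\eta}}$, leaving exactly $\ex{}{\tr{\log\pr{\br{H_{w^*}C^{-1}_{T}}\Lambda_{w^*_\mu}}}}$ under the square root. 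Taking the expectation over $(S,W^*_S)$ throughout gives the stated inequality.

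The main (really the only) obstacle is justifying the leading-order reduction in the second step: one must argue that, at the terminal state of training and under the quadratic model Eq.~(\ref{eq:second-order-taylor}), condition~(iii) makes $\pr{\tfrac{2}{\eta}\mathrm{I}_d-H_{w^*}}^{-1}$ interchangeable with $\tfrac{\eta}{2}\mathrm{I}_d$ to the order at which the bound is quoted, so that the cleaner $H_{w^*}C_{T}^{-1}$ form is legitimate rather than merely heuristic. Everything else — the cyclic invariance of $\log\det$, the extraction of the scalar $\tfrac{2}{\eta}$, and the reconciliation of the constants $\tfrac{R}{\sqrt{2n}}$ and $\tfrac{R}{\sqrt{n\eta}}$ — is routine bookkeeping, noting that $H_{w^*}$ and $C_T$ are both evaluated at the relevant local minimum / terminal state and that condition~(i) keeps the matrices involved simultaneously well-behaved.
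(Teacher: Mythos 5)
Your route is exactly the paper's: its proof simply plugs $\Lambda_{w^*}=\br{H_{w^*}\pr{\tfrac{2}{\eta}\mathrm{I}_d}}^{-1}C_T$ --- i.e.\ condition (iii) used, as you do, to drop $H_{w^*}$ against $\tfrac{2}{\eta}\mathrm{I}_d$ in the condition-(i) expression from Lemma~\ref{lem:stationary-real} --- into Theorem~\ref{thm:opt-state-inde-bound}, with the same reordering inside the trace-log and the same absorption of the scalar $\tfrac{2}{\eta}$ into the $\tfrac{R}{\sqrt{n\eta}}$ prefactor. The leading-order step you flag as the only real obstacle is treated just as informally in the paper itself, so your proposal matches the intended argument.
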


\begin{cor}
\label{cor:pacbayes-isotropic-prior}
Under (i-iii) in Lemma~\ref{lem:stationary-real}, then 
\[
\mathcal{E}_{\mu}(\mathcal{A})\leq\sqrt{\frac{dR^2}{n}\log\left(\frac{2b}{\eta d}\mathbb{E}{||W_S^*-w^*_{\mu}||^2}+1\right)}.
\]
% where $\tilde{w}=w_\mu^* =\ex{}{W_S^*}$ and the optimal $\sigma^*=\sqrt{\ex{}{||W_S^*-\tilde{w}||^2/d+\frac{\eta }{2b}}}$.
\end{cor}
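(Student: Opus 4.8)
The plan is to derive Corollary~\ref{cor:pacbayes-isotropic-prior} as a direct specialization of Theorem~\ref{thm:opt-state-inde-bound}, feeding in the closed form of the stationary covariance supplied by the last clause of Lemma~\ref{lem:stationary-real}. First I would note that, under conditions (i)--(iii), every local minimum $w^*$ in the support of $Q_{W^*_S|S}$ has the isotropic stationary covariance $\Lambda_{w^*} = \frac{\eta}{2b}\mathrm{I}_d$; consequently $\Lambda^{-1}_{W^*_S} = \frac{2b}{\eta}\mathrm{I}_d$ (deterministically) and $\ex{}{\Lambda_{W^*_S}} = \frac{\eta}{2b}\mathrm{I}_d$ exactly.

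Next I would invoke the variance decomposition recorded right after Theorem~\ref{thm:opt-state-inde-bound}, namely $\Lambda_{w^*_\mu} = \ex{}{\pr{W^*_S - w^*_\mu}\pr{W^*_S - w^*_\mu}^\mathrm{\bf T}} + \ex{}{\Lambda_{W^*_S}}$. Substituting the isotropic value of $\ex{}{\Lambda_{W^*_S}}$ and left-multiplying by $\Lambda^{-1}_{W^*_S} = \frac{2b}{\eta}\mathrm{I}_d$ gives
\[
\Lambda^{-1}_{W^*_S}\Lambda_{w^*_\mu} \;=\; \frac{2b}{\eta}\,\ex{}{\pr{W^*_S - w^*_\mu}\pr{W^*_S - w^*_\mu}^\mathrm{\bf T}} + \mathrm{I}_d \;=:\; A ,
\]
a fixed positive-definite matrix, so the outer expectation $\ex{S,W^*_S}{\cdot}$ in Theorem~\ref{thm:opt-state-inde-bound} acts trivially on $\tr{\log A}$.

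It then remains to bound $\tr{\log A}$. Using the paper's convention $\tr{\log A} = \sum_{k=1}^d \log A_{k,k}$, concavity of $\log$ together with Jensen's inequality over the $d$ coordinates with uniform weights $1/d$ --- the same estimate behind Lemma~\ref{lem:compare-iso-noniso} --- gives $\sum_{k=1}^d \log A_{k,k} \le d\log\!\pr{\frac{1}{d}\tr{A}}$. Since $\tr{A} = \frac{2b}{\eta}\,\ex{}{\eucd{W^*_S - w^*_\mu}^2} + d$, the right-hand side equals $d\log\!\pr{\frac{2b}{\eta d}\,\ex{}{\eucd{W^*_S - w^*_\mu}^2} + 1}$. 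Inserting this into Theorem~\ref{thm:opt-state-inde-bound} and absorbing the factor $d$ under the square root yields the stated bound.

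I do not anticipate a genuine obstacle: once Lemma~\ref{lem:stationary-real}(i--iii) supplies the isotropic $\Lambda_{w^*}$, the argument is a single substitution plus the standard concavity step used for Lemma~\ref{lem:compare-iso-noniso}. The points that warrant care are (a) justifying that conditions (i)--(iii) may be taken to hold at \emph{every} local minimum in the support of $Q_{W^*_S|S}$, so that $\ex{}{\Lambda_{W^*_S}} = \frac{\eta}{2b}\mathrm{I}_d$ is an identity (hence $A$ is exactly $\frac{2b}{\eta}$ times the covariance of the ERM solution plus $\mathrm{I}_d$), and (b) carrying the universal constant from Theorem~\ref{thm:opt-state-inde-bound}/Lemma~\ref{lem:xu's-bound} through the final step of moving $d$ inside the square root.
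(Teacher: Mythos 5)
Your proposal is correct and follows essentially the same route as the paper: specialize Theorem~\ref{thm:opt-state-inde-bound} using $\Lambda_{W^*_S}=\frac{\eta}{2b}\mathrm{I}_d$ from Lemma~\ref{lem:stationary-real}(i--iii), use the decomposition $\Lambda_{w^*_\mu}=\ex{}{(W^*_S-w^*_\mu)(W^*_S-w^*_\mu)^{\bf T}}+\ex{}{\Lambda_{W^*_S}}$, and apply the log-sum/Jensen step (Lemma~\ref{lem:log-sum-ineq}) to turn $\tr{\log(\cdot)}$ into $d\log(\tr{\cdot}/d)$; the paper merely performs these steps in the opposite order (Jensen first, then plugging in the isotropic covariance). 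Your version even retains the factor $1/(2n)$, which is slightly tighter than the stated $1/n$, so the claimed bound follows.
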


By $\log(x+1)\leq x$, the bound in Corollary~\ref{cor:pacbayes-isotropic-prior} is dimension-independent if the weight norm does not grow with $d$. Furthermore, the information-theoretic bound becomes a norm-based bound in Corollary~\ref{cor:pacbayes-isotropic-prior}, which is widely studied in the generalization literature \citep{bartlett2017spectrally,neyshabur2018pac}. In fact, $w^*_\mu$ can be replaced by any data-independent vector, for example, the initialization, $w_0$ (see Corollary~\ref{cor:pacbayes-isotropic-prior-init}). In this case,
% when $\tilde{w}$ is the initialization of the parameters, $w_0$, then
the corresponding bound suggests that generalization performance can be characterized by the ``distance from initialization'', namely, given that SGD achieves satisfactory performance on the training data, a shorter distance from the initialization tends to yield better generalization. \citet{nagarajan2019generalization} also derived a ``distance from initialization'' based generalization bound by using Rademacher complexity, and \citet{Hu2020Simple} use ``distance from initialization'' as a regularizer to improve the generalization performance on noisy data.\looseness=-1

\begin{table*}[bt!]
    \centering
    % \vspace{-0.7em}
    \caption{\small Comparison of the results in this work 
   % The previous $\Sigma$'s for SGD and SGDM are taken from \cite{Mandt2017}, and the other four previous results are due to \cite{gitman2019understanding}. There is no existing continuous-time result about DNM and NGD.
    } %\\
    %$^*$SGD: stochastic gradient descent with learning rate $\lambda$. $^*$SGDM: stochastic gradient descent with momentum hyperparameter $\mu$. $^*$QHM: quasi-hypobolic momentum. $^*$DNM: damped newton's method. $^*$NGD: natural gradient descent.}
    \label{tab:summary}
    \vspace{-0.8EM}
    {\small
    \resizebox{\textwidth}{!}{
    {\begin{tabular}{c|c|c}
    \hline\hline
    &Bounds& Remarks\\
    \hline
    \multicolumn{3}{c}{Trajectory-based Bounds. Pros: less assumptions, can track training dynamics; Cro: Time-Dependent}\\
    \hline%\hline 
     Theorem~\ref{thm:isotropic-prior-bound} &  $\mathcal{O}\pr{\sqrt{\frac{d}{n}\ex{}{\log{\frac{h_1}{d}}-\frac{h_2}{d}}}}$  
     & Isotropic covariance for Gaussian prior     \\
      Corollary~\ref{cor:langevin-dynamic} &  $\mathcal{O}\pr{\sqrt{\frac{d}{n}\sum_{t=1}^T{{\mathbb{E}_{}{\log\left(\frac{\mathbb{E}{\left|\left|G_t-\tilde{g}_t\right|\right|^2}}{d}+1\right)}}}}}$  
     & Bound for langevin dynamic; tighter than \citet[Prop.~3.]{neu2021information}   \\
     Theorem~\ref{thm:anisotropic-prior-bound} &  $\mathcal{O}\pr{\sqrt{\frac{1}{n}\sum_{t=1}^T\ex{}{\tr{\log\frac{\Sigma^\mu_tC_t^{-1}}{b}}}}}$ 
     &  Population GNC for prior;  tighter than  Thm.~\ref{thm:isotropic-prior-bound} \\
     \hline
     \multicolumn{3}{c}{Terminal-State-based Bounds. Pro: time-indepedent; Cro: more assumptions, cannot track training dynamics}\\
    \hline%\hline 
    % NAG \\
     Theorem~\ref{thm:opt-state-inde-bound} &  $\mathcal{O}\pr{\sqrt{\frac{1}{n}\ex{}{\tr{\log\pr{\Lambda^{-1}_{W^*_S}\Lambda_{w^*_\mu}}}}}}$ 
     &  General result; hard to measure in practice \\
     Corollary~\ref{cor:pacbayes-anisotropic-prior} & $\mathcal{O}\pr{\sqrt{\frac{1}{n\eta}\ex{}{\tr{\log\pr{\br{H_{w^*}C^{-1}_{T}}\Lambda_{w^*_\mu}}}}}}$ & Under conditions: $H_{w^*}\Lambda_{w^*}=\Lambda_{w^*}H_{w^*}$ and $H_{w^*}\Sigma_T=\mathrm{I}_d$\\
     Corollary~\ref{cor:pacbayes-isotropic-prior} & $\mathcal{O}\pr{\sqrt{\frac{d}{n}\log\left(\frac{b}{\eta d}\mathbb{E}{||W_S^*-\hat{w}||^2}+1\right)}}$ & $\hat{w}$ is flexible; $\frac{2}{\eta}\gg \lambda_1$; other conditions same  as Cor.~\ref{cor:pacbayes-anisotropic-prior} \\
     Theorem~\ref{thm:pacbayes-data-dependent-prior} & $\mathcal{O}\pr{\mathbb{E}{\sqrt{\frac{M^2b}{\eta}\mathbb{E}_{}{||W^*_{S}-W^*_{S_J}||^2}}}}$ & Bounded loss; $\Lambda(W_{s_j}^*)=\Lambda(W_{s}^*)$; other conditions same as Cor.~\ref{cor:pacbayes-isotropic-prior}  \\
    \hline\hline
    \end{tabular}}}}
    %}
%\caption*{{\footnotesize } }
% \vspace{-1.5em}
\end{table*}

In the sequel, we use the data-dependent prior bound, namely, Lemma~\ref{lem:data-dependent-prior}, to derive new results.
% Note that the LOO prior process in Lemma~\ref{lem:data-dependent-prior} is also approximated by SDE with the same learning rate and batch size, so using $S_J$ to train a model will return the same steady-state covariance $\Lambda(W_{S}^*)$ of the final solution by combining Lemma \ref{lem:posterior-covariance} and Eq~(\ref{eq:approx-hessian-gradient}). 

% In the LOO training, to be consistent with the notations used in the algorithmic stability literature, we let $S^{\setminus i}=S_j$ where $i=[n]\setminus j$ is the instance index that is not selected in $j$. 
% Then, we assume the steady-state covariance of SGD remains constant after removing one training instance (\textcolor{red}{Who did this also?}). 
% Hence $\Lambda(W_{S}^*)\approx\Lambda(W_{S_j}^*)$ for any $j$. 
% Let $P_{W_T|S_J=s_j} = \mathcal{N}(W^*_{s_j},\Lambda(W_{s_j}^*))$ where $W^*_{s_j}$ is the local minimum found by the LOO training.
\begin{thm}
\label{thm:pacbayes-data-dependent-prior}
Let $P_{W_T|S_J=s_j} = \mathcal{N}(W^*_{s_j},\Lambda(W_{s_j}^*))$ where $W^*_{s_j}$ is the local minimum found by the LOO training.
Under the same conditions in Lemma~\ref{lem:data-dependent-prior} and (i-iii) in Lemma~\ref{lem:stationary-real}, assuming $\Lambda(W_{s_j}^*)=\Lambda(W_{s}^*)$ for a given $s$, then
% Assume Eq~(\ref{eq:approx-hessian-gradient}) holds, then
\[
\mathcal{E}_{\mu}(\mathcal{A})\leq\mathbb{E}_{S,J}{\sqrt{\frac{M^2b}{2\eta}\mathbb{E}_{W^*_{S},W^*_{S_J}}^{S,J}{||W^*_{S}-W^*_{S_J}||^2}}}.
\]
\end{thm}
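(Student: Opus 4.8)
The plan is to combine the data-dependent prior bound of Lemma~\ref{lem:data-dependent-prior} with the closed-form stationary distribution of Lemma~\ref{lem:stationary-real}. First I would instantiate the free prior $P_{W_T|S_J}$ in Lemma~\ref{lem:data-dependent-prior} by the terminal-weight distribution produced by running the same algorithm on $S_J$ (the LOO process). Under the quadratic approximation of Eq.~(\ref{eq:second-order-taylor}) and conditions (i--iii) of Lemma~\ref{lem:stationary-real}, conditioning on the local minimum $W^*_{s_j}$ reached by that process gives $P_{W_T|s_j,W^*_{s_j}}=\mathcal{N}(W^*_{s_j},\tfrac{\eta}{2b}\mathrm{I}_d)$, so after marginalizing the random minimum, $P_{W_T|s_j}=\mathbb{E}_{W^*_{s_j}|s_j}[\mathcal{N}(W^*_{s_j},\tfrac{\eta}{2b}\mathrm{I}_d)]$ is a Gaussian mixture; the same reasoning applied to the main process yields $Q_{W_T|s}=\mathbb{E}_{W^*_s|s}[\mathcal{N}(W^*_s,\Lambda_{W^*_s})]$, and the hypothesis $\Lambda(W^*_{s_j})=\Lambda(W^*_s)$ together with (i--iii) pins both conditional covariances to the common isotropic matrix $\tfrac{\eta}{2b}\mathrm{I}_d$.

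Next I would control the KL divergence between these two Gaussian mixtures. Using the joint convexity of the KL divergence---equivalently, applying convexity in the first argument to pull out $\mathbb{E}_{W^*_S|S}$ and then in the second argument to pull out $\mathbb{E}_{W^*_{S_J}|S_J}$, exactly as in the discussion preceding Theorem~\ref{thm:opt-state-inde-bound}---one gets
\[
\mathrm{D_{KL}}\!\left(Q_{W_T|S}\,\|\,P_{W_T|S_J}\right)\le\mathbb{E}_{W^*_{S},W^*_{S_J}}^{S,J}\!\left[\mathrm{D_{KL}}\!\left(\mathcal{N}(W^*_S,\tfrac{\eta}{2b}\mathrm{I}_d)\,\big\|\,\mathcal{N}(W^*_{S_J},\tfrac{\eta}{2b}\mathrm{I}_d)\right)\right]=\mathbb{E}_{W^*_{S},W^*_{S_J}}^{S,J}\!\left[\frac{b}{\eta}\,\eucd{W^*_S-W^*_{S_J}}^2\right],
\]
where the last step is the closed form $\tfrac12(\mu_1-\mu_2)^{\mathrm{\bf T}}(\tfrac{\eta}{2b}\mathrm{I}_d)^{-1}(\mu_1-\mu_2)$ of the KL divergence between two Gaussians with the same covariance. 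Substituting into Lemma~\ref{lem:data-dependent-prior} and pulling the constant $\tfrac{b}{\eta}$ out of the square root gives
\[
\mathcal{E}_{\mu}(\mathcal{A})\le\frac{M}{\sqrt 2}\,\mathbb{E}_{S,J}\sqrt{\mathbb{E}_{W^*_{S},W^*_{S_J}}^{S,J}\!\left[\tfrac{b}{\eta}\eucd{W^*_S-W^*_{S_J}}^2\right]}=\mathbb{E}_{S,J}\sqrt{\frac{M^2 b}{2\eta}\,\mathbb{E}_{W^*_{S},W^*_{S_J}}^{S,J}\!\left[\eucd{W^*_S-W^*_{S_J}}^2\right]},
\]
which is the claimed bound.

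The main obstacle is the KL step: since the KL divergence between Gaussian mixtures has no closed form, one cannot feed $Q_{W_T|S}$ and $P_{W_T|S_J}$ into Lemma~\ref{lem:data-dependent-prior} directly, and the convexity reduction must be carried out so that the same mixing law appears on both sides---which is why the average is taken over the joint law of $(W^*_S,W^*_{S_J})$ (here the product of the two marginals, since $J$, and hence the LOO subsample, is drawn before and independently of the training noise). A secondary but essential point is that conditions (i--iii) of Lemma~\ref{lem:stationary-real} together with $\Lambda(W^*_{s_j})=\Lambda(W^*_s)$ are precisely what collapse both conditional covariances to the \emph{same} isotropic $\tfrac{\eta}{2b}\mathrm{I}_d$; without equal covariances the Gaussian KL retains $\tr{\log(\cdot)}$ and trace terms and the clean norm-difference form does not emerge. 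Finally one should verify the easy facts that the LOO prior is an admissible choice in Lemma~\ref{lem:data-dependent-prior} (it depends on $S$ only through $S_J$, and $J$ is fixed before training) and that the KL is finite (both covariances equal the positive-definite $\tfrac{\eta}{2b}\mathrm{I}_d$).
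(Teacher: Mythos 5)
Your proposal is correct and follows essentially the same route as the paper: instantiate the LOO prior as the Gaussian (mixture) $\mathcal{N}(W^*_{s_j},\tfrac{\eta}{2b}\mathrm{I}_d)$ via conditions (i--iii), reduce the intractable mixture--mixture KL to an expected Gaussian--Gaussian KL over the latent minima, evaluate it in closed form as $\tfrac{b}{\eta}\eucd{W^*_S-W^*_{S_J}}^2$, and plug into Lemma~\ref{lem:data-dependent-prior}. The only cosmetic difference is that you justify the reduction step by joint convexity of the KL divergence under a coupling of the mixing laws, whereas the paper phrases it as the chain rule of KL divergence applied to the joint distribution including $W^*_s$ and $W^*_{s_j}$ --- these yield the identical intermediate bound.
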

% \begin{rem}
This bound implies a strong connection between generalization and the algorithmic stability exhibited by SGD. Specifically, if the hypothesis output does not change much (in the squared $L_2$ distance sense) upon the removal of a single training instance, the algorithm is likely to generalize effectively. In fact, $\mathbb{E}_{W^*_{S},W^*_{S_J}}^{S,J}{||W^*_{S}-W^*_{S_J}||^2}$ can be regarded as an average version of squared {\em argument stability} \citep{liu2017algorithmic}. Moreover, stability-based bounds often demonstrate a fast decay rate in the convex learning cases \citep{hardt2016train,bassily2020stability}. It is worth noting that if argument stability achieves  the fast rate, e.g., $\sup_{s,j}||w^*_{s}-w^*_{s_j}||\leq\mathcal{O}(1/n)$,  then Theorem~\ref{thm:pacbayes-data-dependent-prior} can also achieve the same rate. In addition, note that the stability-based bound usually contains a Lipshitz constant, while the bound in Theorem~\ref{thm:pacbayes-data-dependent-prior} discards such undesired constant.

% \end{rem}
% \begin{rem}
% While the ratio $b/{\eta}$ explicitly appears in both Theorem \ref{thm:pacbayes-isotropic-prior} and Theorem \ref{thm:pacbayes-data-dependent-prior}, it may be tempting to assert that large learning rate and small batch size will improve the generalization performance. Although this argument is consistent with many empirical observations such as \cite{jastrzkebski2017three}, it's worth mentioning that this ratio also has some implicit impact on the norm of the terminal parameters, so it's sill unclear on the role of this ratio in generalization from Theorem~\ref{thm:pacbayes-data-dependent-prior}.
% \end{rem}

% \begin{figure*}[ht!]
% % \vspace{-5pt}
% \centering
% \input{TrajPlot}
% \caption{SGD training dynamics on MNIST (first column) and CIFAR10 (second column). Some quantities in  are re-scaled, see Appendix for more details.
% %(a)(b) show the bound decaying with the network width. (c)(d) show the bound increasing with the noise level.
% }
% \label{fig:train-dynamic}
% % \vspace{-5pt}
% \end{figure*}

Ideally, to estimate the distance of $||w^*_{s}-w^*_{s_j}||^2$, one can use the influence function \citep{hampel1974influence,cook1982residuals,koh2017understanding}, namely $w^*_{s_j}-w^*_{s}\approx\frac{1}{n}H^{-1}_{W^*_{s}}\nabla\ell(w^*_{s},z_i)$,
where $i$ is the instance index that is not selected in $j$. However, for deep neural network training, the approximation made by influence function is often erroneous \citep{basu2021influence}. While this presents a challenge, it motivates further 
% exploration and 
refinement, seeking to enhance the practical application of Theorem~\ref{thm:pacbayes-data-dependent-prior} in deep learning.

The main generalization bounds obtained in this paper are summarized in Table~\ref{tab:summary}. In the remainder of this paper, we will empirically verify our theoretical results.

\section{Empirical Study}
% \vspace{-3mm}
In this section, we present some empirical results including tracking training dynamics of SGD and SDE, along with the estimation of several obtained generalization bounds.
% the evolution of some key quantities in our bounds. Additionally, we also empirically compare the trajectory-based bound with the bound in \cite{wang2022generalization}, and also estimate the terminal-based bound. 
\begin{figure*}[!ht]
    \centering
    \begin{subfigure}[b]{0.245\textwidth}
\includegraphics[scale=0.28]{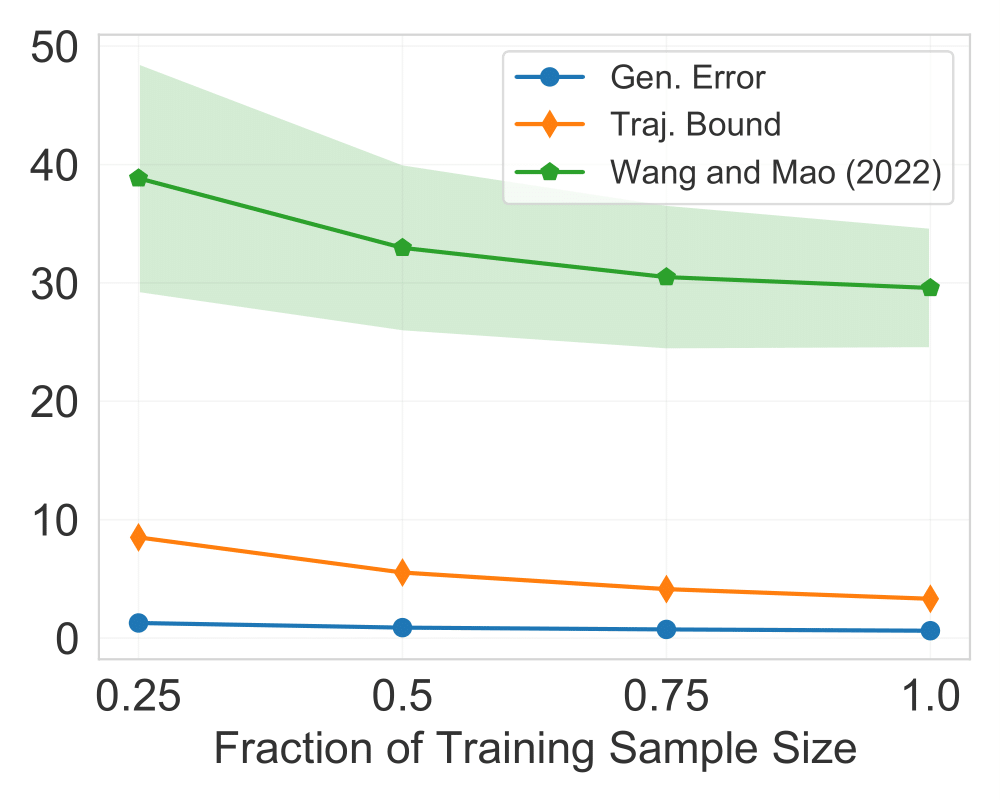}    
\caption{VGG on (small) SVHN}            \label{fig:vgg-svhn-bound}
    \end{subfigure}
\begin{subfigure}[b]{0.245\textwidth}
\includegraphics[scale=0.28]{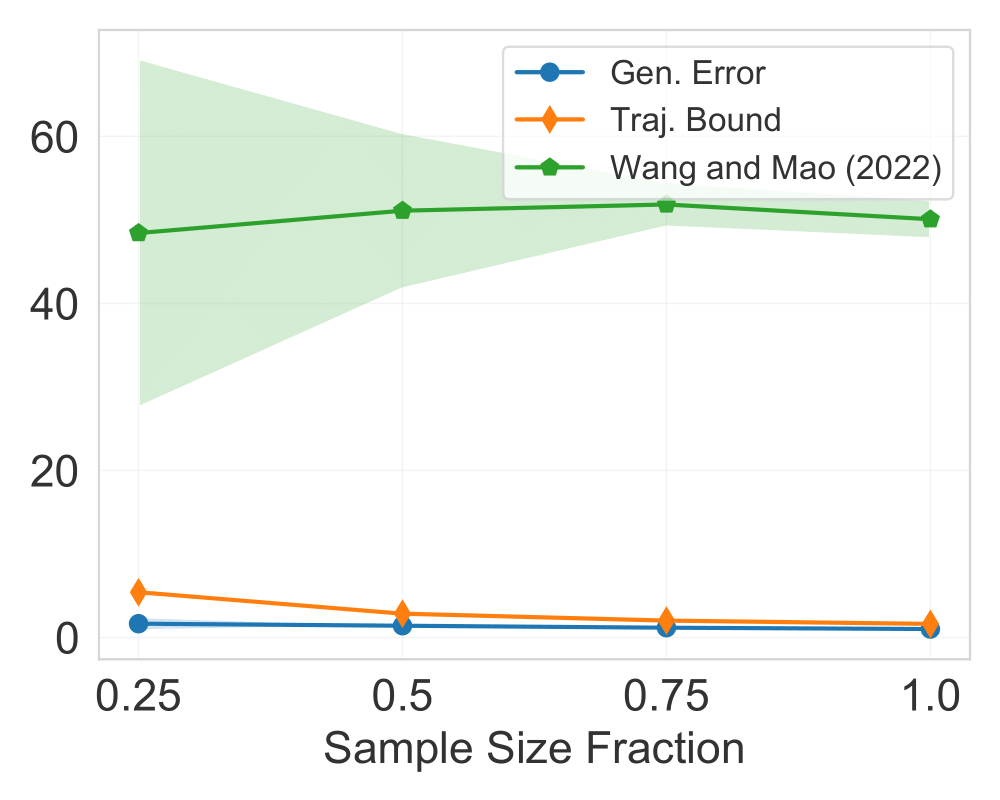}
\caption{VGG on CIFAR10}
    \label{fig:vgg-cifa10-bound}
\end{subfigure}
 \begin{subfigure}[b]{0.245\textwidth}
\includegraphics[scale=0.28]{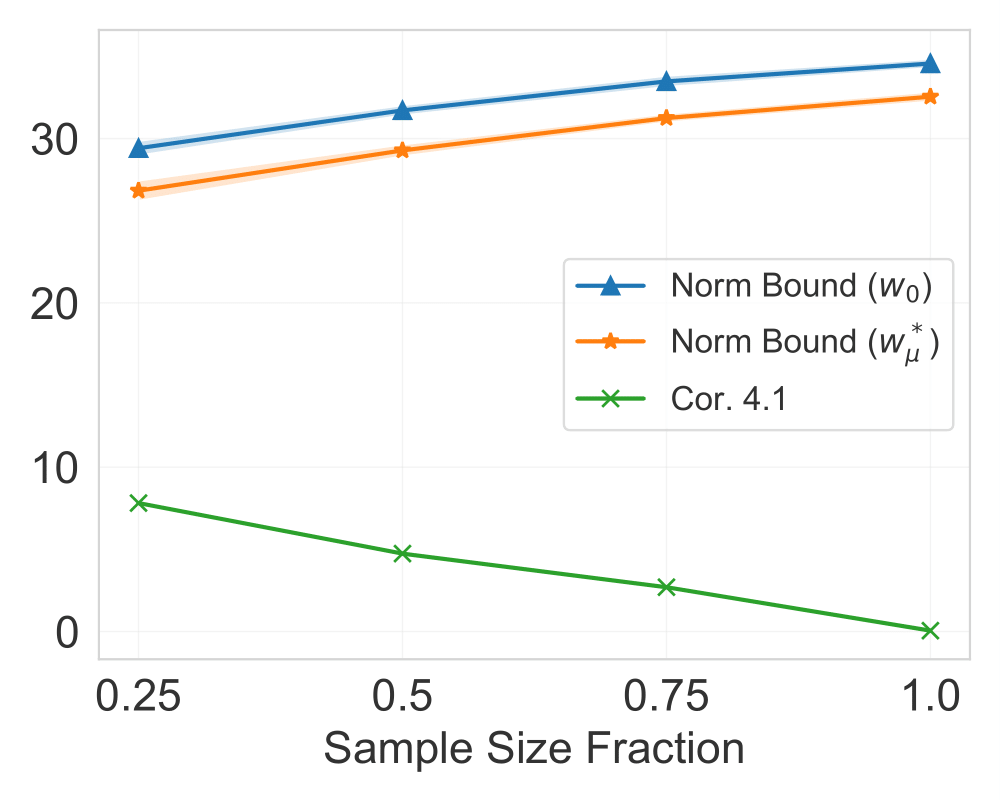}
\caption{VGG on (small) SVHN}
\label{fig:vgg-svhn-TM-bound}
    \end{subfigure}
\begin{subfigure}[b]{0.245\textwidth}
\includegraphics[scale=0.28]{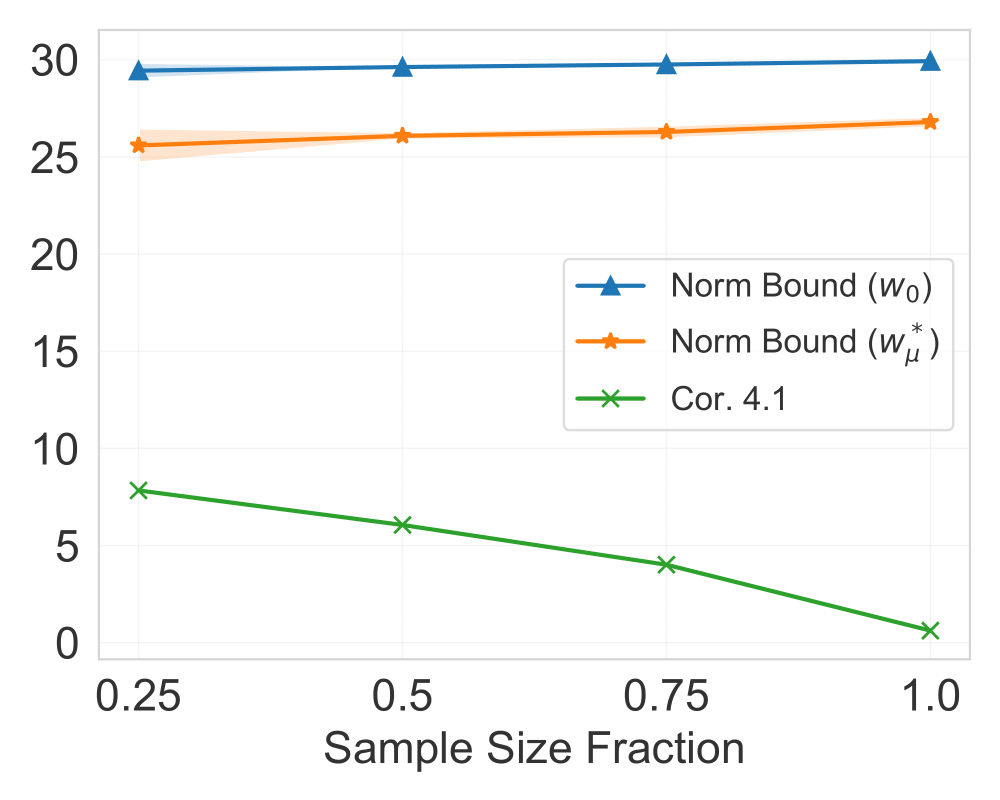}
\caption{VGG on CIFAR10}
\label{fig:vgg-cifar10-TM-bound}
\end{subfigure}
\caption{Estimated trajectory-based bound and terminal-state based bound, with $R$ excluded. Zoomed-in figures of generalization error are given in Figure~\ref{fig:errs} in Appendix.}\label{fig:bounds}
% \vspace{-3mm}
\end{figure*}

\paragraph{Implementation and Hyperparameters}
The implementation in this paper is on PyTorch  \citep{paszke2019pytorch}, and all the experiments are carried out on NVIDIA Tesla V100 GPUs (32 GB). Most experiment settings follow \cite{wu2020noisy}, and the code is also based their implementation, which is available at:  \href{https://github.com/uuujf/MultiNoise}{https://github.com/uuujf/MultiNoise}. For CIFAR 10, the initial learning rates used for VGG-11 and ResNet-18 are $0.01$ and $0.1$, respectively. For SVHN, the initial learning rate is $0.05$. For CIFAR100, the initial learning rate is $0.1$.
The learning rate is then decayed by $0.1$ at iteration $40, 000$ and $60, 000$. If not stated otherwise, the batch size of SGD is $100$. 

% \vspace{-1mm}
\paragraph{SGD and SDE Training Dynamics} We implement the SDE training by following the same algorithm given in \cite[Algorithm~1]{wu2020noisy}. Our experiments involved training a VGG-11 architecture without BatchNormalization on a subset of SVHN (containing $25$k training images) and CIFAR10. Additionally, we trained a ResNet-18 on both CIFAR10 and CIFAR100.  Data augmentation is only used in the experiments related to CIFAR100. We ran each experiment for ten different random seed, maintaining a fixed initialization of the model parameters. 
Further details about the experimental setup can be found in \cite{wu2020noisy}.
The results are depicted in Figure~\ref{fig:Acc-Dynamics}. As mentioned earlier, SDE exhibits a performance dynamics akin to that of SGD, reinforcing the similarities in their training behaviors.

% \vspace{-0.15in}

% \vspace{-1mm}
\paragraph{Evolution of Key Quantities for SGD and SDE} We show $||G_t||^2$ and $\tr{\log\pr{\Sigma_t^{-1}\Sigma_{\mu}}}$ in Figure~\ref{fig:Cov-Dynamics}. Recognizing the computational challenges associated with computing $\tr{\log\pr{\Sigma_t^{-1}\Sigma_{\mu}}}$, we opted to draw estimates based on 100 training and 100 testing samples.  Notably, both SGD and SDE exhibit similar behaviors in these gradient-based metrics. It is noteworthy that despite the absence of the learning rate in the trajectory-based bounds, we observed that modifications to the learning rate at the $40,000^{\rm th}$ and $60,000^{\rm th}$ steps had discernible effects on these gradient-based quantities.  Additionally, in Figure~\ref{fig:Hess-Dynamics}, we examine the trace of the Hessian and its largest eigenvalue during training, leveraging the PyHessian library \citep{yao2020pyhessian}.  Note that we still use only $100$ training data to estimate the Hession for efficiency. Notice that the Hessian-related quantities of SGD and SDE are nearly perfectly matched in the terminal state of training. Furthermore, Figures~\ref{fig:vgg-svhn-weight}-\ref{fig:vgg-cifar10-weight} illustrate the ``distance to initialization'', revealing a consistent trend shared by both SGD and SDE.

% \vspace{-1mm}
\paragraph{Bound Comparison} We vary the size of the training sample and empirically estimate several of our bounds in Figure~\ref{fig:bounds}, with the subgaussian variance proxy $R$ excluded for simplicity. Thus, the estimated values in Figure~\ref{fig:bounds} don't accurately represent the true order of the bounds. Despite the general unbounded nature of cross-entropy loss, common training strategies, such as proper weight initialization, training techniques, and appropriate learning rate selection, ensure that the cross-entropy loss remains bounded in practice. Therefore, it is reasonable to assume subgaussian behavior of the cross-entropy loss under SGD training. In Figure~\ref{fig:vgg-svhn-bound}-\ref{fig:vgg-cifa10-bound}, we compare our Theorem~\ref{thm:anisotropic-prior-bound} with \citet[Theorem~2]{wang2022generalization}. Since both bounds incorporate the same $R$, the results in Figures\ref{fig:vgg-svhn-bound} to \ref{fig:vgg-cifa10-bound} show that our Theorem~\ref{thm:anisotropic-prior-bound} outperforms \citet[Theorem2]{wang2022generalization}. This aligns with expectations (see Appendix~\ref{sec:IT-SGD} for additional discussions), considering that the isotropic Gaussian used in the auxiliary weight process of \citet[Theorem~2]{wang2022generalization} is suboptimal, as demonstrated in Lemma~\ref{lem:compare-iso-noniso}. Moreover, Figures~\ref{fig:vgg-svhn-TM-bound} to \ref{fig:vgg-cifar10-TM-bound} hint that norm-based bounds Corollary~\ref{cor:pacbayes-isotropic-prior} (and Corollary~\ref{cor:pacbayes-isotropic-prior-init}) exhibit growth with $n$, which are also observed in \cite{nagarajan2019uniform}. In contrast, Corollary~\ref{cor:pacbayes-anisotropic-prior} effectively captures the trend of generalization error, emphasizing the significance of the geometric properties of local minima. Additionally, while trajectory-based bounds may appear tighter, terminal-state-based bounds seem to have a faster decay rate.

\section{Other Related Literature}
% \section{Other Related Literature}
% Information-theoretic generalization bounds are typically useful to noisy iterative algorithms.  For example,  \citet{pensia2018generalization} first apply the information-theoretic bound given by \citet{xu2017information} to analyze the generalization property of SGLD. Since the noise used in SGLD is usually an isotropic Gaussian, by utilizing the closed form of KL divergence between two Gaussian distributions, the information-theoretic generalization bound for SGLD is shown to have a tractable form. Their result is then improved by stronger bounds in  \citep{bu2019tightening,negrea2019information,haghifam2020sharpened,wang2022generalization}. 

% MI bounds, however, can not be directly used to analyze the generalization property of vanilla SGD since the MI term in the bound may go to infinity in this case and one could not obtain a tractable form by using following the approach of \cite{pensia2018generalization}. 
% Recently,  \cite{neu2021information} and \cite{wang2022generalization} have studied the generalization of models trained with SGD and obtained new MI bounds, using a technique via constructing an auxiliary perturbed weight process; additional complexity must be delt with in that analysis. Thus there appears significant room for improved understanding of the generalization of SGD.
% \textcolor{red}{
Recently, \cite{simsekli2019tail,nguyen2019first,simsekli2020hausdorff,meng2020dynamic,gurbuzbalaban2021heavy}, and \cite{gurbuzbalaban2021heavy} challenge the traditional assumption that gradient noise is a Gaussian and argue that the noise is heavy-tailed (e.g., L\'{e}vy noise). In contrast, \citet{xie2020diffusion} and \citet{li2021validity} 
claim that non-Gaussian noise is not essential to SGD performance, and SDE with Gaussian gradient noise can well characterize the behavior of SGD. They also 
argue that the empirical evidence shown in \cite{simsekli2019tail} relies on a hidden strong assumption that gradient noise is isotropic and each dimension  has the same distribution. 
For other works on SGD and SDE, see \citep{hoffer2017train,xing2018walk,panigrahi2019non,wu2020noisy,zhu2019anisotropic,li2020hessian,ziyin2022strength}.

In addition, there are some generalization bounds using fractal dimensions \citep{simsekli2020hausdorff,camuto2021fractal,dupuis2023generalization}, which are also trajectory-based generalization bounds. Notably, \cite{dupuis2023generalization} improves previous works by removing the Lipschitz continuity assumption, yet direct comparison of our results with theirs remains challenging. Specifically, one notable difference is that in both Section~\ref{sec:itb-sde} and Section~\ref{sec:pac-bayes} of our work, we provide in-expectation generalization bounds, while they present high-probability generalization guarantees, which require additional developments for comparison. Moreover, some key components in their bounds are not directly comparable to our gradient noise covariance or Hessian-based quantities, such as their upper and lower box-counting dimensions. On one hand, we believe our results have several advantages. For instance, while the boundedness of loss is essential in their work, we can relax it to a sub-Gaussian condition in ours. Additionally, our bound is easier to estimate for more complex models. On the other hand, their utilization of intrinsic dimension in the analysis is inspiring and may be possible to incorporate into our analysis for obtaining better results. 

\section{Limitations and Future works}
\label{sec:concluds}
% In this paper, we invoke the SDE approximation of SGD so that information-theoretic generalization bounds are directly applicable to SGD with two opportunities. First, dynamics characterized by SDE enable us to obtain trajectory-based bounds by the step-wise analysis of mutual information. 
% These results mainly suggest that the trace of gradient noise covariance is significant for studying the generalization ability of SGD. 
% In addition, with some mild assumptions, we also 
% % apply the PAC-Bayes-like approach to 
% obtain some new bounds based on the terminal state of SGD. 

 While our current work exhibits certain limitations, such as the requirement of positive definiteness for $C_t$ in our trajectory-based bounds, it's worth noting that recent studies \citep{frankle2018the,li2018measuring,gur2018gradient,larsen2022how} indicate that many parameters in deep neural networks might be dispensable without affecting generalization. This implies that GD/SGD could potentially occur in a subspace of $\mathbb{R}^d$ termed the ``intrinsic dimension'' $d_{\mathrm{int}}$. Defining $C_t$ within this invertible subspace, utilizing $d_{\mathrm{int}}$, could potentially overcome our current limitations. Theoretical characterization of intrinsic dimension, however, remains an open problem, and further exploration in this direction is poised to significantly improve our work. In addition, there are also some other promising directions for further improving this work, for example, 
% via developing new theoretic tools to analyzing non-Gaussian type gradient noise (e.g., heavy tailed noise) and 
via using other posterior and prior covariance instead of steady-state covariance (e.g., we also give one in Theorem~\ref{thm:IF-pacbayes-FIM} in Appendix), and via extending the similar analytical approach used in this work to other optimizer (e.g., Adam, Adagrad, etc.). 

\begin{acknowledgements} % will be removed in pdf for initial submission,
						 % (without ‘accepted’ option in \documentclass)
                         % so you can already fill it to test with the
                         % ‘accepted’ class option
    This work is supported partly by an NSERC Discovery
grant. Ziqiao Wang is also supported in part by the NSERC CREATE program through the Interdisciplinary Math and Artificial Intelligence (INTER-MATH-AI) project. The authors would like to thank all the anonymous reviewers and the ACs for their careful reading and valuable suggestions.
\end{acknowledgements}

% References
\bibliography{ref}

\newpage

\onecolumn

\appendix
\title{Supplementary Material}
\maketitle
% \begin{appendices}
% \addcontentsline{toc}{section}{Appendix} % Add the appendix text to the document TOC
% \part{Appendix} % Start the appendix part
% \parttoc

% \addcontentsline{toc}{section}{Appendix} % Add the appendix text to the document TOC
% \part{Appendix} % Start the appendix part
% \parttoc

% \section{Summaries for the Bounds}

% \section{Notation}
% % Unless otherwise noted, a random variable will be denoted by a capitalized letter, and  its realization by the corresponding lower-case letter. 
% The distribution of a random variable $X$ is denoted by $P_X$ (or $Q_X$), and the conditional distribution of $X$ given $Y$ is denoted by $P_{X|Y}$. When conditioning on a specific realization $y$, we use the shorthand $P_{X|Y=y}$ or simply $P_{X|y}$.
% Denote by $\mathbb{E}_{X}$ expectation over $X \sim P_X$, and by $\mathbb{E}_{X|Y=y}$ (or $\mathbb{E}^y_{X}$) expectation over $X \sim P_{X|Y=y}$. We may omit the subscript of the expectation when there is no ambiguity.
% % The entropy of a random variable $X$ is denoted by $H(X)$, and 
% The KL divergence of probability distribution $Q$ with respect to $P$ is denoted by $\mathrm{D_{KL}}(Q||P)$.
% The mutual information (MI) between random variables $X$ and $Y$ is denoted by $I(X;Y)$, and the conditional mutual information between $X$ and $Y$ given $Z$ is denoted by $I(X;Y|Z)$. In addition, for a matrix $A\in\mathbb{R}^{d\times d}$, we let $\tr{A}$ denote the trace of $A$ and we use $\tr{\log{A}}$ to indicate $\sum_{k=1}^d\log{A_{k,k}}$.

\section{Additional Background}
\subsection{Information-Theoretic Bounds for SGD}
\label{sec:IT-SGD}

% \textcolor{red}{
Recently, \citep{neu2021information,wang2022generalization} apply information-theoretic analysis to the generalization of models trained with SGD by invoking an auxiliary weight process (AWP). We now denote this auxiliary weight process by  $\mathcal{A}_{AWP}$. Let $\mathcal{A}_{SGD}$ be the original algorithm of SGD, \citep{neu2021information,wang2022generalization} obtain generalization bounds by the following construction,
\begin{align}
\mathcal{E}_{\mu}\pr{\mathcal{A}_{SGD}}=&\mathcal{E}_{\mu}\pr{\mathcal{A}_{SGD}}+\mathcal{E}_{\mu}\pr{\mathcal{A}_{AWP}}-\mathcal{E}_{\mu}\pr{\mathcal{A}_{AWP}}\notag\\
    \leq& \underbrace{\mathcal{O}\pr{\sqrt{\frac{I(W_{\rm AWP};S)}{n}}}}_{\text{Lemma~\ref{lem:xu's-bound}}}+\underbrace{\abs{\mathcal{E}_{\mu}\pr{\mathcal{A}_{SGD}}-\mathcal{E}_{\mu}\pr{\mathcal{A}_{AWP}}}}_{\text{residual term}}, \label{ineq:sgd-ax}
\end{align}
where $W_{\rm AWP}$ is the output hypothesis by $\mathcal{A}_{AWP}$.

Notably, it remains uncertain whether the residual term is sufficiently small for the information-theoretic bounds of $\mathcal{A}_{AWP}$ to yield meaningful insights into SGD. Although there exists an optimal $\mathcal{A}_{AWP}$ that tightens the bound in Eq.~(\ref{ineq:sgd-ax}), finding such an optimal $\mathcal{A}_{AWP}$ beyond the isotropic Gaussian noise covariance case is challenging. 
It's worth noting that \cite{wang2022generalization} provides an optimal bound for the time-invariant isotropic Gaussian noise case. Nevertheless, our empirical results, as illustrated in Figure~\ref{fig:bounds}, demonstrate that the bounds presented in this paper outperform the isotropic Gaussian noise case.

In this paper, we do not attempt to find an optimal $\mathcal{A}_{AWP}$, but instead,  we invoke the SDE approximation (i.e. Eq.~(\ref{eq:sgd-update-gaussian})), denoted as $\mathcal{A}_{SDE}$. Formally,

\begin{align}
\mathcal{E}_{\mu}\pr{\mathcal{A}_{SGD}}=&\mathcal{E}_{\mu}\pr{\mathcal{A}_{SGD}}+\mathcal{E}_{\mu}\pr{\mathcal{A}_{SDE}}-\mathcal{E}_{\mu}\pr{\mathcal{A}_{SDE}}\notag\\
    \leq& \underbrace{\mathcal{O}\pr{\sqrt{\frac{I(W_{\rm SDE};S)}{n}}}}_{\text{Lemma~\ref{lem:xu's-bound}}}+\underbrace{\abs{\mathcal{E}_{\mu}\pr{\mathcal{A}_{SGD}}-\mathcal{E}_{\mu}\pr{\mathcal{A}_{SDE}}}}_{\text{residual term}}, \label{ineq:sde-res}
\end{align}
where $W_{\rm SDE}$ is the output hypothesis by $\mathcal{A}_{SDE}$.

Empirical evidence from \citep{wu2020noisy,li2021validity} and our Figure~\ref{fig:Acc-Dynamics} suggests that the residual term in Eq.~(\ref{ineq:sde-res}) is small. This observation motivates our investigation into the generalization of SGD using the information-theoretic bounds of SDE directly.

\subsection{Theoretical Validation of SDE}

To theoretically  assess the validation of SDE  in approximating SGD, two essential technical definitions are necessary.
\begin{defn}[Test Functions]
\label{defn:test-func}
    Let $\mathcal{F}$ denote the set of continuous functions ($\mathbb{R}^d\to\mathbb{R}$) with polynomial growth, i.e. if $\forall$ $f \in \mathcal{F}$, there exists constants $K, \kappa > 0$ s.t. $|f(x)|<K(1+|x|^{\kappa})$ for all $x\in\mathbb{R}$.
\end{defn}

\begin{defn}[Order$-\alpha$ weak approximation]
\label{defn:weak-approx}
     Let $\eta\in(0,1)$, $T>0$ and $N=\lfloor T/\eta \rfloor$. Let $\mathcal{F}$ be the set of test Functions.
     We say that the SDE in Eq.~(\ref{eq:ito-sde}) is an order $\alpha$ weak approximation of the SGD in Eq.~(\ref{eq:sgd-update}) if for every $f\in\mathcal{F}$, there exists $C>0$, independent of $\eta$, s.t. for all $k=0, 1, \dots, N$, 
     \[
     \abs{\ex{}{f(\omega_{k\eta})}-\ex{}{f(W_{k})}}\leq C\eta^{\alpha}.
     \]
\end{defn}
Below is a classical result.

\begin{lem}[{\citet[Theorem~1]{li2017stochastic}}]
    Assume $\nabla\ell$ is Lipschitz continuous, has at most linear asymptotic growth and has sufficiently high derivatives belonging to $\mathcal{F}$, then SDE in Eq.~(\ref{eq:ito-sde}) is an order $1$ weak approximation of the SGD in Eq.~(\ref{eq:sgd-update}). Or equivalently, for every
$f \in \mathcal{F}$, there exists $C>0$, independent of $\eta$, s.t. $\max_{k=0, 1, \dots, N}\abs{\ex{}{f(\omega_{k\eta})}-\ex{}{f(W_{k})}}\leq C\eta$.
\label{lem:sde-weak}
\end{lem}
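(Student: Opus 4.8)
The plan is to treat one step of SGD (Eq.~(\ref{eq:sgd-update})) as a numerical scheme for the It\^o SDE (Eq.~(\ref{eq:ito-sde})) with step size $\eta$, so that $N=\lfloor T/\eta\rfloor$ SGD steps correspond to SDE time $T$, and then run the classical one-step-error-plus-telescoping argument for weak convergence (Milstein / Talay--Tubaro style, as in \citet{li2017stochastic}). Throughout I would fix the realization $S=s$, so that $L_s$ and the covariances $C_{\cdot}$ are deterministic coefficients; since every constant produced below depends on $s$ only through the polynomial-growth and Lipschitz constants of $\nabla\ell$ and is independent of $\eta$, the bound also survives taking expectation over $S$.

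First I would set up the backward object and collect the a-priori estimates. Let $\mathcal{L}$ be the generator of Eq.~(\ref{eq:ito-sde}), $\mathcal{L}g(x)=-\nabla L_s(x)\cdot\nabla g(x)+\tfrac{\eta}{2}\tr{C_x\nabla^2 g(x)}$, and for $f\in\mathcal{F}$ set $u(x,t)=\mathbb{E}[f(\omega_t)\mid\omega_0=x]$, which solves $\partial_t u=\mathcal{L}u$. The substantive input here — the place where the hypotheses on $\nabla\ell$ (Lipschitzness, at most linear growth, higher derivatives in $\mathcal{F}$) are indispensable — is a regularity statement: $u(\cdot,t)$ has spatial derivatives up to order at least four, and all of them lie in $\mathcal{F}$ with growth constants that are \emph{uniform} over $t\in[0,T]$; this follows either by differentiating the stochastic flow $x\mapsto\omega_t(x)$ and applying Gronwall to the variational equations, or by parabolic PDE regularity, and I would cite the corresponding lemma of \citet{li2017stochastic}. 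In parallel I would record the uniform moment bounds $\sup_{t\le T}\mathbb{E}|\omega_t|^{2m}<\infty$ and $\sup_{k\le N}\mathbb{E}|W_k|^{2m}<\infty$ for every $m$, uniformly in $\eta$, both obtained from the linear growth of the drift via (continuous, resp.\ discrete) Gronwall.

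Next is the one-step error, the heart of the computation. Fix a horizon $\tau\in[0,T]$ and a point $x$. The SDE tower property gives exactly $\mathbb{E}[u(\omega_\eta,\tau)\mid\omega_0=x]=u(x,\tau+\eta)$, and Dynkin's formula iterated once, together with $\mathcal{L}u,\mathcal{L}^2u\in\mathcal{F}$ and the moment bounds, yields $u(x,\tau+\eta)=u(x,\tau)-\eta\,\nabla L_s(x)\cdot\nabla u(x,\tau)+O(\eta^2)$, the $O(\eta^2)$ absorbing the $\tfrac{\eta}{2}\tr{C_x\nabla^2u}$ term (already $O(\eta^2)$) and the second Dynkin iterate. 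On the SGD side, a third-order Taylor expansion of $u(\cdot,\tau)$ with integral remainder, combined with $\mathbb{E}[W_1-x\mid W_0=x]=-\eta\,\nabla L_s(x)$, $\mathbb{E}[(W_1-x)(W_1-x)^{\mathrm{\bf T}}\mid W_0=x]=\eta^2(\nabla L_s(x)\nabla L_s(x)^{\mathrm{\bf T}}+C_x)=O(\eta^2)$, and $\mathbb{E}[|W_1-x|^3\mid W_0=x]=O(\eta^3)$, gives the \emph{same} expansion $\mathbb{E}[u(W_1,\tau)\mid W_0=x]=u(x,\tau)-\eta\,\nabla L_s(x)\cdot\nabla u(x,\tau)+O(\eta^2)$. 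Subtracting, the one-step error $e_\tau(x):=\mathbb{E}[u(\omega_\eta,\tau)\mid\omega_0=x]-\mathbb{E}[u(W_1,\tau)\mid W_0=x]$ obeys $|e_\tau(x)|\le K(1+|x|^\kappa)\eta^2$ with $K,\kappa$ independent of $\eta$ and of $\tau\in[0,T]$ (uniformity in $\tau$ coming from the uniform regularity of $u$).

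Finally I would telescope. With $x_0$ the common deterministic initialization and $a_k:=\mathbb{E}[u(W_k,(N-k)\eta)]$, note $a_0=u(x_0,N\eta)=\mathbb{E}[f(\omega_{N\eta})]$ and $a_N=\mathbb{E}[u(W_N,0)]=\mathbb{E}[f(W_N)]$, so
\[
\mathbb{E}[f(W_N)]-\mathbb{E}[f(\omega_{N\eta})]=\sum_{k=0}^{N-1}\Big(a_{k+1}-a_k\Big).
\]
Applying the SDE one-step tower property at the (random) point $W_k$ shows $a_k=\mathbb{E}_{W_k}\big[\mathbb{E}[u(\omega_\eta,(N-k-1)\eta)\mid\omega_0=W_k]\big]$, hence $a_{k+1}-a_k=-\mathbb{E}_{W_k}[e_{(N-k-1)\eta}(W_k)]$; by the one-step bound and the uniform moment bound, $|a_{k+1}-a_k|\le K\eta^2\,\mathbb{E}[1+|W_k|^\kappa]\le C'\eta^2$, so the sum is at most $NC'\eta^2\le(T/\eta)C'\eta^2=C\eta$, and since the same $C$ works for every intermediate horizon the $\max_{k\le N}$ in the statement is immediate. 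The main obstacle is the uniform-in-time, polynomial-growth regularity of $u$ (Step~2); the one-step Taylor matching is conceptually routine but needs careful bookkeeping of which remainders are genuinely $O(\eta^2)$, and the telescoping is mechanical once the one-step estimate is uniform in the horizon.
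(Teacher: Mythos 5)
The paper itself offers no proof of this lemma---it is imported verbatim from \citet{li2017stochastic} (their Theorem~1)---so the only meaningful comparison is with that reference, and your sketch is essentially the same argument it rests on: the standard Milstein/Talay--Tubaro weak-error scheme (one-step moment matching of SGD against the backward Kolmogorov solution $u$, then telescoping over $N=\lfloor T/\eta\rfloor$ steps), and it is sound as outlined. The only point worth flagging is that the step you defer to a cited regularity lemma is where all the real work lives: $u$ depends on $\eta$ through the generator (the diffusion is $[\eta C_\omega]^{1/2}$), so the polynomial-growth bounds on its derivatives must be uniform in $\eta$ as well as in $t\in[0,T]$, and one also needs $C_\omega^{1/2}$ regular enough for well-posedness---precisely what the hypotheses of \citet{li2017stochastic} are designed to guarantee, so invoking their lemma there is appropriate rather than a gap.
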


This theorem suggests that SGD and SDE closely track each other when they result in similar distributions of outcomes, such as the returned hypothesis $W$. In addition, the closeness of distributions is formulated through expectations of suitable classes of test functions, as defined in Definition~\ref{defn:test-func}. As mentioned in \cite{li2021validity}, of particular interest for machine learning are test functions like generalization error $\mathcal{E}_\mu$, which may not adhere to formal conditions such as differentiability assumed in classical theory but are still valuable for experimental use. Other typical choices of test functions includes weight norm, gradient norm, and the trace of noise covariance.

\subsection{Gaussian Distribution around Local Minimum}
\label{sec:Gaussian-local}
A multi-dimensional Ornstein-Uhlenbeck process is defined as
\begin{align}
    dx_t=-\mathbf{H}x_tdt+\mathbf{B}d\theta_t,
    \label{eq: OU-sde}
\end{align}
where $x_t\in\mathbb{R}^d$, $\mathbf{H}$, $\mathbf{B}$ are $d\times d$ matrices and $\theta_t$ is an $d$-dimensional Wiener process.

% The solution is

Denote the density function of $x_t$ as
$P(x, t)$, then the corresponding Fokker-Planck equation describes the evolution of $P(x, t)$:
\[
\frac{\partial P(x,t)}{\partial t}=\sum_{i=1}^d\sum_{j=1}^d\frac{\partial}{\partial x_i}\pr{P(x,t)\sum_{j=1}^d\mathbf{H}_{i,j}x_j}+\sum_{i=1}^d\sum_{j=1}^d\mathbf{D}_{i,j}\frac{\partial^2 P(x,t)}{\partial x_i \partial x_j},
\]
where $\mathbf{D}={\mathbf{B}\mathbf{B}^{\bf T}}/{2}$.

Moreover, if $\mathbf{H}$ is positive define, then a stationary solution of $P$ is given by \citep{freidlin2012randomper}:
\begin{align}
\label{eq:OU-stationary}
    P(x)=\frac{1}{\sqrt{\pr{2\pi}^{d}\mathrm{det}\pr{\Sigma}}}\exp\pr{-\frac{1}{2}x^{\bf T}\Sigma^{-1} x},
\end{align}
where $\Sigma=\ex{}{xx^{\bf T}}$ is the covariance matrix of $x$.

When $w$ is close to any local minimum $w^*$, we can use a second-order Taylor expansion to approximate the value of the loss at $w$, 
\begin{eqnarray}
  L_s(w) \approx L_s(w^*) + \frac{1}{2}(w-w^*)^\mathrm{\bf T} H_{w^*}(w-w^*).
  % \label{eq:second-order-taylor}
\end{eqnarray}
In this case, when $w_t\to w^*$, we have $G_t=\nabla L_s(w_t)=H_{w^*}\pr{w_t-w^*}$.
Recall Eq.~(\ref{eq:sgd-update-2}), then
\begin{align*}
    w_{t} =& w_{t-1} - \eta G_t + \eta V_t\notag
    =w_{t-1} - \eta H_{w^*}\pr{w_{t-1}-w^*} + \eta V_t.
\end{align*}
Let $W'_t\triangleq W_t-w^*$ and recall Eq~(\ref{eq: OU-sde}), we thus have the Ornstein-Uhlenbeck process for $x_t=W'_t$ as
\begin{align}
    dW'_t=-\eta H_{w^*}W'_tdt+\eta\sqrt{C_t}d\theta_t.
\end{align}
By Eq.~(\ref{eq:OU-stationary}), we have
\[
P(W')\propto \exp\pr{-\frac{1}{2}W'^{\bf T}\Lambda_{w^*}^{-1} W'}.
\]
Consequently, the stationary distribution of $W$ for a given $w^*$ is $\mathcal{N}(w^*,\Lambda_{w^*})$.

For discrete case, we have
\begin{align*}
    w'_{t} =&\pr{\mathrm{I}_d - \eta H_{w^*}}w'_{t-1} + \eta V_t\\
    =&\pr{\mathrm{I}_d - \eta H_{w^*}}^2w'_{t-2} +  \eta\pr{\pr{\mathrm{I}_d - \eta H_{w^*}}V_{t-1}+V_t}\\
    \vdots& \\
    =& \bar{H}^tw'_0+\eta\sum_{i=0}^t\bar{H}^iV_{t-i},
\end{align*}
where $\bar{H}=\mathrm{I}_d - \eta H_{w^*}$.
Notably, when $t$ is sufficiently large, then the first term is negligible, especially with a small learning rate, we have $w_t'=w_t-w^*=\eta\sum_{i=0}^t\bar{H}^iV_{t-i}$. When $C_t$ does not change in the long time limit, then  $W_t'$ is the weighted sum of independent Gaussian random variables, which follows a Gaussian distribution, namely  $w_t\sim\mathcal{N}(w^*,\Lambda_{w^*})$. We refer readers to \cite[Theorem~1-2.]{liu2021noise} for a relaxed analysis in the discrete case.

\section{Some Useful Facts}

We present the variational representation of mutual information below.
\begin{lem}[{\citet[Corollary~3.1.]{polyanskiy2019lecture}}]
\label{lem:mi-center-gravity}
For two random variables $X$ and $Y$, we have
\[
I(X;Y) = \inf_{P} \ex{X}{\mathrm{D_{KL}}(Q_{Y|X}||P)},
\]
where the infimum is achieved at $P=Q_Y$.
\end{lem}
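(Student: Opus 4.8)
The plan is to establish the classical \emph{golden formula} decomposition of the expected KL divergence and then invoke the non-negativity of KL divergence (Gibbs' inequality). Fix an arbitrary reference distribution $P$ on the space of $Y$ for which $\kl{Q_Y}{P}<\infty$. The central claim I would prove is the identity
\[
\ex{X}{\kl{Q_{Y|X}}{P}} = I(X;Y) + \kl{Q_Y}{P}.
\]

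To derive this, I would start from the definition $\ex{X}{\kl{Q_{Y|X}}{P}} = \ex{X,Y}{\log\frac{Q_{Y|X}}{P}}$, rewriting the outer expectation of the inner KL integral as a single expectation over the joint law $(X,Y)\sim Q_{X,Y}$. The key algebraic step is to insert the marginal $Q_Y$ as a pivot, splitting the integrand via $\log\frac{Q_{Y|X}}{P} = \log\frac{Q_{Y|X}}{Q_Y} + \log\frac{Q_Y}{P}$. The first summand yields $\ex{X,Y}{\log\frac{Q_{Y|X}}{Q_Y}} = I(X;Y)$ directly by the definition of mutual information. The second summand depends only on $Y$, so marginalizing out $X$ gives $\ex{Y}{\log\frac{Q_Y}{P}} = \kl{Q_Y}{P}$, which establishes the identity.

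With the decomposition in hand, the conclusion is immediate. Since $I(X;Y)$ is a constant independent of the choice of $P$, minimizing the left-hand side over $P$ is equivalent to minimizing $\kl{Q_Y}{P}$. By Gibbs' inequality, $\kl{Q_Y}{P}\ge 0$ with equality if and only if $P = Q_Y$, so the infimum equals $I(X;Y)$ and is attained precisely at $P = Q_Y$.

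The main point requiring care—rather than any deep obstacle—is the measure-theoretic bookkeeping. I would need to ensure the integrals are well-defined and justify the splitting of the logarithm, which requires the absolute continuity chain $Q_{Y|X}\ll Q_Y \ll P$ so that the relevant Radon--Nikodym derivatives exist and the $\log$ terms are finite almost everywhere, together with a Fubini argument to pass to the joint expectation. Restricting attention to reference distributions $P$ with $\kl{Q_Y}{P}<\infty$ handles the finiteness concerns cleanly, since the candidate optimizer $P=Q_Y$ always lies in this class and any $P$ outside it only increases the objective.
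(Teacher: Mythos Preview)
Your proposal is correct and follows essentially the same route as the paper. The paper cites this lemma from \citet{polyanskiy2019lecture} without reproducing a proof, but in its proof of the conditional analogue (Lemma~\ref{lem:cmi-golden formula}) it explicitly invokes the same golden-formula decomposition $\ex{X}{\kl{Q_{Y|X}}{P}} = I(X;Y) + \kl{Q_Y}{P}$ followed by non-negativity of KL, exactly as you outline.
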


The following lemma is inspired by the classic Log-sum inequality in \citet[Theorem~2.7.1]{cover2012elements}.
\begin{lem}
\label{lem:log-sum-ineq}
For non-negative numbers $\{a_i\}_{i=1}^n$ and $\{b_i\}_{i=1}^n$,
\[
\sum_{i=1}^n b_i\log\frac{a_i}{b_i}\leq \left(\sum_{i=1}^n b_i \right)\log\frac{\sum_{i=1}^n a_i}{\sum_{i=1}^n b_i},
\]
with equality if and only if $\frac{a_i}{b_i}=const$.
\end{lem}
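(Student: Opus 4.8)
\textbf{Proof proposal for Lemma~\ref{lem:log-sum-ineq}.}

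The plan is to reduce this to the standard Log-sum inequality \citep[Theorem~2.7.1]{cover2012elements} by a reflection trick. Recall that the classical Log-sum inequality states that for non-negative $\{a_i\}$ and $\{b_i\}$, $\sum_i a_i \log\frac{a_i}{b_i} \ge \left(\sum_i a_i\right)\log\frac{\sum_i a_i}{\sum_i b_i}$, with equality iff $a_i/b_i$ is constant. Our claimed inequality has the roles of $a_i$ and $b_i$ \emph{swapped} inside the logarithm's argument relative to the multiplicative weight, and the inequality direction reversed; so the natural move is simply to apply the classical inequality with the roles of the two sequences interchanged. That is, I would invoke \citep[Theorem~2.7.1]{cover2012elements} with $\{b_i\}$ playing the role of the ``numerator'' sequence and $\{a_i\}$ the ``denominator'' sequence, yielding $\sum_i b_i \log\frac{b_i}{a_i} \ge \left(\sum_i b_i\right)\log\frac{\sum_i b_i}{\sum_i a_i}$. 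Multiplying through by $-1$ and moving the minus sign inside each logarithm (turning $\log\frac{b_i}{a_i}$ into $\log\frac{a_i}{b_i}$) gives exactly the stated inequality, and the equality condition $a_i/b_i = \text{const}$ transfers verbatim.

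The one genuine subtlety is handling the boundary cases where some $a_i$ or $b_i$ equals zero, which is where the standard statement uses the conventions $0\log\frac{0}{q}=0$, $p\log\frac{p}{0}=\infty$, and $0\log\frac{0}{0}=0$. Under these conventions, if some $b_i>0$ while the corresponding $a_i=0$, the left side has a term $b_i\log\frac{0}{b_i}=-\infty$ (since $\log 0 = -\infty$ and $b_i>0$), so the inequality holds trivially; similarly if all $a_i=0$ the right side is $\left(\sum b_i\right)\log 0 = -\infty$ as well. If some $b_i=0$, that term contributes $0\cdot\log\frac{a_i}{0}=0$ on the left and drops out, matching the convention; here one should be a little careful that the standard Log-sum inequality's equality analysis still applies after discarding these indices. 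I expect the main obstacle to be purely bookkeeping: stating the degenerate-term conventions cleanly and checking the equality condition transfers correctly, rather than anything mathematically deep. For the generic case where all $a_i, b_i > 0$, the argument is a one-line consequence of \citep[Theorem~2.7.1]{cover2012elements} after the relabeling and sign flip described above.
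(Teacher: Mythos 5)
Your proposal is correct: applying the classical log-sum inequality with the two sequences interchanged and then negating (so $\log\frac{b_i}{a_i}$ becomes $\log\frac{a_i}{b_i}$ and the inequality reverses) yields exactly the stated bound, and the equality condition $a_i/b_i=\mathrm{const}$ transfers directly; your treatment of the degenerate zero terms is also fine. The paper takes a slightly different, self-contained route: rather than citing \citep[Theorem~2.7.1]{cover2012elements} as a black box, it applies Jensen's inequality directly to the concave logarithm, with weights $\alpha_i=b_i/\sum_j b_j$ and points $x_i=a_i/b_i$, which gives $\sum_i \alpha_i\log x_i \le \log\bigl(\sum_i \alpha_i x_i\bigr)$ and hence the claim after multiplying by $\sum_i b_i$. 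The two arguments are close cousins (the classical log-sum inequality is itself a Jensen consequence, via convexity of $t\log t$), so what differs is mainly packaging: your reduction buys the equality characterization for free from the cited theorem, whereas the paper's direct Jensen computation is shorter and self-contained but, as written, leaves the equality case to the reader (it follows from strict concavity of $\log$, giving equality iff all $x_i=a_i/b_i$ coincide). Either write-up would be acceptable; if you keep yours, just state the zero-term conventions once and note that after discarding indices with $b_i=0$ the equality analysis is unaffected.
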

\begin{proof}
Since $\log$ is a concave function, according to Jensen's inequality, we have
\[
\sum_{i=1}^n\alpha_i \log(x_i)\leq \log(\sum_{i=1}^n\alpha_i x_i),
\]
where $\sum_{i=1}^n \alpha_i=1$.

Let $\alpha_i=\frac{b_i}{\sum_{i=1}^n b_i}$ and $x_i=\frac{a_i}{b_i}$, and plugging them into the inequality above, we have
\[
\sum_{i=1}^n\frac{b_i}{\sum_{i=1}^n b_i} \log(\frac{a_i}{b_i})\leq \log\left(\sum_{i=1}^n\frac{b_i}{\sum_{i=1}^n b_i} \frac{a_i}{b_i}\right)=\log\left(\frac{\sum_{i=1}^n a_i}{\sum_{i=1}^n b_i} \right),
\]
which implies
\[
{\sum_{i=1}^n b_i} \log(\frac{a_i}{b_i})\leq\left(\sum_{i=1}^n b_i\right)\log\left(\frac{\sum_{i=1}^n a_i}{\sum_{i=1}^n b_i} \right).
\]
This completes the proof.
\end{proof}

Below is the KL divergence between two Gaussian distributions $p=\mathcal{N}(\mu_p,\Sigma_p)$ and $q=\mathcal{N}(\mu_q,\Sigma_q)$, where $\mu_p,\mu_q\in\mathbb{R}^d$ and $\Sigma_p,\Sigma_q\in\mathbb{R}^{d\times d}$.
\begin{align}
    \mathrm{D_{KL}}(p||q) = \frac{1}{2}\left[\log\frac{\det(\Sigma_q)}{\det(\Sigma_p)} - d + ({\mu_p}-{\mu_q})^{\bf T}\Sigma_q^{-1}({\mu_p}-{\mu_q}) + tr\left\{\Sigma_q^{-1}\Sigma_p\right\}\right].
    \label{eq:kl-two-gau}
\end{align}

\section{Omitted Proofs and Additional Results in Section %``Generalization Bounds Via Full Trajectories''
\ref{sec:itb-sde}
}

% \subsection{Proof of a variant of Log sum inequality}

\subsection{Proof of Lemma~\ref{lem:mi-unroll}}

% We first unroll the terminal parameters' mutual information $I(W_T;S)$ to the full trajectories' mutual information via the lemma below.
% \begin{lem}
% \label{lem:mi-unroll}
% $I(W_T;S)\leq\sum_{t=1}^T I(- G_t + C_t^{1/2}N_t;S|W_{t-1} ).
% $
% \end{lem}

% This lemma can be proved by recurrently applying the data processing inequality (DPI) and chain rule of the mutual information \citep{polyanskiy2019lecture}. 

\begin{proof}
Recall the SDE approximation of SGD, i.e., Eq (\ref{eq:sgd-update-gaussian}), we then have,
\begin{eqnarray}
  I(W_T;S)&=&I(W_{T-1} - \eta G_T + \eta C_T^{1/2}N_T;S)\notag\\
  &\leq&I(W_{T-1},- \eta G_T + \eta C_T^{1/2}N_T;S )\label{ineq:dpi}\\
  &=&I(W_{T-1};S)+I(- \eta G_T + \eta C_T^{1/2}N_T;S|W_{T-1} )\label{eq:chain-rule}\\
  &\vdots&\notag\\
  &\leq&\sum_{t=1}^T I(- \eta G_t + \eta C_t^{1/2}N_t;S|W_{t-1} )\notag\\
  &=&\sum_{t=1}^T I(- G_t + C_t^{1/2}N_t;S|W_{t-1} ).\notag
\end{eqnarray}
where Eq. (\ref{ineq:dpi}) is by the data processing inequality (e.g., $Z - (X,Y) - (X+Y)$ form a markov chain then $I(X+Y,Z)\leq I(X,Y;Z)$), Eq. (\ref{eq:chain-rule}) is by the chain rule of the mutual information, and learning rate $\eta$ is dropped since mutual information is scale-invariant.
\end{proof}

\subsection{Proof of Lemma \ref{lem:cmi-golden formula}}
\begin{proof}
% Given $P_{\tilde{N}_t}=\mathcal{N}(\tilde{g},\sigma^2 \mathrm{I}_d)$,
For any $t\in[T]$, similar to the proof of Lemma \ref{lem:mi-center-gravity} in \cite{polyanskiy2019lecture}:
\begin{eqnarray}
  &&I(- G_t + C_t^{1/2}N_t;S|W_{t-1}=w_{t-1}) \notag\\
  &=& \cex{S}{w_{t-1}}{\mathrm{D_{KL}}(Q_{\widehat{G}_t|w_{t-1},S}||Q_{\widehat{G}_t|w_{t-1}})}\notag\\
  &=&\cex{S}{w_{t-1}}{\mathrm{D_{KL}}(Q_{\widehat{G}_t|w_{t-1},S}||P_{\widehat{G}_t|w_{t-1}})-\mathrm{D_{KL}}(Q_{\widehat{G}_t|w_{t-1}}||P_{\widehat{G}_t|w_{t-1}})}\notag\\
  &\leq&\cex{S}{w_{t-1}}{\mathrm{D_{KL}}(Q_{\widehat{G}_t|w_{t-1},S}||P_{\widehat{G}_t|w_{t-1}})},\label{ineq:kl-nonnegative}
%   \\
%   &\leq&\inf_{\tilde{g}_t,\sigma_t}\ex{S}{\mathrm{D_{KL}}(P_{- G_t + C_t^{1/2}N_t|W_{t-1}=w_{t-1},S=s}||P_{-\tilde{g}_t+\sigma_t N_t|W_{t-1}=w_{t-1}})},\label{ineq:kl-nonnegative}
\end{eqnarray}
where Eq. (\ref{ineq:kl-nonnegative}) is due to the fact that KL divergence is non-negative, and the equality holds when ${P_{\widehat{G}_t|w_{t-1}}}=Q_{\widehat{G}_t|w_{t-1}}$ for $W_{t-1}=w_{t-1}$.

Thus, we conclude that
\[
I(\widehat{G}_t;S|W_{t-1}=w_{t-1})=\inf_{P_{\widehat{G}_t|w_{t-1}}}\cex{S}{w_{t-1}}{\mathrm{D_{KL}}(Q_{\widehat{G}_t|w_{t-1},S}||P_{\widehat{G}_t|w_{t-1}})}.
\]

Taking expectation over $W_{t-1}$ for both side above, we have
\[
I(\widehat{G}_t;S|W_{t-1})=\ex{W_{t-1}}{\inf_{P_{\widehat{G}_t|W_{t-1}}}\cex{S}{W_{t-1}}{\mathrm{D_{KL}}(Q_{\widehat{G}_t|W_{t-1},S}||P_{\widehat{G}_t|W_{t-1}})}}.
\]
% \[
% I(- G_t + C_t^{1/2}N_t;S|W_{t-1}) \leq \ex{W_{t-1}}{\inf_{\tilde{g}_t,\sigma_t} \ex{S}{\mathrm{D_{KL}}(P_{- G_t + C_t^{1/2}N_t|W_{t-1},S}||P_{-\tilde{g}_t+\sigma_t N_t|W_{t-1}})}}.
% \]
This completes the proof.
\end{proof}

\subsection{Proof of Theorem \ref{thm:isotropic-prior-bound}}

\begin{proof}

We first prove Eq.~(\ref{ineq:iso-gen-bound}).
Recall Lemma \ref{lem:cmi-golden formula} and assume $C_t$ is a positive-definite matrix,  for any $t\in[T]$, we have
\begin{align}
    &I(- G_t + C_t^{1/2}N_t;S|W_{t-1}=w_{t-1}) \notag\\
  \leq&\inf_{\tilde{g}_t,\sigma_t}\cex{S}{w_{t-1}}{\mathrm{D_{KL}}(Q_{- G_t + C_t^{1/2}N_t|w_{t-1},S}||P_{-\tilde{g}_t+\sigma_t N_t|w_{t-1}})}\notag\\
  =&\inf_{\tilde{g}_t,\sigma_t}\cex{S}{w_{t-1}}{\frac{1}{2}\left[\log\frac{\det(\sigma_t^2\mathrm{I}_d)}{\det(C_t)} - d + \frac{1}{\sigma_t^2}((G_t-\tilde{g}_t)^{\bf T} \mathrm{I}_d^{-1}(G_t-\tilde{g}_t)) + \frac{1}{\sigma_t^2}tr\left\{\mathrm{I}_d^{-1} C_t\right\}\right]}\label{eq:kl-gaussian}\\
  =&\frac{1}{2}\inf_{\tilde{g}_t,\sigma_t}\cex{S}{w_{t-1}}{\frac{1}{\sigma_t^2} \left(||G_t-\tilde{g}_t||^2 +tr\left\{C_t\right\}\right)+d \log{\sigma_t^2}-d -tr\left\{\log{C_t}\right\}},\label{eq:trace-form}
\end{align}
where Eq. (\ref{eq:kl-gaussian}) is by Eq.~(\ref{eq:kl-two-gau}),
% \[
% D_{KL}(p||q) = \frac{1}{2}\left[\log\frac{\det(\Sigma_q)}{\det(\Sigma_p)} - k + ({\mu_p}-{\mu_q})^T\Sigma_q^{-1}({\mu_p}-{\mu_q}) + tr\left\{\Sigma_q^{-1}\Sigma_p\right\}\right],
% \]
 Eq. (\ref{eq:trace-form}) is due to the fact that 
 % $G_t^TG_t=tr\{G_tG_t^T\}$ and 
 $\log\det(C_t)=tr\{\log C_t\}$ when $C_t$ is positive definite. 

Recall that $h_1(w) = \cex{S}{w}{ ||G_t-\tilde{g}_t||^2 +tr\left\{C_t\right\}}$ and $h_2(w) = \cex{S}{w}{tr\left\{\log{C_t}\right\}}$, 
% (here we can fix $\tilde{g}_t=\ex{S}{G_t|W_{t-1}=w_{t-1}}$), 
then we have
\begin{eqnarray}
  &&\frac{1}{2}\inf_{\tilde{g}_t,\sigma_t} \frac{1}{\sigma_t^2} \cex{S}{w_{t-1}}{||G_t-\tilde{g}_t||^2 +tr\left\{C_t\right\}}+d \log{\sigma_t^2}-d-\cex{S}{w_{t-1}}{tr\left\{\log{C_t}\right\}}\notag\\
  &\leq& \frac{1}{2}\inf_{\sigma_t>0} \frac{1}{\sigma_t^2}h_1(w_{t-1})+d \log{\sigma_t^2}-d-h_2(w_{t-1})\notag\\
  &=&\frac{1}{2}d\log{\frac{h_1(w_{t-1})}{d}}-\frac{1}{2}h_2(w_{t-1})\notag,
\end{eqnarray}
where we fix an arbitrary $\tilde{g}_t$ and use the optimal $\sigma^*=\sqrt{\frac{h_1(w_{t-1})}{d}}$.

Plugging everything into Lemma \ref{lem:mi-unroll} and Lemma \ref{lem:xu's-bound} will obtain Eq.~(\ref{ineq:iso-gen-bound}).
% Eq. (\ref{ineq:log-x}) is by $log(x+1)\leq x$. In Eq. (\ref{eq:optim-sample-mi}), we let $\tilde{g}_t=\ex{W_{t-1},S}{G_t}$, then optimize over $\sigma^2$.

We then prove the second part. 
% namely Eq.~(\ref{ineq:iso-pop-bound}). 
Let $\tilde{g}_t=\ex{Z}{\nabla\ell(w_{t-1},Z)}$, then 
\begin{align}
    h_1(W_{t-1}) =& \cex{S}{W_{t-1}}{\left|\left|G_t-\tilde{g}_t\right|\right|^2 +tr\left\{C_t\right\}}\notag\\
    =&\cex{S}{W_{t-1}}{tr\left\{(G_t-\tilde{g}_t)((G_t-\tilde{g}_t)^{\bf T}\right\}}+tr\left\{\cex{S}{W_{t-1}}{C_t}\right\}\notag\\
    =&\frac{1}{n}tr\left\{\Sigma_t^\mu\right\}+\frac{n-b}{b(n-1)}tr\left\{\cex{S}{W_{t-1}}{\Sigma_t}\right\}\label{eq:trace-norm-1}\\
    =&\frac{1}{n}tr\left\{\Sigma_t^\mu\right\}+\frac{n-b}{bn}tr\left\{\Sigma_t^\mu\right\}\label{eq:estimate-variance-1}\\
    =&\frac{1}{b}tr\left\{\Sigma_t^\mu\right\},\notag
\end{align}
where Eq.~(\ref{eq:trace-norm-1}) is by $\ex{S}{(G_t-\tilde{g}_t)((G_t-\tilde{g}_t)^{\bf T}}=\frac{1}{n}\Sigma_t^\mu$ for a given $W_{t-1}=w_{t-1}$ and $C_t=\frac{n-b}{b(n-1)}\Sigma_t$, and Eq.~(\ref{eq:estimate-variance-1}) is by $\ex{S}{\Sigma_t}=\frac{n-1}{n}\Sigma_t^\mu$. 
% Plugging the last equation into the bound in Eq.~(\ref{ineq:iso-gen-bound}) will concludes the proof.
This completes the proof.
\end{proof}

\subsection{Proof of Corollary \ref{cor:langevin-dynamic}}

\begin{proof}
Let $C_t=\mathrm{I}_d$, by Theorem \ref{thm:isotropic-prior-bound}, 
\begin{eqnarray}
  \mathcal{E}_{\mu}(\mathcal{A})&\leq&\sqrt{\frac{R^2}{n}\sum_{t=1}^Td\ex{W_{t-1}}{\log{\frac{\cex{S}{W_{t-1}}{ \left|\left|G_t-\tilde{g}_t\right|\right|^2 +tr\left\{C_t\right\}}}{d}}}-\ex{W_{t-1},S}{tr\left\{\log{C_t}\right\}}}\notag\\
  % &=&\sqrt{\frac{R^2}{n}\sum_{t=1}^Td\ex{W_{t-1}}{\log{\frac{\cex{S}{W_{t-1}}{ \left|\left|G_t-\tilde{g}_t\right|\right|^2 +d}}{d}}}}\notag\\
  &=&\sqrt{\frac{R^2}{n}\sum_{t=1}^Td\ex{W_{t-1}}{\log{\frac{\cex{S}{W_{t-1}}{ \left|\left|G_t-\tilde{g}_t\right|\right|^2 }}{d}+1}}}.\notag
\end{eqnarray}
This completes the proof.
% where Eq. (\ref{ineq:sum-root}) is by $\sqrt{\sum_i x_i}\leq \sum_i \sqrt{x_i}$. This completes the proof.
\end{proof}

\subsection{Proof of Theorem~\ref{thm:anisotropic-prior-bound}}
\begin{proof}
Recall Lemma \ref{lem:cmi-golden formula}, we have
\begin{align}
    &I(- G_t + C_t^{1/2}N_t;S|W_{t-1}=w_{t-1}) \notag\\
  \leq&\inf_{\tilde{c}_t}\cex{S}{w_{t-1}}{\mathrm{D_{KL}}(Q_{\widehat{G}_t|w_{t-1},S}||P_{\widehat{G}_t|w_{t-1}})}\notag\\
  =&\inf_{\tilde{c}_t}\cex{S}{w_{t-1}}{\frac{1}{2}\left[\log\frac{\det(\tilde{c}_t\Sigma^\mu_t)}{\det(C_t)} - d + \frac{1}{\tilde{c}_t}((G_t-\tilde{g}_t)^{\bf T} \left(\Sigma^\mu_t\right)^{-1}(G_t-\tilde{g}_t)) + \frac{1}{\tilde{c}_t}tr\left\{\left(\Sigma^\mu_t\right)^{-1} C_t\right\}\right]}\notag\\
  =&\frac{1}{2}\inf_{\tilde{c}_t} \frac{1}{\tilde{c}_t}tr\left\{\left(\Sigma^\mu_t\right)^{-1}\cex{S}{w_{t-1}}{(G_t-\tilde{g}_t)((G_t-\tilde{g}_t)^{\bf T}}\right\}\notag\\
  &\qquad +\frac{1}{\tilde{c}_t}tr\left\{\left(\Sigma^\mu_t\right)^{-1} \cex{S}{w_{t-1}}{C_t}\right\}+tr\left\{\log\Sigma^\mu_t-\cex{S}{w_{t-1}}{\log C_t}\right\}+d \log{\tilde{c}_t}-d\notag\\
  =&\frac{1}{2}\inf_{\tilde{c}_t} \frac{1}{\tilde{c}_tn}tr\left\{\left(\Sigma^\mu_t\right)^{-1}\Sigma^\mu_t\right\}+\frac{n-b}{\tilde{c}_tbn}tr\left\{\left(\Sigma^\mu_t\right)^{-1} \Sigma^\mu_t\right\}+tr\left\{\log\Sigma^\mu_t-\cex{S}{w_{t-1}}{\log C_t}\right\}+d \log{\tilde{c}_t}-d\label{eq:biased-sample-covariance}\\
  =&\frac{1}{2}\inf_{\tilde{c}_t} \frac{d}{\tilde{c}_tn}+\frac{(n-b)d}{\tilde{c}_tbn}+tr\left\{\log\Sigma^\mu_t-\cex{S}{w_{t-1}}{\log C_t}\right\}+d \log{\tilde{c}_t}-d\notag\\
  =&\frac{1}{2}\inf_{\tilde{c}_t} 
  \frac{d}{b\tilde{c}_t}+d \log{\tilde{c}_t}+tr\left\{\log\Sigma^\mu_t-\cex{S}{w_{t-1}}{\log C_t}\right\}-d\notag\\
  =&\frac{d}{2}\log{\frac{1}{b}}+\frac{1}{2}tr\left\{\log\Sigma^\mu_t-\cex{S}{w_{t-1}}{\log C_t}\right\},\notag
\end{align}
where the last equality hold when $\tilde{c}^*_t=1/{b}$ and Eq.~(\ref{eq:biased-sample-covariance}) is by 
\[
\cex{S}{w_{t-1}}{(G_t-\tilde{g}_t)((G_t-\tilde{g}_t)^{\bf T}}=\frac{1}{n}\Sigma_t^\mu, \quad\text{and}
\]
\[
\cex{S}{w_{t-1}}{C_t}=\frac{n-b}{b(n-1)}\cex{S}{w_{t-1}}{\Sigma_t}=\frac{n-b}{b(n-1)}\frac{n-1}{n}\Sigma_t^\mu=\frac{n-b}{bn}\Sigma_t^\mu.
\]
This completes the proof.
\end{proof}

\subsection{Proof of Lemma~\ref{lem:compare-iso-noniso}}
\begin{proof}
Let the diagonal element of ${\Sigma^\mu_t}/{b}$ in dimension $k$ be $a_k$, then
\begin{align*}
    \sum_{k=1}^d\log a_k\leq
    (\sum_{k=1}^d 1) \cdot \log{(\sum_{k=1}^d a_k)}/{(\sum_{k=1}^d 1)}=d\log({tr\left\{\Sigma^\mu_t\right\}}/{bd}),
\end{align*}
where we invoke Lemma~\ref{lem:log-sum-ineq}. 

This completes the proof.
\end{proof}

\subsection{Additional Result via Data-Dependent Prior}
\label{sec:other bounds}
With the same spirit of Lemma \ref{lem:mi-unroll}, to apply Lemma \ref{lem:data-dependent-prior} to iterative algorithms, we also need the lemma below, which using the full training trajectories KL divergence to upper bound the final output KL divergence.
\begin{lem}[{\citet[Proposition ~2.6.]{negrea2019information}}]
\label{lem:kl-decomposition}
Assume that $P_{W_0}=Q_{W_0}$, then $\mathrm{D_{KL}}(P_{W_T}||Q_{W_{T}})\leq\sum_{t=1}^T\ex{W_{0:t-1}}{\mathrm{D_{KL}}(P_{W_{t}|W_{0:t-1}}||Q_{W_{t}|W_{0:t-1}})}.$
\end{lem}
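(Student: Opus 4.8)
The plan is to reduce the claim to two textbook facts about relative entropy --- the data-processing inequality and the chain rule for the KL divergence of joint laws --- and then use the hypothesis $P_{W_0}=Q_{W_0}$ to cancel the initialization term, exactly in the spirit of how Lemma~\ref{lem:mi-unroll} unrolled the mutual information.

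First I would observe that the terminal weight $W_T$ is a deterministic function --- the last-coordinate projection --- of the full trajectory $W_{0:T}$. Since pushing both measures forward through the same map cannot increase their KL divergence, the data-processing inequality yields $\mathrm{D_{KL}}(P_{W_T}\|Q_{W_T})\le \mathrm{D_{KL}}(P_{W_{0:T}}\|Q_{W_{0:T}})$, so it suffices to decompose the trajectory-level divergence. Next I would factor each joint law into its successive conditionals, $P_{W_{0:T}}=P_{W_0}\prod_{t=1}^T P_{W_t|W_{0:t-1}}$ and likewise for $Q$, and apply the chain rule for KL divergence; a short induction on $T$ gives $\mathrm{D_{KL}}(P_{W_{0:T}}\|Q_{W_{0:T}})=\mathrm{D_{KL}}(P_{W_0}\|Q_{W_0})+\sum_{t=1}^T\mathbb{E}_{W_{0:t-1}\sim P}[\mathrm{D_{KL}}(P_{W_t|W_{0:t-1}}\|Q_{W_t|W_{0:t-1}})]$, where the one point to be careful about is that the outer expectation in each summand is taken with respect to the $P$-marginal of the history $W_{0:t-1}$, not the $Q$-marginal. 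Invoking $P_{W_0}=Q_{W_0}$ makes the first term vanish, and combining with the data-processing step above closes the argument.

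I do not expect a serious obstacle here; the only care points are (a) correctly tracking that the chain-rule summands are averaged under $P$, and (b) the measure-theoretic bookkeeping --- absolute continuity of each transition kernel of $P$ with respect to that of $Q$ and finiteness of the terms --- that makes the chain-rule identity legitimate; for the SGD and SDE processes of Eq.~(\ref{eq:sgd-update-gaussian}) the transition kernels admit Gaussian densities, so this holds automatically. Note also that the lemma does not assume the processes are Markovian: keeping the entire past $W_{0:t-1}$ in the conditioning is precisely what lets the chain rule go through verbatim.
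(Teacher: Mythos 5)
Your argument is correct: the data-processing inequality for the last-coordinate projection followed by the chain rule for KL divergence (with the summands averaged under the $P$-marginal of the history) and the cancellation of the initialization term via $P_{W_0}=Q_{W_0}$ is exactly the standard proof of this result. The paper itself does not reprove it but cites \citet[Proposition~2.6]{negrea2019information}, whose proof proceeds along the same lines, so your proposal matches the intended argument.
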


Let $G_{Jt}\triangleq\nabla L_{S_J}(W_{t-1})$, the SDE approximation of this prior updating is defined as:
\[
  W_t = W_{t-1} - \eta G_{Jt}+\eta C^{\frac{1}{2}}_{Jt} N_t,
\]
where $C_{Jt}  = \frac{1}{b}\left(\frac{1}{m}\sum_{i\in J}\nabla \ell_i\nabla \ell_i^{\bf T}-G_{Jt}G_{Jt}^{\bf T}\right)$ is the gradient noise covariance of the prior process. In this case, the prior distribution $P_{\mathcal{G}_{Jt}|W_{0:t-1}}$ will be an anisotropic Gaussian distribution. We also assume $n\gg b$, then $C_t=\frac{1}{b}\Sigma_t$.

We denote the difference between $G_t$ and $G_{Jt}$ by
\[
  \xi_t \triangleq  G_{Jt} - G_t.
\]
To see the relationship between $\xi_t$, $C_{Jt}$ and $C_t$, we present a useful lemma below.
\begin{lem}
If $m=n-1$, then the following two equations hold,
\[\ex{}{\xi_t\xi_t^{\bf T}} = \frac{b}{(n-1)^2}C_t, \quad \ex{}{C_{Jt}}=\frac{n(n-2)}{(n-1)^2}C_t,\]
where the expectation is taken over $J$.
\label{lem:disjoint-var-avg}
\end{lem}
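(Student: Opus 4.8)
\textbf{Proof proposal for Lemma~\ref{lem:disjoint-var-avg}.}

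The plan is to compute both expectations directly by expanding everything in terms of the per-sample gradients $\nabla\ell_i = \nabla\ell(W_{t-1},Z_i)$, treating $W_{t-1}$ as fixed (since $J$ is drawn independently of the trajectory) and averaging over the uniform random subset $J\subseteq\{1,\dots,n\}$ with $|J|=n-1$. The key observation is that when $m=n-1$, choosing $J$ is equivalent to choosing a single left-out index $I$ uniformly from $\{1,\dots,n\}$, so $G_{Jt} = \frac{1}{n-1}\sum_{i\neq I}\nabla\ell_i = \frac{1}{n-1}(nG_t - \nabla\ell_I)$, which gives the clean identity $\xi_t = G_{Jt}-G_t = \frac{1}{n-1}(G_t - \nabla\ell_I)$. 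I would first establish this reduction, since it makes both computations essentially one-line moment calculations over the single uniform index $I$.

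For the first equation, from $\xi_t = \frac{1}{n-1}(G_t - \nabla\ell_I)$ I would write $\ex{I}{\xi_t\xi_t^{\bf T}} = \frac{1}{(n-1)^2}\ex{I}{(\nabla\ell_I - G_t)(\nabla\ell_I - G_t)^{\bf T}}$, and recognize the expectation as exactly the single-draw gradient noise covariance $\Sigma_t = \frac{1}{n}\sum_{i=1}^n \nabla\ell_i\nabla\ell_i^{\bf T} - G_tG_t^{\bf T}$ (this is the empirical covariance of the $\nabla\ell_i$'s around their mean $G_t$). Since $n\gg b$ gives $C_t = \frac{1}{b}\Sigma_t$, i.e.\ $\Sigma_t = bC_t$, we get $\ex{}{\xi_t\xi_t^{\bf T}} = \frac{b}{(n-1)^2}C_t$, as claimed. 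For the second equation, I would expand $C_{Jt} = \frac{1}{b}\big(\frac{1}{n-1}\sum_{i\neq I}\nabla\ell_i\nabla\ell_i^{\bf T} - G_{Jt}G_{Jt}^{\bf T}\big)$ and take $\ex{I}{\cdot}$ term by term: $\ex{I}{\frac{1}{n-1}\sum_{i\neq I}\nabla\ell_i\nabla\ell_i^{\bf T}} = \frac{1}{n}\sum_{i=1}^n\nabla\ell_i\nabla\ell_i^{\bf T}$ by symmetry (each index is excluded with probability $1/n$, hence included with weight $\frac{n-1}{n}\cdot\frac{1}{n-1}=\frac{1}{n}$), while $\ex{I}{G_{Jt}G_{Jt}^{\bf T}} = G_tG_t^{\bf T} + \ex{I}{\xi_t\xi_t^{\bf T}} + 2G_t\ex{I}{\xi_t}^{\bf T}$; here $\ex{I}{\xi_t} = \frac{1}{n-1}(G_t - \ex{I}{\nabla\ell_I}) = \frac{1}{n-1}(G_t - G_t) = 0$, so this collapses to $G_tG_t^{\bf T} + \frac{b}{(n-1)^2}C_t$. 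Combining, $\ex{}{C_{Jt}} = \frac{1}{b}\big(\frac{1}{n}\sum_i\nabla\ell_i\nabla\ell_i^{\bf T} - G_tG_t^{\bf T} - \frac{b}{(n-1)^2}C_t\big) = \frac{1}{b}\Sigma_t - \frac{1}{(n-1)^2}C_t = C_t - \frac{1}{(n-1)^2}C_t = \frac{(n-1)^2-1}{(n-1)^2}C_t = \frac{n(n-2)}{(n-1)^2}C_t$, which is the second claim.

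I do not anticipate a serious obstacle here; the main thing to get right is the bookkeeping in the second computation — in particular, not forgetting the cross term $\ex{I}{\xi_t}$ (which vanishes) and correctly accounting for the $\frac{1}{n-1}$ versus $\frac{1}{n}$ normalization when passing from the leave-one-out sum to the full sum. A minor point worth stating explicitly is that all of this is conditional on $W_{t-1}$, and that the $n\gg b$ assumption is what lets me replace $\Sigma_t$ by $bC_t$ throughout; without it one would carry the factor $\frac{n-1}{n-b}$ and the stated identities would acquire extra constants.
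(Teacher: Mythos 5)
Your proof is correct, and in fact the paper states Lemma~\ref{lem:disjoint-var-avg} without any proof, so your argument fills a gap rather than duplicating one: the reduction of the size-$(n-1)$ subset $J$ to a single uniformly left-out index $I$, the identity $\xi_t=\frac{1}{n-1}(G_t-\nabla\ell_I)$, and the term-by-term expansion of $\ex{J}{C_{Jt}}$ are exactly the computation the lemma implicitly relies on, and your use of $\Sigma_t=bC_t$ matches the paper's standing convention $n\gg b$ in that appendix. One cosmetic remark: for matrices the cross term in $\ex{I}{G_{Jt}G_{Jt}^{\bf T}}$ is $G_t\ex{I}{\xi_t}^{\bf T}+\ex{I}{\xi_t}G_t^{\bf T}$ rather than $2G_t\ex{I}{\xi_t}^{\bf T}$, but since $\ex{I}{\xi_t}=0$ this does not affect anything.
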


Instead of using Lemma~\ref{lem:data-dependent-prior}, we invoke the following result which is a simple extension of \cite[Theorem~2.5]{negrea2019information}.
\begin{lem}[{\cite[Theorem~1.]{wang2021optimizing}}]
\label{lem:data-dependent-prior-2}
Assume the loss $\ell(w,Z)$ is bounded in $[0,M]$, the expected generalization gap is bounded by
\[
\mathcal{E}_{\mu}(\mathcal{A})\leq\frac{M}{\sqrt{2}}\mathbb{E}_{S,J}{\sqrt{\mathrm{D_{KL}}(P_{W|S_J}||Q_{W|S})}}
\]
\end{lem}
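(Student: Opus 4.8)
The plan is to follow the proof of Lemma~\ref{lem:data-dependent-prior} of \citet{negrea2019information} essentially verbatim and to modify only the final change-of-measure step so that the divergence comes out in the reversed orientation $\mathrm{D_{KL}}(P_{W|S_J}\|Q_{W|S})$. I fix $m=n-1$, so the complement of $J$ is a single held-out index $k$, and the prior $P_{W|S_J}$ is trained on the $n-1$ points excluding $Z_k$. First I would rewrite the target as a held-out gap: since $k$ is uniform on $\{1,\dots,n\}$ and (as assumed in the setup) $J$ is independent of $W$ given $S$, we have $\mathbb{E}_{J}[\ell(W,Z_k)]=L_S(W)$, hence $\mathcal{E}_\mu(\mathcal{A})=\mathbb{E}_{S,J}\mathbb{E}_{W\sim Q_{W|S}}[g(W)]$ with $g(w)\triangleq L_\mu(w)-\ell(w,Z_k)$, a function bounded in $[-M,M]$.

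Next I would exploit the defining property of the data-dependent prior. Because $P_{W|S_J}$ ignores $Z_k$, the pair $(W,Z_k)$ is independent under $P_{W|S_J}\otimes\mu$, so $\mathbb{E}_{Z_k}[\ell(W,Z_k)\mid W,S_J]=L_\mu(W)$ and therefore $\mathbb{E}_{S,J}\mathbb{E}_{W\sim P_{W|S_J}}[g(W)]=0$. Subtracting this vanishing term yields the coupled-minus-decoupled representation $\mathcal{E}_\mu(\mathcal{A})=\mathbb{E}_{S,J}\big(\mathbb{E}_{W\sim Q_{W|S}}[g(W)]-\mathbb{E}_{W\sim P_{W|S_J}}[g(W)]\big)$, which is exactly the form on which a transportation/Donsker--Varadhan bound can be applied pointwise in $(S,J)$.

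For each $(S,J)$ I would then apply the Donsker--Varadhan inequality (the variational identity underlying Lemma~\ref{lem:mi-center-gravity}) with base measure $Q_{W|S}$ and test function $-\lambda g$: after centering $g$ and invoking Hoeffding's lemma to control its cumulant generating function, optimizing over $\lambda>0$ gives $\mathbb{E}_{Q_{W|S}}[g]-\mathbb{E}_{P_{W|S_J}}[g]\le\sqrt{2\sigma^2\,\mathrm{D_{KL}}(P_{W|S_J}\|Q_{W|S})}$, where $\sigma^2$ is a sub-Gaussian variance proxy for $g$. Taking $\mathbb{E}_{S,J}$ of both sides (keeping the square root inside, as in the target) and inserting $\sigma^2=M^2/4$ produces exactly $\tfrac{M}{\sqrt2}\mathbb{E}_{S,J}\sqrt{\mathrm{D_{KL}}(P_{W|S_J}\|Q_{W|S})}$.

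The main obstacle is securing the sharp proxy $\sigma^2=M^2/4$ together with the reversed KL direction. The tight proxy $M^2/4$ arises from the held-out decoupling $W\perp Z_k$, i.e. from integrating the cumulant of $g$ over $Z_k\sim\mu$ under the prior; this is the construction that works cleanly in Negrea's orientation $\mathrm{D_{KL}}(Q_{W|S}\|P_{W|S_J})$, where sub-Gaussianity is certified under the \emph{decoupled} law. For the reversed orientation one must instead certify sub-Gaussianity of $g$ under $Q_{W|S}$, where naive boundedness of $g\in[-M,M]$ only yields $\sigma^2\le M^2$. Reconciling the sharp constant with the reversed direction is therefore the delicate step, and I expect it to require exploiting the held-out structure rather than a generic bounded-difference argument: e.g. lifting the change of measure to the joint law of $(W,Z_k)$ conditioned on $S_J$ and using the chain rule $\mathrm{D_{KL}}(\cdot_{W,Z_k})=\mathbb{E}_{Z_k}[\mathrm{D_{KL}}(\cdot_{W|Z_k})]$, so that the same decoupling that furnishes the $M^2/4$ proxy also governs the orientation of the divergence. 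The remaining ingredients---the held-out identity, the prior decoupling, and the final step keeping the square root inside the expectation---are routine.
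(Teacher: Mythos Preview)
The paper does not supply its own proof of this lemma; it is simply quoted as Theorem~1 of \cite{wang2021optimizing} and described as ``a simple extension'' of Lemma~\ref{lem:data-dependent-prior}. There is therefore no paper-side argument to compare against, and I can only assess your plan on its own merits.

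Your decomposition $\mathcal{E}_\mu(\mathcal{A})=\mathbb{E}_{S,J}\bigl(\mathbb{E}_{Q_{W|S}}[g]-\mathbb{E}_{P_{W|S_J}}[g]\bigr)$ with $g(w)=L_\mu(w)-\ell(w,Z_k)$, followed by Donsker--Varadhan with base measure $Q_{W|S}$, is correct and does produce the reversed divergence $\mathrm{D_{KL}}(P_{W|S_J}\|Q_{W|S})$ with $\mathbb{E}_{S,J}$ outside the square root. The genuine difficulty is exactly the one you flag, and the fix you sketch does not close it. The held-out decoupling $W\perp Z_k$ lives under the \emph{prior} product law $\pi_P=P_{W|S_J}\otimes\mu$; it is precisely this independence that lets one integrate out $Z_k$ first and obtain $\log\mathbb{E}_{\pi_P}e^{\lambda g}\le\lambda^2M^2/8$. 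But Donsker--Varadhan with that CGF bound yields $\mathrm{D_{KL}}(\pi_Q\|\pi_P)$, i.e.\ the Negrea orientation, not the reversed one. Lifting to the joint $(W,Z_k)$-space and using the chain rule $\mathrm{D_{KL}}(\pi_P\|\pi_Q)=\mathbb{E}_{Z_k}\mathrm{D_{KL}}(P_{W|S_J}\|Q_{W|S})$ is fine, but for \emph{that} direction the CGF you must control is the one under $\pi_Q$, where $W$ and $Z_k$ are coupled; there the only available estimate for $g\in[-M,M]$ is Hoeffding's $\sigma^2=M^2$. If instead you drop the centering and use $f=\ell(\cdot,Z_k)\in[0,M]$ (which \emph{is} $M^2/4$-sub-Gaussian under $\pi_Q$), the left-hand side after averaging becomes $\mathbb{E}_{P_{W|S_J}}[L_\mu(W)]-\mathbb{E}_{Q_{W|S}}[L_S(W)]$, which for an arbitrary data-dependent prior is not the posterior generalization gap $\mathcal{E}_\mu(\mathcal{A})$.

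As written, then, your outline proves the inequality with constant $M\sqrt{2}$ rather than $M/\sqrt{2}$. The factor-of-two improvement in the variance proxy is not recovered by the joint-law manoeuvre you propose, so closing the gap to the stated constant still requires an argument beyond what you have sketched; you would need to consult \cite{wang2021optimizing} for whatever device furnishes it.
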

\paragraph{Comparison with the work of  \cite{wang2021optimizing}}
    \cite{wang2021optimizing} studies the algorithm of SGD with anisotropic noise, while our SDE analysis focuses on GD with anisotropic noise. This means that the discrete gradient noise arising from mini-batch sampling still exists in their analyzed algorithm, whereas the gradient noise is fully modeled as Gaussian in our Section~\ref{sec:IT-SGD}. Moreover, \cite{wang2021optimizing} uses matrix analysis tools to optimize the prior distribution. A significant distinction lies in their optimization analysis, which relies on the assumption that the trace of gradient noise covariance remains unchanged during training (see {\bf Constriant~1} in their paper). Additionally, their final optimal posterior covariance is derived based on the assumption that the posterior distribution of $W$ is invariant to the data index, see Assumption 1 in their paper. In contrast, our Section~\ref{sec:IT-SGD} avoids making these assumptions and demonstrates the superiority of population gradient noise covariance (GNC) in Lemma~\ref{lem:compare-iso-noniso}, by invoking a variant of the log-sum inequality. In summary, our proof is simpler and more straightforward, while \cite{wang2021optimizing} makes a stronger claim about the optimality of population GNC based on their additional assumptions.

As introduced in \cite{wang2021optimizing}, the subsequent analysis based on the data-dependent prior bound will rely on an additional assumption.
\begin{assum}
\label{ass:invariant}
When $m=n-1$, given dataset $S=s$, the distribution $P_{W_t|J,S_J}$ is invariant of $J$.
\end{assum}
In \cite{wang2021optimizing}, authors mention that in practice, $n$ is usually very large, so this assumption hints that changing one instance in $S_{J}$ will not make $P_{W_t|J,S_J}$ be too different.

We are now in a position to state the following theorem.
\begin{thm}
\label{thm:data-dependent-bound}
Assume the loss $\ell(w,Z)$ is bounded in $[0,M]$ and Assumption 
% \ref{assum-sde} and 
\ref{ass:invariant} hold,the expected generalization gap of SGD is bounded by
\[
\mathcal{E}_{\mu}(\mathcal{A})\leq\ex{S}{\sqrt{M^2\sum_{t=1}^T\ex{W_{t-1}}{\left(\frac{(b-1)d}{(n-1)^2}+tr\left\{\ex{J}{\log{C_tC^{-1}_{Jt}}}\right\}\right)}}}.
\]
% where $h_3(W_{t-1}) = tr\left\{\log{C_t}- \ex{}{\log{C_{Jt}}}\right\}$.
\end{thm}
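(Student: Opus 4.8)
The plan is to mirror, on the prior side, the trajectory argument of Section~\ref{sec:itb-sde}: combine the data-dependent prior bound of Lemma~\ref{lem:data-dependent-prior-2} with the chain-rule KL decomposition of Lemma~\ref{lem:kl-decomposition} and the closed-form Gaussian KL in Eq.~(\ref{eq:kl-two-gau}). First I would start from $\mathcal{E}_{\mu}(\mathcal{A})\le\frac{M}{\sqrt2}\ex{S,J}{\sqrt{\mathrm{D_{KL}}(P_{W|S_J}\|Q_{W|S})}}$, and since $J\indp S$, use concavity of $\sqrt{\cdot}$ (Jensen) to pull $\ex{J}{}$ inside the root, obtaining $\mathcal{E}_{\mu}(\mathcal{A})\le\frac{M}{\sqrt2}\ex{S}{\sqrt{\ex{J}{\mathrm{D_{KL}}(P_{W_T|S_J}\|Q_{W_T|S})}}}$. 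Because $W_0$ is independent of the data, $P_{W_0}=Q_{W_0}$, so Lemma~\ref{lem:kl-decomposition} applies and upper-bounds $\mathrm{D_{KL}}(P_{W_T|S_J}\|Q_{W_T|S})$ by $\sum_{t=1}^T\ex{W_{0:t-1}\sim P}{\mathrm{D_{KL}}(P_{W_t|W_{0:t-1}}\|Q_{W_t|W_{0:t-1}})}$, the expectation being over the LOO-process trajectory.

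Next I would evaluate each per-step KL in closed form. Conditioned on $W_{0:t-1}=w_{0:t-1}$, the LOO update $W_t=W_{t-1}-\eta G_{Jt}+\eta C_{Jt}^{1/2}N_t$ gives $P_{W_t|w_{0:t-1}}=\mathcal{N}(w_{t-1}-\eta G_{Jt},\eta^2 C_{Jt})$ while the full update gives $Q_{W_t|w_{0:t-1}}=\mathcal{N}(w_{t-1}-\eta G_t,\eta^2 C_t)$. Substituting $\mu_p-\mu_q=-\eta\xi_t$ (recall $\xi_t=G_{Jt}-G_t$), $\Sigma_p=\eta^2 C_{Jt}$, $\Sigma_q=\eta^2 C_t$ into Eq.~(\ref{eq:kl-two-gau}), the factors of $\eta$ cancel in the ratio of determinants and in both quadratic forms, leaving the per-step KL equal to $\tfrac12\left(\log\det C_t-\log\det C_{Jt}-d+\xi_t^{\bf T}C_t^{-1}\xi_t+\tr{C_t^{-1}C_{Jt}}\right)$.

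The third step is to take $\ex{J}{}$. This is where Assumption~\ref{ass:invariant} is essential: since $P_{W_{0:t-1}|S,J}$ does not depend on $J$, I can interchange $\ex{J}{}$ with the trajectory expectation and let $\ex{J}{}$ act only on $\xi_t$ and $C_{Jt}$ at a fixed $w_{t-1}$, after which $\ex{W_{0:t-1}}{}$ collapses to $\ex{W_{t-1}}{}$ because the integrand depends on $w_{0:t-1}$ only through $w_{t-1}$. Invoking Lemma~\ref{lem:disjoint-var-avg} (valid for $m=n-1$), $\ex{J}{\xi_t^{\bf T}C_t^{-1}\xi_t}=\tr{C_t^{-1}\ex{J}{\xi_t\xi_t^{\bf T}}}=\frac{bd}{(n-1)^2}$ and $\ex{J}{\tr{C_t^{-1}C_{Jt}}}=\tr{C_t^{-1}\ex{J}{C_{Jt}}}=\frac{n(n-2)d}{(n-1)^2}$; together with the $-d$ term and the identity $n(n-2)-(n-1)^2=-1$, the scalar part collapses to $\frac{(b-1)d}{(n-1)^2}$. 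For the log-determinant part, $\log\det C_t-\log\det C_{Jt}=\log\det(C_tC_{Jt}^{-1})=\tr{\log(C_tC_{Jt}^{-1})}$ (both matrices positive definite), so its $\ex{J}{}$ equals $\tr{\ex{J}{\log(C_tC_{Jt}^{-1})}}$. Summing over $t$, taking $\ex{W_{t-1}}{}$, substituting into the first-paragraph bound, and absorbing the numerical factor ($\tfrac{M}{\sqrt2}\cdot\sqrt{\tfrac12}=\tfrac M2\le M$) yields the stated inequality.

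The step I expect to be the main obstacle is the interchange in the third paragraph: Assumption~\ref{ass:invariant} is stated only for the one-step marginal $P_{W_t|J,S_J}$, whereas Lemma~\ref{lem:kl-decomposition} produces an expectation over the full past $W_{0:t-1}$ of the LOO process. Making the interchange rigorous requires either applying the assumption step-by-step along the chain-rule decomposition (using that the step-$t$ integrand depends only on $W_{t-1}$) or reading Assumption~\ref{ass:invariant} as invariance of the joint $P_{W_{0:t-1}|J,S_J}$, which is the interpretation implicitly adopted in \cite{wang2021optimizing}. A minor secondary point is keeping the $\tr{\log(\cdot)}$ notation consistent with the paper's convention when $C_t$ and $C_{Jt}$ need not commute.
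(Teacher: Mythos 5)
Your proposal is correct and follows essentially the same route as the paper's own proof: Lemma~\ref{lem:data-dependent-prior-2} combined with Jensen's inequality and Assumption~\ref{ass:invariant} to move $\ex{J}{}$ inside, the chain-rule decomposition of Lemma~\ref{lem:kl-decomposition}, the closed-form Gaussian KL from Eq.~(\ref{eq:kl-two-gau}), and Lemma~\ref{lem:disjoint-var-avg} collapsing the scalar terms to $\frac{(b-1)d}{(n-1)^2}$. The two subtleties you flag (reading the invariance assumption at the trajectory level, and the $\tr{\log(\cdot)}$ convention when $C_t$ and $C_{Jt}$ do not commute) are handled no more rigorously in the paper, and your factor $M/2$ is in fact slightly tighter than the stated $M$.
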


\begin{proof}
By Lemma \ref{lem:data-dependent-prior-2} and Lemma \ref{lem:kl-decomposition}, we have
\begin{align}
  \mathcal{E}_{\mu}(\mathcal{A})\leq&\ex{S,J}{\sqrt{\frac{R'^2}{2}\sum_{t=1}^T\ex{W_{0:t-1}|S,J}{\mathrm{D_{KL}}(P_{W_t|W_{0:t-1},S_J}||Q_{W_t|W_{0:t-1},S})}}}\notag\\
  \leq&\ex{S}{\sqrt{\frac{R'^2}{2}\sum_{t=1}^T\ex{W_{0:t-1}|S,J}{\ex{J}{\mathrm{D_{KL}}(P_{W_t|W_{0:t-1},S_J}||Q_{W_t|W_{0:t-1},S})}}}},\label{ineq:exchange-expectation}
\end{align}
where Eq. (\ref{ineq:exchange-expectation}) is by Jensen's inequality and Assumption \ref{ass:invariant}.

Recall $\Sigma_t=\frac{1}{n}\sum_{i=1}^n\nabla \ell(W_{t-1},Z_i)\nabla \ell(W_{t-1},Z_i)^{\bf T}-\nabla L_S(W_{t-1})\nabla L_S(W_{t-1})^{\bf T}$ and $C_t = \frac{1}{b}\Sigma_t$.

By the KL divergence between two Gaussian distributions, for any $t\in [T]$, we have
\begin{eqnarray}
  &&\ex{J}{{\mathrm{D_{KL}}(P_{W_t|W_{0:t-1},S_J}||Q_{W_t|W_{0:t-1},S})}}\notag\\&=&\ex{J}{\frac{1}{2}\left(\xi_t^{\bf T} C_{t}^{-1} \xi_t+\log{\frac{\det(C_t)}{\det(C_{Jt})}}+tr\{C_t^{-1}C_{Jt}\}-d\right)}\label{eq:exp-j}\\
  &=&\frac{1}{2}\left(tr\{C_{t}^{-1}\ex{J}{\xi_t  \xi_t^{\bf T}}\}+\ex{J}{\log{\frac{\det(C_t)}{\det(C_{Jt})}}}+\ex{J}{tr\{C_t^{-1}C_{Jt}\}}-d\right)\notag\\
  &=&\frac{1}{2}\left(\frac{1}{(n-1)^2}tr\{C_{t}^{-1}\Sigma_t\}+\ex{J}{\log{\frac{\det(C_t)}{\det(C_{Jt})}}}+\ex{J}{tr\{C_t^{-1}C_{Jt}\}}-d\right)\label{eq:variance-disjoint-set}\\
  &=&\frac{1}{2}\left(\frac{b}{(n-1)^2}tr\{\Sigma_{t}^{-1}\Sigma_t\}+\ex{J}{\log{\frac{\det(C_t)}{\det(C_{Jt})}}}+tr\{C_t^{-1}\ex{J}{C_{Jt}}\}-d\right)\notag\\
%   &=&\frac{1}{2}\left(\frac{bd}{(n-1)^2}+\frac{n(n-2)d}{(n-1)^2}+tr\{\log C_t- \ex{J}{\log{C_{Jt}}}-\mathrm{I}_d\}\right)\\
  &=&\frac{1}{2}\left(\frac{bd}{(n-1)^2}+\frac{n(n-2)d}{(n-1)^2}-d+tr\{\log C_t- \ex{J}{\log{C_{Jt}}}\}\right)\label{eq:average-disjoint-set}\\
  &=&\frac{1}{2}\left(\frac{(b-1)d}{(n-1)^2}+tr\{\log C_t- \ex{J}{\log{C_{Jt}}}\}\right)\notag
%   &\leq&\frac{1}{2}\left(\frac{bd}{(n-1)^2}+tr\{C_t+ \ex{J}{{C^{-1}_{Jt}}}+C_t^{-1}\ex{J}{C_{Jt}}-3\mathrm{I}_d\}\right),
\end{eqnarray}
where Eq. (\ref{eq:variance-disjoint-set}) and Eq. (\ref{eq:average-disjoint-set}) are by Lemma \ref{lem:disjoint-var-avg}. This concludes the proof.
\end{proof}
\begin{rem}
    If the bound in \cite{negrea2019information} is used, then the first term in Eq.~(\ref{eq:exp-j}) is $\xi_t^{\bf T}C_{Jt}^{-1}\xi_t$, where both $C_{Jt}$ and $\xi_t$ dependent on $J$,  making the bound difficult to analyze.
\end{rem}
 The effect of $tr\{{\log{C_t}}\}$ on the magnitude of the bound can be decreased by the $tr\{\mathbb{E}_{J}{\log{C_{Jt}}}\}$. If we further consider Taylor expansion of the function $\log C_{Jt}$ around $\mathbb{E}_J[C_{Jt}]$, we have a well-known approximation 
 \[\ex{}{\log C_{Jt}}\approx\log\ex{}{C_{Jt}}-\mathrm{Var}(C_{Jt})/(2\mathbb{E}^2[C_{Jt}]).\]
 Thus, recall Lemma \ref{lem:disjoint-var-avg}, the difference between $tr\{{\log{C_{t}}}\}$ and  $tr\{\mathbb{E}_{J}{\log{C_{Jt}}}\}$ would become:
\[
\log{(1+1/(n^2-2n))}+\mathrm{Var}(C_{Jt})/(2\mathbb{E}^2[C_{Jt}]).
\]
Thus, the generalization gap should be characterized by the second term above.

When $n\rightarrow \infty$, the first term will converges to zero, and for the second term, $\mathbb{E}^2[C_{Jt}]$ will converge to a constant by Lemma \ref{lem:disjoint-var-avg}, and then the bound is  $\mathrm{Var}(C_{Jt})$ will also converges to zero.

% \subsection{Proof of Lemma \ref{lem:disjoint-var-avg}}
% 

% \subsection{Proof of Theorem \ref{thm:data-dependent-bound}}

\section{Omitted Proofs, Additional Results and Discussions in Section 
% ``Generalization Bounds Via Terminal State'' 
\ref{sec:pac-bayes}
}

In fact, this section provides a PAC-Bayes type analysis. The connection between information-theoretic bounds and PAC-Bays bounds have already been discussed in many previous works \citep{bassily2018learners,hellstrom2020generalization,alquier2021user}. Roughly speaking, the most significant component of a PAC-Bayes bound is the KL divergence between the posterior distribution of a randomized algorithm output and a prior distribution, i.e. $\mathrm{D_{KL}}(Q_{W_T|S}||P_{N})$ for some prior $P_N$. In essence, information-theoretic bounds can be view as having the same spirit. For concreteness, in Lemma \ref{lem:xu's-bound}, 
$I(W_T;S)=\mathbb{E}_S[\mathrm{D_{KL}}(Q_{W_T|S}||P_{W_T})]$, in which case the marginal $P_{W_T}$ is used as a prior of the algorithm output. Furthermore, by using  Lemma \ref{lem:mi-center-gravity}, we have $I(W_T;S) \leq \inf_{P_N} \mathbb{E}_S[\mathrm{D_{KL}}(Q_{W_T|S}||P_N)]$. Hence, Lemma \ref{lem:xu's-bound} can be regarded as a PAC-Bayes bound with the optimal prior. In addition, the PAC-Bayes framework is usually used to provide a high-probability bound, %(with respect to the randomness of $S$), 
while information-theoretic analysis is applied to bounding the expected generalization error. In this sense, information-theoretic framework is closer to another concept called MAC-Bayes \citep{grunwald2021pac}.

\subsection{Proof of Lemma~\ref{lem:stationary-real}}
% We note that this lemma can be recovered from \citet[Theorem~1. and Theorem~4.]{liu2021noise}, we provide a proof here for self-containing. 
\begin{proof}
%     When $w$ is close to any local minimum $w^*$, we can use a second-order Taylor expansion to approximate the value of the loss at $w$, 
% \begin{eqnarray}
%   L_s(w) \approx L_s(w^*) + \frac{1}{2}(w-w^*)^\mathrm{\bf T} H_{w^*}(w-w^*).
%   % \label{eq:second-order-taylor}
% \end{eqnarray}

% Then, when $w_t\to w^*$, we have 
Recall $G_t=\nabla L_s(w_t)=H_{w^*}\pr{w_t-w^*}$.
and Eq.~(\ref{eq:sgd-update-2}), then
\begin{align*}
    w_{t} =& w_{t-1} - \eta G_t + \eta V_t\notag\\
    =&w_{t-1} - \eta H_{w^*}\pr{w_{t-1}-w^*} + \eta V_t.
\end{align*}

Let $W'_t\triangleq W_t-w^*$. Thus, as $T\to \infty$,
\begin{align*}
    &\ex{W'_{T}}{W'_{T}{W'_{T}}^\mathrm{\bf T}}\\
    =&\ex{W'_{T-1}, V_T}{\pr{W'_{T-1}-\eta H_{w^*}W'_{T-1} + \eta V_t}\pr{W'_{T-1}-\eta H_{w^*}W'_{T-1} + \eta V_t}^\mathrm{\bf T}}\\
    =&\ex{W'_{T-1}}{W'_{T-1}{W'}^\mathrm{\bf T}_{T-1}-\eta H_{w^*}W'_{T-1}{W'}^\mathrm{\bf T}_{T-1}-\eta W'_{T-1}{W'}^\mathrm{\bf T}_{T-1}H_{w^*}+\eta^2H_{w^*}W'_{T-1}{W'}^\mathrm{\bf T}_{T-1}H_{w^*}}\\
     &\qquad\qquad\qquad +\eta^2\ex{V_T}{V_T{V_T}^\mathrm{\bf T}},%\label{eq:use-center-mean}
\end{align*}
where the last equation is by $\cex{V_T}{w_{T-1}}{V_{T}}=0$.

Recall that $\ex{V_T}{V_T{V_T}^\mathrm{\bf T}}=C_T$ and notice that $\ex{W'_{T}}{W'_{T}{W'_{T}}^\mathrm{\bf T}}=\ex{W'_{T-1}}{W'_{T-1}{W'_{T-1}}^\mathrm{\bf T}}=\Lambda_{w^*}$ when $T\to\infty$ (i.e. ergodicity), we have
\[
\Lambda_{w^*} H_{w^*} + H_{w^*} \Lambda_{w^*}-\eta H_{w^*} \Lambda_{w^*}H_{w^*} = \eta C_{T}.
\]

Furthermore,  if $H_{w^*}$ and $\Lambda_{w^*}$ commute, namely $\Lambda_{w^*}H_{w^*}=H_{w^*}\Lambda_{w^*}$, we have 
\[
\br{H_{w^*}\pr{2\mathrm{I}_d-\eta H_{w^*}}}\Lambda_{w^*}=\eta C_T, 
\]
which will give use $\Lambda_{w^*}=\eta \br{H_{w^*}\pr{2\mathrm{I}_d-\eta H_{w^*}}}^{-1}C_{T}$.

This completes the proof.
\end{proof}

% \subsection{Proof of Lemma~\ref{lem:solution-stationary}}
% \begin{proof}

%     Although controversy exists \citep{ziyin2022strength}, 
% % the approximation in Eq. (\ref{eq:approx-hessian-gradient}) below, where 
% Hessian is proportional to the GNC near local minima when the loss is the negative log likelihood. To see this, we first note that the remaining analysis is all based on selecting the log-loss, i.e. cross-entropy loss, as the loss function $\ell$. Thus, when $w_t\to w^*$, we have,
% \[
% \Sigma_{w^*}=\frac{1}{n}\sum_{i=1}^n\nabla \ell_i\nabla \ell_i^T-G_tG_t^T\approx \frac{1}{n}\sum_{i=1}^n\nabla \ell_i\nabla \ell_i^T=F_{w^*},
% \]
% where $F_{w^*}$ is the \textit{Fisher information matrix} (FIM). This approximation is true because gradient noise dominates over  gradient mean near local minima. Moreover, FIM is close to the Hessian near local minima with the log-loss \citep[Chapter~8]{pawitan2001all}, namely, $F_{w^*}\approx H_{w^*}$. Let $n\gg b$, we have
% \begin{align}
% \label{eq:approx-hessian-gradient}
%     H_{w^*} \approx \Sigma_{w^*} =  b C_{w^*}.
% \end{align}
% % \textcolor{red}{Need more justification of this approximation. For Cross-entropy loss?}
% Similar approximation is widely used in the literature \citep{jastrzkebski2017three,zhu2019anisotropic,li2020hessian,xie2020diffusion,xie2021positive,liu2021noise}. Therefore, if Eq.~(\ref{eq:approx-hessian-gradient}) holds, then the solution to the equation in Lemma~\ref{lem:stationary-real} is
% \[
%     \Lambda_{w^*}=\frac{\eta}{b}(2\mathrm{I}_d-\eta H_{w^*})^{-1}.
% \]
% This completes the proof.
% \end{proof}

\subsection{Theorem~\ref{thm:state-bound-dis-prior}: A General Bound}

The following bound can be easily proved by using Eq.~(\ref{eq:kl-two-gau}).
\begin{thm}
\label{thm:state-bound-dis-prior}
Under the same conditions in Lemma~\ref{lem:xu's-bound} and Lemma~\ref{lem:stationary-real}, then for any $P_{W_{T}}=\mathcal{N}\pr{\tilde{w}, \widetilde{\Lambda}}$, where $\tilde{w}$ and $\widetilde{\Lambda}$ are independent of $S$,  we have
\[
\mathcal{E}_{\mu}(\mathcal{A})\leq\sqrt{\frac{R^2}{2n}\inf_{\tilde{w}, \widetilde{\Lambda}}\ex{S,W_S^*}{\log\frac{\mathrm{det}\pr{\widetilde{\Lambda}}}{\mathrm{det}\pr{\Lambda_{W^*_{S}}}}+\tr{\widetilde{\Lambda}^{-1}\Lambda_{W^*_{S}}-\mathrm{I}_d}+\mathrm{d}_{\mathrm{M}}^2\pr{W^*_{S},\tilde{w};\widetilde{\Lambda}}}},
\]
where $\mathrm{d}_{\mathrm{M}}\pr{x,y;\Sigma}\triangleq \sqrt{(x-y)^{\bf T}\Sigma^{-1}(x-y)}$ is the Mahalanobis distance.
\end{thm}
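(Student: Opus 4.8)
The plan is to route through the mutual information $I(W_T;S)$ as in Lemma~\ref{lem:xu's-bound}, to bound it by a KL divergence against a data-independent prior via the variational characterization of mutual information (Lemma~\ref{lem:mi-center-gravity}), and finally to evaluate that KL divergence in closed form using the Gaussian structure that the quadratic-approximation assumptions of Lemma~\ref{lem:stationary-real} impose on the terminal state.

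Concretely, I would proceed in the following order. (1) By Lemma~\ref{lem:mi-center-gravity}, $I(W_T;S)=\inf_{P_{W_T}}\ex{S}{\kl{Q_{W_T|S}}{P_{W_T}}}$, so for any distribution $P_{W_T}$ not depending on $S$ we have $I(W_T;S)\le\ex{S}{\kl{Q_{W_T|S}}{P_{W_T}}}$; in particular it suffices to consider Gaussian priors $P_{W_T}=\mathcal{N}(\tilde{w},\widetilde{\Lambda})$ with $\tilde{w},\widetilde{\Lambda}$ independent of $S$ (population-level choices such as $w^*_\mu=\ex{}{W^*_S}$ being allowed). (2) Under the conditions of Lemma~\ref{lem:stationary-real} together with the Ornstein--Uhlenbeck analysis around a local minimum recalled in the preliminaries, for a fixed $S=s$ and a fixed local minimum $W^*_s=w^*_s$ the terminal-state posterior is $Q_{W_T|s,w^*_s}=\mathcal{N}(w^*_s,\Lambda_{w^*_s})$, so $Q_{W_T|s}=\cex{W^*_s}{s}{\mathcal{N}(W^*_s,\Lambda_{W^*_s})}$ is a Gaussian mixture. (3) Since the map $Q\mapsto\kl{Q}{P_{W_T}}$ is convex, Jensen's inequality applied to the mixing variable $W^*_s$ gives $\kl{Q_{W_T|s}}{P_{W_T}}\le\cex{W^*_s}{s}{\kl{\mathcal{N}(W^*_s,\Lambda_{W^*_s})}{P_{W_T}}}$; taking expectation over $S$ and then the infimum over admissible Gaussian priors yields $I(W_T;S)\le\inf_{\tilde{w},\widetilde{\Lambda}}\ex{S,W^*_S}{\kl{\mathcal{N}(W^*_S,\Lambda_{W^*_S})}{\mathcal{N}(\tilde{w},\widetilde{\Lambda})}}$. (4) Insert the closed-form Gaussian KL of Eq.~(\ref{eq:kl-two-gau}) with $\mu_p=W^*_S$, $\Sigma_p=\Lambda_{W^*_S}$, $\mu_q=\tilde{w}$, $\Sigma_q=\widetilde{\Lambda}$, recognize the quadratic form $(W^*_S-\tilde{w})^{\bf T}\widetilde{\Lambda}^{-1}(W^*_S-\tilde{w})$ as $\mathrm{d}_{\mathrm{M}}^2\pr{W^*_S,\tilde{w};\widetilde{\Lambda}}$, group the $-d$ with $\tr{\widetilde{\Lambda}^{-1}\Lambda_{W^*_S}}$, and substitute into Lemma~\ref{lem:xu's-bound}, the factor $\tfrac{1}{2}$ of the Gaussian KL combining with the constant of Lemma~\ref{lem:xu's-bound} to give the $\tfrac{R^2}{2n}$ prefactor.

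I do not expect a real obstacle here: this is the ``easily proved'' statement the text advertises, and steps (1) and (4) are bookkeeping. The two points that deserve a line of justification are, first, that the terminal-state posterior is a \emph{mixture} of Gaussians (a generic $s$ has several local minima), so the closed-form Gaussian KL cannot be applied to $Q_{W_T|s}$ directly; conditioning additionally on $W^*_s$ and invoking convexity of KL is exactly the relaxation that costs tightness, and it is tight precisely when each $s$ admits a unique local minimum, consistent with the remark preceding Theorem~\ref{thm:opt-state-inde-bound}. Second, one must ensure the prior $\mathcal{N}(\tilde{w},\widetilde{\Lambda})$ is genuinely independent of $S$, otherwise the variational inequality in step (1) fails; restricting the infimum to Gaussians rather than all priors only loosens the bound, which is harmless since we only need an upper bound on $I(W_T;S)$.
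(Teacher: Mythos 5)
Your proposal follows essentially the same route as the paper: the paper's own proof is a one-line appeal to the Gaussian KL formula in Eq.~(\ref{eq:kl-two-gau}) applied to the relaxed quantity $\inf_{P_{W_T}}\mathbb{E}_{S,W_S^*}[\mathrm{D_{KL}}(Q_{W_T|S,W_S^*}\|P_{W_T})]$ introduced before Theorem~\ref{thm:opt-state-inde-bound}, and your steps (1)--(4), including the Jensen/convexity handling of the Gaussian-mixture posterior and the data-independence of the prior, are exactly that argument written out. One bookkeeping caveat: combining Lemma~\ref{lem:xu's-bound}'s $\sqrt{2R^2 I(W_T;S)/n}$ with the factor $\tfrac{1}{2}$ in the Gaussian KL gives a prefactor $R^2/n$ rather than $R^2/(2n)$, so the constant you (and the stated theorem) claim is tighter by a factor $\sqrt{2}$ than this derivation actually delivers.
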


\subsection{Proof of Theorem~\ref{thm:opt-state-inde-bound}}
\begin{proof}
    Let $P_{W_T}=\mathcal{N}\pr{w^*_{\mu},\Lambda_{w^*_{\mu}}}$, then
    \begin{align}
        &\ex{S,W_S^*}{\log\frac{\mathrm{det}\pr{\Lambda_{w^*_{\mu}}}}{\mathrm{det}\pr{\Lambda_{W^*_{S}}}}+\tr{\Lambda_{w^*_{\mu}}^{-1}\Lambda_{W^*_{S}}-\mathrm{I}_d}+\pr{W_S^*-w^*_{\mu}}^{\bf T}\Lambda_{w^*_{\mu}}^{-1}\pr{W_S^*-w^*_{\mu}}}\notag\\
        =&\ex{S,W_S^*}{\log\frac{\mathrm{det}\pr{\Lambda_{w^*_{\mu}}}}{\mathrm{det}\pr{\Lambda_{W^*_{S}}}}+\tr{\Lambda_{w^*_{\mu}}^{-1}\Lambda_{W^*_{S}}-\mathrm{I}_d}+\tr{\Lambda_{w^*_{\mu}}^{-1}\pr{W_S^*-w^*_{\mu}}\pr{W_S^*-w^*_{\mu}}^{\bf T}}}\notag\\
        =&\ex{S,W_S^*}{\log\frac{\mathrm{det}\pr{\Lambda_{w^*_{\mu}}}}{\mathrm{det}\pr{\Lambda_{W^*_{S}}}}}+\tr{\Lambda_{w^*_{\mu}}^{-1}\ex{S,W_S^*}{\Lambda_{W^*_{S}}}-\mathrm{I}_d+\Lambda_{w^*_{\mu}}^{-1}\ex{W_S^*}{\pr{W_S^*-w^*_{\mu}}\pr{W_S^*-w^*_{\mu}}^{\bf T}}}.\label{eq:kl-pop-stacov}
    \end{align}

    Denote $\widetilde{\Sigma}_{\mu}\triangleq\ex{S,W_S^*}{\pr{W_S^*-w^*_{\mu}}\pr{W_S^*-w^*_{\mu}}^{\bf T}}=\ex{W_S^*}{W_S^*{W_S^*}^{\bf T}}-w^*_{\mu}{w^*_{\mu}}^{\bf T}$.

    Notice that
    \begin{align*}
        \ex{S,W_S^*}{\Lambda_{W^*_{S}}}=&\ex{S,W_S^*,W_T}{\pr{W_T-W_S^*}\pr{W_T-W_S^*}^{\bf T}}\\
        =&\ex{W_T}{W_T{W_T}^{\bf T}}-\ex{W_S^*}{W_S^*{W_S^*}^{\bf T}}\\
        =&\ex{W_T}{W_T{W_T}^{\bf T}}-w^*_{\mu}{w^*_{\mu}}^{\bf T}-\pr{\ex{W_S^*}{W_S^*{W_S^*}^{\bf T}}-w^*_{\mu}{w^*_{\mu}}^{\bf T}}\\
        =&\Lambda_{w^*_{\mu}}-\widetilde{\Sigma}_{\mu}.
    \end{align*}

    Therefore,
    \begin{align*}
        &\tr{\Lambda_{w^*_{\mu}}^{-1}\ex{S,W_S^*}{\Lambda_{W^*_{S}}}-\mathrm{I}_d+\Lambda_{w^*_{\mu}}^{-1}\ex{W_S^*}{\pr{W_S^*-w^*_{\mu}}\pr{W_S^*-w^*_{\mu}}^{\bf T}}}\\
        =&\tr{\Lambda_{w^*_{\mu}}^{-1}\ex{S,W_S^*}{\Lambda_{W^*_{S}}}-\Lambda_{w^*_{\mu}}^{-1}\Lambda_{w^*_{\mu}}+\Lambda_{w^*_{\mu}}^{-1}\ex{W_S^*}{\pr{W_S^*-w^*_{\mu}}\pr{W_S^*-w^*_{\mu}}^{\bf T}}}\\
        =&\tr{\Lambda_{w^*_{\mu}}^{-1}\pr{\ex{S,W_S^*}{\Lambda_{W^*_{S}}}-\Lambda_{w^*_{\mu}}+\widetilde{\Sigma}_{\mu}}}\\
        =&0.
    \end{align*}

    Plugging this into Eq.~(\ref{eq:kl-pop-stacov}), we have
    \begin{align*}
        &\ex{S,W_S^*}{\log\frac{\mathrm{det}\pr{\Lambda_{w^*_{\mu}}}}{\mathrm{det}\pr{\Lambda_{W^*_{S}}}}+\tr{\Lambda_{w^*_{\mu}}^{-1}\Lambda_{W^*_{S}}-\mathrm{I}_d}+\pr{W_S^*-w^*_{\mu}}^{\bf T}\Lambda_{w^*_{\mu}}^{-1}\pr{W_S^*-w^*_{\mu}}}\\
        &=\ex{S,W_S^*}{\log\frac{\mathrm{det}\pr{\Lambda_{w^*_{\mu}}}}{\mathrm{det}\pr{\Lambda_{W^*_{S}}}}}=\ex{S,W_S^*}{\tr{\log\pr{\Lambda_{W^*_{S}}^{-1}\Lambda_{w^*_{\mu}}}}}.
    \end{align*}
    
    Finally, applying Theorem~\ref{thm:state-bound-dis-prior} will conclude the proof.
\end{proof}

\subsection{Proof of Corollary~\ref{cor:pacbayes-anisotropic-prior}}
\begin{proof}
% The first part in the statement
The proof is straightforward by plugging $\Lambda_{w^*}= \br{H_{w^*}\pr{\frac{2}{\eta}\mathrm{I}_d}}^{-1}C_{T}$ in Theorem~\ref{thm:opt-state-inde-bound}.
\end{proof}

\subsection{Proof of Corollary~\ref{cor:pacbayes-isotropic-prior}}

\begin{proof}
    By Lemma~\ref{lem:log-sum-ineq}, it's easy to obtain the following bound according to Theorem~\ref{thm:opt-state-inde-bound}.
    \[
\mathcal{E}_{\mu}(\mathcal{A})\leq\sqrt{\frac{R^2d}{2n}\log\pr{\frac{\ex{}{\mathrm{d}^2_{\mathrm{M}}\pr{W_S^*,w^*_\mu;\ex{}{\Lambda_{W^*_S}}}}}{d}+1}+\ex{}{\tr{\log\pr{\Lambda_{W^*_S}^{-1}\ex{}{\Lambda_{W^*_S}}}}}}.
\label{ineq:losser-state-bound}
\]

Then, plugging $\Lambda_{W^*_{S}}  = \frac{\eta}{2b} \mathrm{I}_d$ will conclude the proof.
\end{proof}

\subsection{Corollary~\ref{cor:pacbayes-isotropic-prior-init}: Distance to Initialization}

\begin{cor}
\label{cor:pacbayes-isotropic-prior-init}
Under (i-iii) in Lemma~\ref{lem:stationary-real}, then $
\mathcal{E}_{\mu}(\mathcal{A})\leq\sqrt{\frac{dR^2}{n}\log\left(\frac{2b}{\eta d}\mathbb{E}{||W_S^*-W_{0}||^2}+1\right)}
$.
% where $\tilde{w}=w_\mu^* =\ex{}{W_S^*}$ and the optimal $\sigma^*=\sqrt{\ex{}{||W_S^*-\tilde{w}||^2/d+\frac{\eta }{2b}}}$.
\end{cor}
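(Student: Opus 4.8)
The plan is to reuse the terminal-state machinery behind Corollary~\ref{cor:pacbayes-isotropic-prior}, replacing the data-independent reference $w^*_\mu$ by the (equally data-independent) initialization $W_0$. The structural fact that makes this work is that, under conditions (i)--(iii), Lemma~\ref{lem:stationary-real} gives $\Lambda_{W^*_S}=\tfrac{\eta}{2b}\mathrm{I}_d$: the terminal covariance around \emph{every} local minimum is one and the same deterministic, $S$-independent matrix. Consequently all covariance-dependent terms in the Gaussian-KL expansion become trivial and only a squared-distance term survives.

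First I would recall, as noted just before Theorem~\ref{thm:opt-state-inde-bound}, that $I(W_T;S)\le\inf_{P_{W_T}}\ex{S,W_S^*}{\kl{\mathcal{N}(W_S^*,\Lambda_{W_S^*})}{P_{W_T}}}$ with $Q_{W_T|S,W_S^*}=\mathcal{N}(W_S^*,\Lambda_{W_S^*})$, and pick the isotropic prior $P_{W_T}=\mathcal{N}\pr{W_0,\sigma^2\mathrm{I}_d}$, keeping the scale $\sigma^2>0$ free (admissible since $W_0$ is independent of $S$). Substituting $\Lambda_{W^*_S}=\tfrac{\eta}{2b}\mathrm{I}_d$ into the closed form Eq.~(\ref{eq:kl-two-gau}), the KL divergence equals $\tfrac12$ times
\[
    d\log\frac{2b\sigma^2}{\eta}-d+\frac{d\eta}{2b\sigma^2}+\frac{1}{\sigma^2}\,\eucd{W_S^*-W_0}^2 .
\]
Next I would take $\ex{S,W_S^*}{\cdot}$ and minimise over $\sigma^2$: the stationarity condition gives $\sigma^2_\star=\tfrac{\eta}{2b}+\tfrac1d\,\ex{}{\eucd{W_S^*-W_0}^2}$, at which the last three terms cancel and the bracket collapses to $d\log\pr{1+\tfrac{2b}{\eta d}\ex{}{\eucd{W_S^*-W_0}^2}}$ (the same expression is also reachable via the log-sum step of Lemma~\ref{lem:log-sum-ineq} used in the proof of Corollary~\ref{cor:pacbayes-isotropic-prior}). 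Feeding $I(W_T;S)\le\tfrac{d}{2}\log\pr{1+\cdots}$ into Lemma~\ref{lem:xu's-bound} yields exactly $\mathcal{E}_\mu(\mathcal{A})\le\sqrt{\tfrac{dR^2}{n}\log\pr{\tfrac{2b}{\eta d}\ex{}{\eucd{W_S^*-W_0}^2}+1}}$; if $W_0$ is itself random, a final Jensen step applied to the concave increasing map $x\mapsto\sqrt{\log(1+cx)}$ lets one keep $\ex{}{\eucd{W_S^*-W_0}^2}$ as the joint expectation over $(W_0,S,W_S^*)$.

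No substantive obstacle is expected here; the statement is essentially a bookkeeping variant of Corollary~\ref{cor:pacbayes-isotropic-prior}. The single point needing care --- and the closest thing to an obstacle --- is that Theorem~\ref{thm:opt-state-inde-bound} cannot be invoked verbatim: its centre $\tilde w=w^*_\mu$ was chosen precisely so that the cross term involving $\widetilde\Sigma_\mu$ in its proof vanishes, and that cancellation no longer holds with centre $W_0$. One must therefore work from the general bound Theorem~\ref{thm:state-bound-dis-prior}, or equivalently from Lemma~\ref{lem:xu's-bound} directly as above, rather than specialising Theorem~\ref{thm:opt-state-inde-bound}. Everything else is the one-variable optimisation in $\sigma^2$ already performed for Corollary~\ref{cor:pacbayes-isotropic-prior}.
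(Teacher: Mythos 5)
Your proposal is correct and follows essentially the same route as the paper's own proof: bound $I(W_T;S)$ by the conditional KL against an isotropic Gaussian prior $\mathcal{N}(W_0,\sigma^2\mathrm{I}_d)$ (the paper does this via the chain rule of KL on the joint with $W_S^*$), substitute $\Lambda_{W_S^*}=\tfrac{\eta}{2b}\mathrm{I}_d$ from Lemma~\ref{lem:stationary-real}, optimize over $\sigma^2$ (your $\sigma^2_\star$ matches the paper's $\sigma^*$), and plug into Lemma~\ref{lem:xu's-bound}. Your cautionary note that Theorem~\ref{thm:opt-state-inde-bound} cannot be specialized verbatim to the centre $W_0$ is also consistent with how the paper proceeds, working directly from the general KL bound rather than from that theorem.
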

\begin{proof} 
Notice that $I(W_T;S)\leq\mathbb{E}_{S}{\mathrm{D_{KL}}(Q_{W_T|S}||P_{W_T})}$ holds for any $\sigma>0$, then for a given $\tilde{w}$, we have
\begin{eqnarray}
      I(W_T;S)&=&\inf_{P_{W_T}} \ex{S}{\mathrm{D_{KL}}(Q_{W_T|S}||P_{W_T})}\notag\\ &\leq&\inf_{\sigma}\ex{S}{\mathrm{D_{KL}}(P_{W^*_{S}+\sqrt{\frac{\eta}{2b}}N, W^*_{S}|S}||P_{\tilde{w}+\sigma N})}\label{ineq:kl-chain}\\
      &=&\inf_{\sigma}\ex{S,W^*_{S}}{\mathrm{D_{KL}}(P_{W^*_{S}+\sqrt{\frac{\eta}{2b}}N, |S,W^*_{S}}||P_{\tilde{w}+\sigma N})}\notag\\
      &=& \inf_{\sigma} \frac{1}{2}\ex{S,W^*_{S}}{\frac{1}{\sigma^2}(W^*_{S}-\tilde{w})^{\bf T}(W^*_{S}-\tilde{w})+\log\frac{\sigma^{2d}}{(\eta/2b)^d}+tr\{\frac{\eta}{2b\sigma^2}\mathrm{I}_d\}-d}\notag\\
      &=& \frac{1}{2}\inf_{\sigma} \frac{1}{\sigma^2}\ex{S,W^*_{S}}{||W^*_{S}-\tilde{w}||^2+\frac{\eta d}{2b}} + d\log{\sigma^{2}}+ d\log\frac{2b}{\eta}-d\notag\\
      &=&\frac{1}{2} d\log\left(\frac{2b}{\eta d}\ex{S,W^*_{S}}{||W^*_{S}-\tilde{w}||^2}+1\right),
\end{eqnarray}
where Eq.~(\ref{ineq:kl-chain}) is by the chain rule of KL divergence, and the optimal $\sigma^*=\sqrt{\ex{S,W^*_{S}}{||W^*_{S}-\tilde{w}||^2/d+\frac{\eta }{2b}}}$. Let $\tilde{w}=W_0$ will conclude the proof.
\end{proof}

% \subsection{Recover Gradient Norm Based Bound from Theorem~\ref{thm:pacbayes-isotropic-prior}}
Additionally, Corollary~\ref{cor:pacbayes-isotropic-prior-init} can be used to recover a trajectory-based bound.
\begin{cor}
\label{cor:pacbayes-gradient}
Let $W_T=W_s^*$, $\tilde{w}=0$ and W.L.O.G, assume $W_0=0$, then
\[
\mathcal{E}_{\mu}(\mathcal{A})\leq\sqrt{\frac{dR^2}{n}\log\left(\frac{4bT\eta}{d}\sum_{t=1}^T\ex{}{||G_t||^2+tr\{C_t\}}+1\right)},
\]
\end{cor}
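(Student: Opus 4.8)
The plan is to start from the ``distance to initialization'' bound in Corollary~\ref{cor:pacbayes-isotropic-prior-init} with the flexible centering vector $\tilde{w}$. Under the assumptions $W_0 = 0$ and $W_T = W^*_s$, and choosing $\tilde w = 0$, the key quantity to control becomes $\ex{}{\|W^*_S - 0\|^2} = \ex{}{\|W_T - W_0\|^2}$, i.e. the expected squared displacement of the parameter from its initialization to the terminal state. First I would unroll the SDE recursion in Eq.~(\ref{eq:sgd-update-gaussian}): since $W_0 = 0$, telescoping gives $W_T = \sum_{t=1}^T \pr{-\eta G_t + \eta C_t^{1/2} N_t}$, so that $W_T - W_0 = \eta \sum_{t=1}^T \widehat G_t$ in the notation $\widehat G_t = -G_t + C_t^{1/2} N_t$.

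Next I would bound $\|W_T - W_0\|^2 = \eta^2 \bigl\| \sum_{t=1}^T \widehat G_t \bigr\|^2$. Applying the Cauchy--Schwarz (power-mean) inequality $\bigl\| \sum_{t=1}^T x_t \bigr\|^2 \le T \sum_{t=1}^T \|x_t\|^2$ yields $\|W_T - W_0\|^2 \le \eta^2 T \sum_{t=1}^T \|\widehat G_t\|^2$. Taking expectations and using $\ex{N_t}{\|{-G_t + C_t^{1/2} N_t}\|^2} = \|G_t\|^2 + \tr{C_t}$ (since $N_t$ is independent zero-mean with identity covariance, the cross term vanishes and $\ex{}{\|C_t^{1/2} N_t\|^2} = \tr{C_t}$) gives $\ex{}{\|W^*_S\|^2} \le \eta^2 T \sum_{t=1}^T \ex{}{\|G_t\|^2 + \tr{C_t}}$. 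Substituting this into the logarithm in Corollary~\ref{cor:pacbayes-isotropic-prior-init} with $\tilde w = 0$ and using monotonicity of $\log$ produces the factor $\frac{2b}{\eta d} \cdot \eta^2 T \sum_{t=1}^T \ex{}{\|G_t\|^2 + \tr{C_t}} = \frac{2bT\eta}{d}\sum_{t=1}^T \ex{}{\|G_t\|^2 + \tr{C_t}}$ inside the log; comparing with the stated bound, the constant $4bT\eta$ vs.\ $2bT\eta$ is a harmless slack (one can be loose in the Cauchy--Schwarz step or in the $\frac{\eta d}{2b}$ additive term hidden in the optimal $\sigma^*$), so I would simply absorb it.

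The main obstacle---really the only subtle point---is justifying that one may replace the terminal-state covariance structure assumed in Corollary~\ref{cor:pacbayes-isotropic-prior-init} (where $Q_{W_T|S, W^*_S} = \mathcal{N}(W^*_S, \frac{\eta}{2b}\mathrm{I}_d)$ under conditions (i--iii) of Lemma~\ref{lem:stationary-real}) with the bare statement $W_T = W^*_s$, and that the posterior-conditioning argument via the KL chain rule still goes through when we identify the terminal weight with the local minimum itself. In the proof of Corollary~\ref{cor:pacbayes-isotropic-prior-init} the additive $\frac{\eta d}{2b}$ in $\sigma^*$ comes precisely from that Gaussian fluctuation; setting $W_T = W^*_s$ deterministically just drops this term, which only tightens the bound, so the stated inequality (with its generous constant) continues to hold. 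I would therefore present the argument as: invoke Corollary~\ref{cor:pacbayes-isotropic-prior-init}'s computation verbatim, note $\|W^*_S - W_0\|^2 \le \eta^2 T \sum_t \|\widehat G_t\|^2$ by the unrolling and Cauchy--Schwarz above, take expectation, and plug in---the constant blow-up to $4bT\eta$ covering any looseness.
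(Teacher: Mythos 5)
Your proposal is correct and follows essentially the same route as the paper's proof: unroll the recursion from $W_0=0$, bound $\|W_T\|^2$ via Cauchy--Schwarz, use $\ex{}{\|C_t^{1/2}N_t\|^2}=\tr{C_t}$, and plug into Corollary~\ref{cor:pacbayes-isotropic-prior-init} with $\tilde{w}=0$. The only difference is that you apply Cauchy--Schwarz to the $T$ combined vectors $\widehat{G}_t$ rather than splitting into $2T$ terms as the paper does, which yields the slightly tighter constant $2bT\eta$ that is then absorbed into the stated $4bT\eta$.
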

\begin{rem}
In Theorem \ref{thm:isotropic-prior-bound}, let $\tilde{g}=0$  and by applying Jensen's inequality, we could also let the summation and factor $T$ move inside the square root. Then the most different part in Corollary \ref{cor:pacbayes-gradient} is that $A_2(t)$ is now removed from the bound. 
% Since $-tr\{\log{C_t}\}$ is usually has very large magnitude in practice, this improvement is significant.
\end{rem}

\begin{proof}
When $W_0 = 0$, we notice that
\[
W_T = \sum_{t=1}^T -\eta G_t + \eta N_{C_t},
\]
where $N_{C_t} = C_t^{1/2} N_t$.

Thus,
\[
||W_T||^2=||\sum_{t=1}^T -\eta G_t + \eta N_{C_t}||^2\leq 2T\eta^2\sum_{t=1}^T||G_t||^2+||N_{C_t}||^2
\]

Let $\tilde{w}=0$, recall the bound in Corollary~\ref{cor:pacbayes-isotropic-prior-init} and plugging the inequality above, we have
\begin{eqnarray}
  \mathcal{E}_{\mu}(\mathcal{A})&\leq&\sqrt{\frac{R^2}{n}d\log\left(\frac{2b}{\eta d}\ex{S,W_T}{||W_T-\tilde{w}||^2}+1\right)}\notag\\
  &\leq&\sqrt{\frac{dR^2}{n}\log\left(4bT\eta/d\ex{S,W_{0:T-1},N_{C_{0:t-1}}}{\sum_{t=1}^T||G_t||^2+||N_{C_t}||^2}+1\right)}\notag\\
  &=&\sqrt{\frac{dR^2}{n}\log\left(\frac{4bT\eta}{d}\sum_{t=1}^T\ex{S,W_{t-1}}{||G_t||^2+tr\{C_t\}}+1\right)}\notag
\end{eqnarray}
This concludes the proof.
\end{proof}

\subsection{Proof of Theorem \ref{thm:pacbayes-data-dependent-prior}}

\begin{proof}
Let $P_{W_T|S_J=s_j} = \mathcal{N}(W^*_{s_j},\frac{\eta}{2b}\mathrm{I}_d)$, then
\begin{eqnarray}
  \mathrm{D_{KL}}(Q_{W_T|S=s}||P_{W_T|S_J=s_j})&=&\mathrm{D_{KL}}(Q_{W^*_{s}+\sqrt{\frac{\eta}{2b}}N|S=s}||P_{W^*_{s_j}+\sqrt{\frac{\eta}{2b}}N|S_J=s_j})\notag\\
  &\leq&\mathrm{D_{KL}}(Q_{W^*_{s}+\sqrt{\frac{\eta}{2b}}N,W^*_{s}|S=s}||P_{W^*_{s_j}+\sqrt{\frac{\eta}{2b}}N,W^*_{s_j}|S_J=s_j})\label{ineq:kl-chain-2}\\
  &=&\ex{W^*_{s},W^*_{s_j}}{\mathrm{D_{KL}}(Q_{W^*_{s}+\sqrt{\frac{\eta}{2b}}N|W^*_{s},S=s}||P_{W^*_{s_j}+\sqrt{\frac{\eta}{2b}}N|W^*_{s_j},S_J=s_j})}\notag\\
  &=&\ex{W^*_{s},W^*_{s_j}}{\frac{b}{\eta}||W^*_{s}-W^*_{s_j}||^2},\label{eq:bound-data-prior}
\end{eqnarray}
where Eq.~(\ref{ineq:kl-chain-2}) is by the chain rule of KL divergence.
Plugging the Eq. (\ref{eq:bound-data-prior}) into Lemma \ref{lem:data-dependent-prior} will obtain the final result.
\end{proof}

\subsection{Additional Empirical Results}

\begin{figure*}[!ht]
    \begin{subfigure}[b]{0.32\textwidth}
    \centering
\includegraphics[scale=0.33]{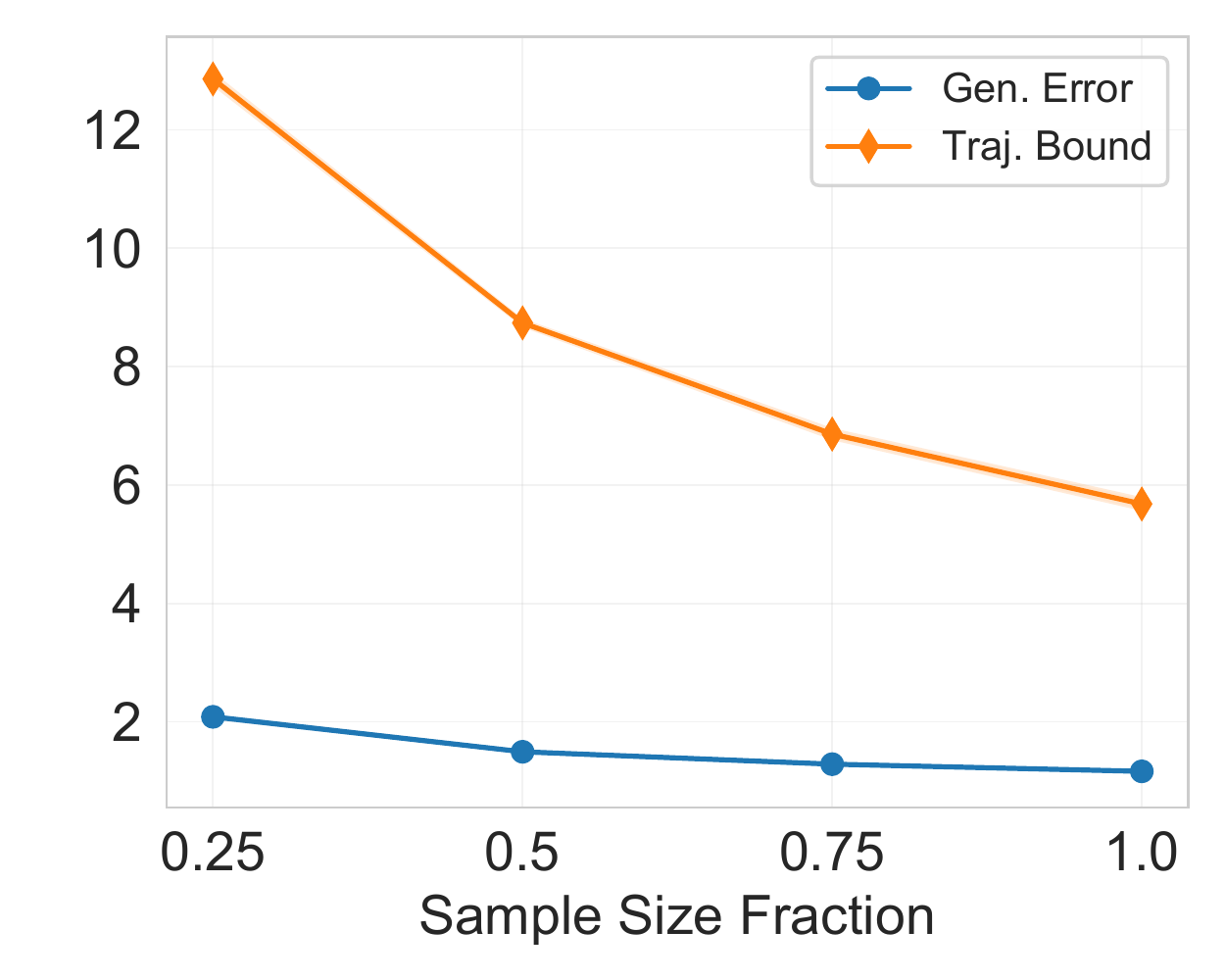}    
\caption{VGG on (small) SVHN}            \label{fig:vgg-svhn-err}
    \end{subfigure}
\begin{subfigure}[b]{0.32\textwidth}
\centering
\includegraphics[scale=0.33]{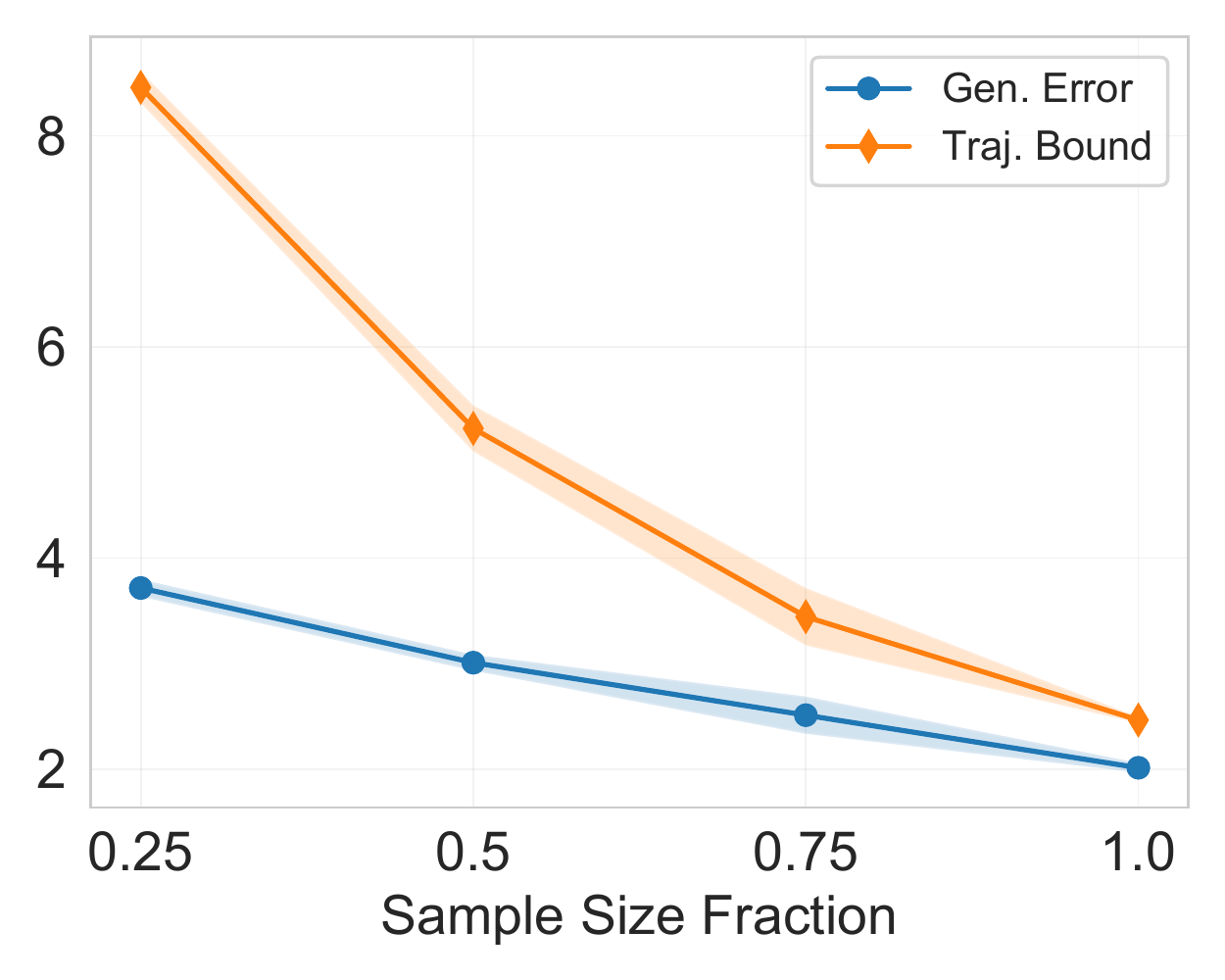}
\caption{VGG on CIFAR10}
    \label{fig:vgg-cifa10-err}
\end{subfigure}
 \begin{subfigure}[b]{0.32\textwidth}
 \centering
\includegraphics[scale=0.33]{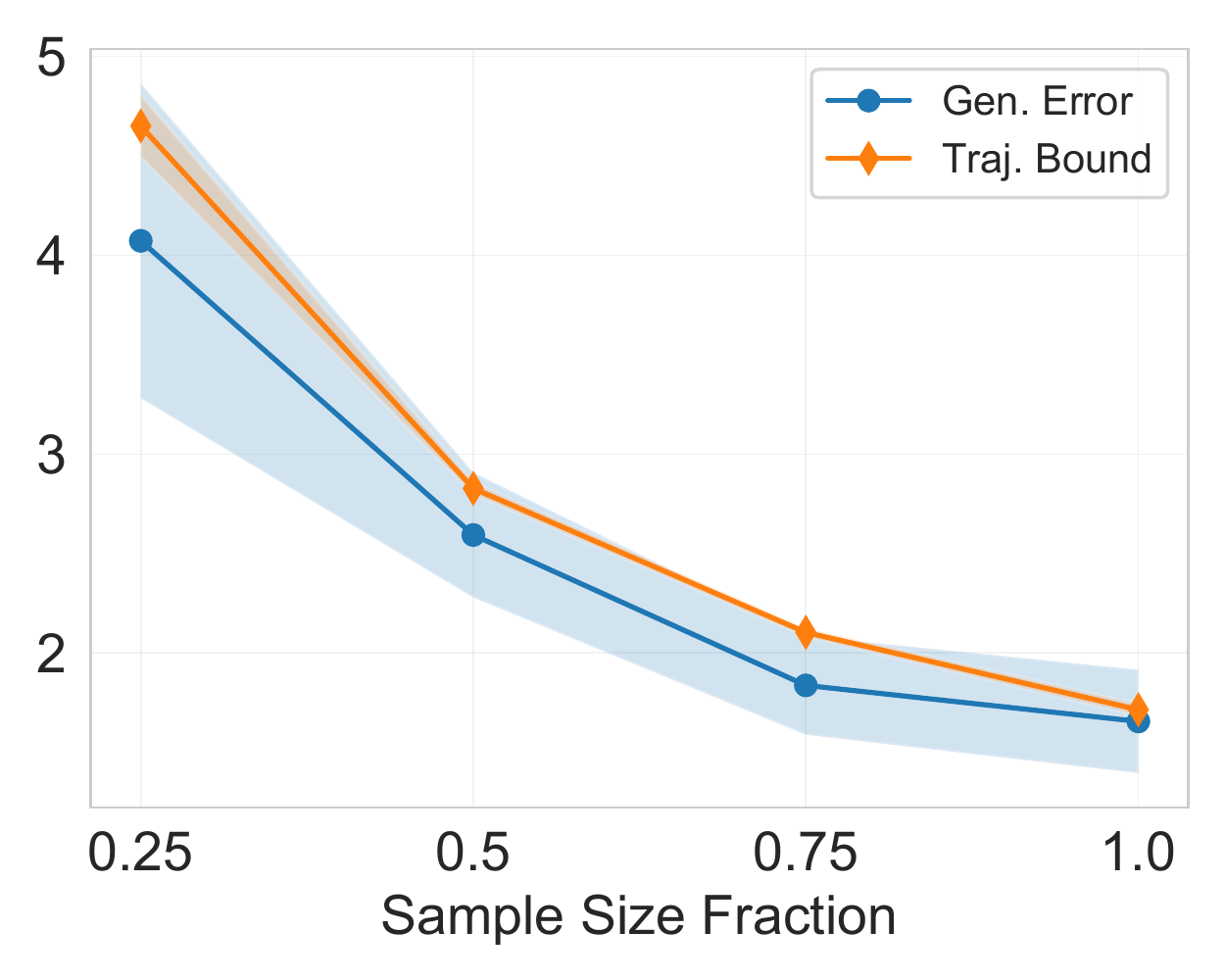}
\caption{ResNet on CIFAR10}
\label{fig:resnet-cifar-err}
    \end{subfigure}
% \begin{subfigure}[b]{0.245\textwidth}
% \includegraphics[scale=0.28]{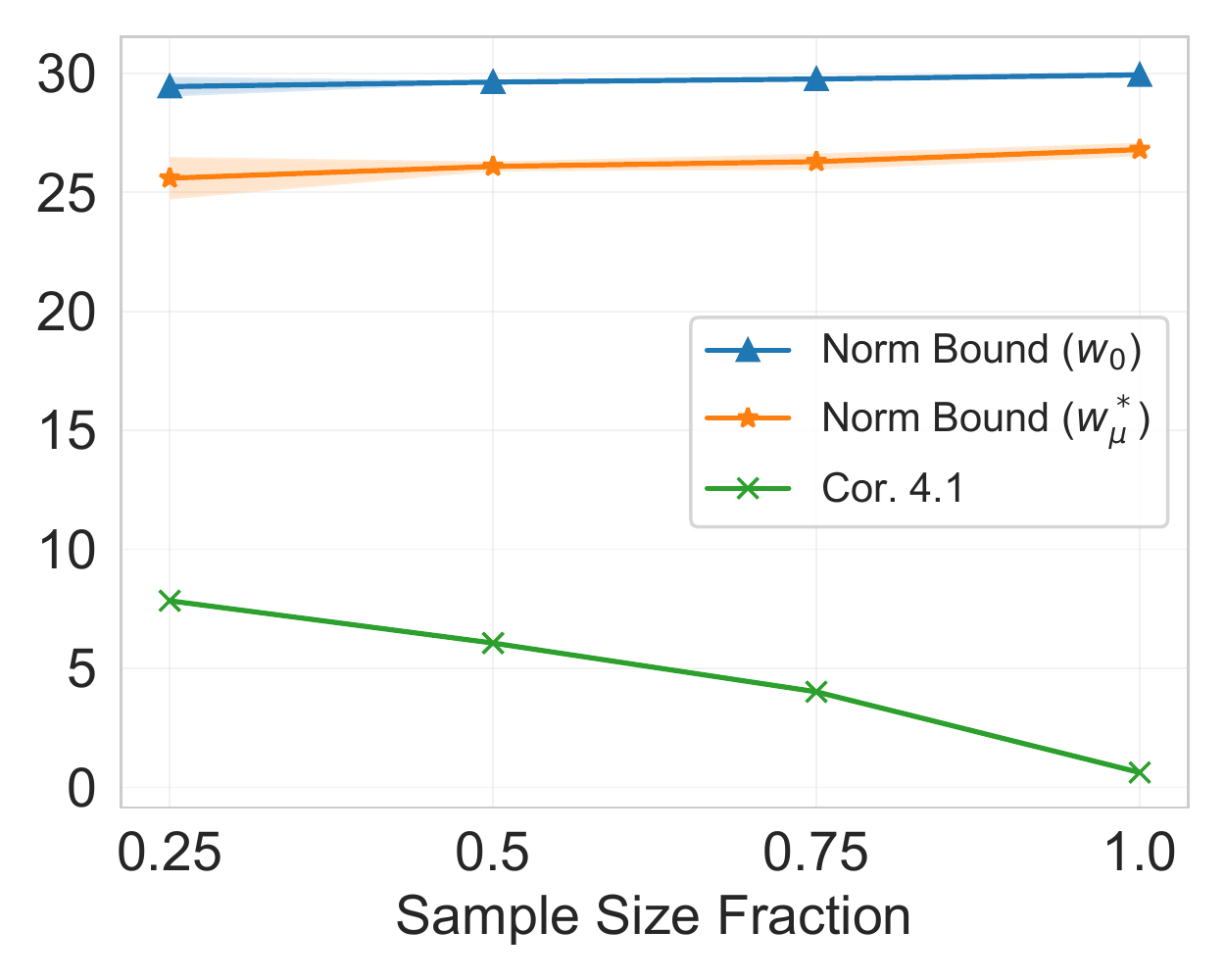}
% \caption{VGG on CIFAR10}
% \label{fig:vgg-cifar10-TM-bound}
% \end{subfigure}
\caption{Zoomed-in of generalization error.}\label{fig:errs}
\end{figure*}

\begin{figure*}[!ht]
    % \centering
    \begin{subfigure}[b]{0.48\textwidth}
    \centering
\includegraphics[scale=0.4]{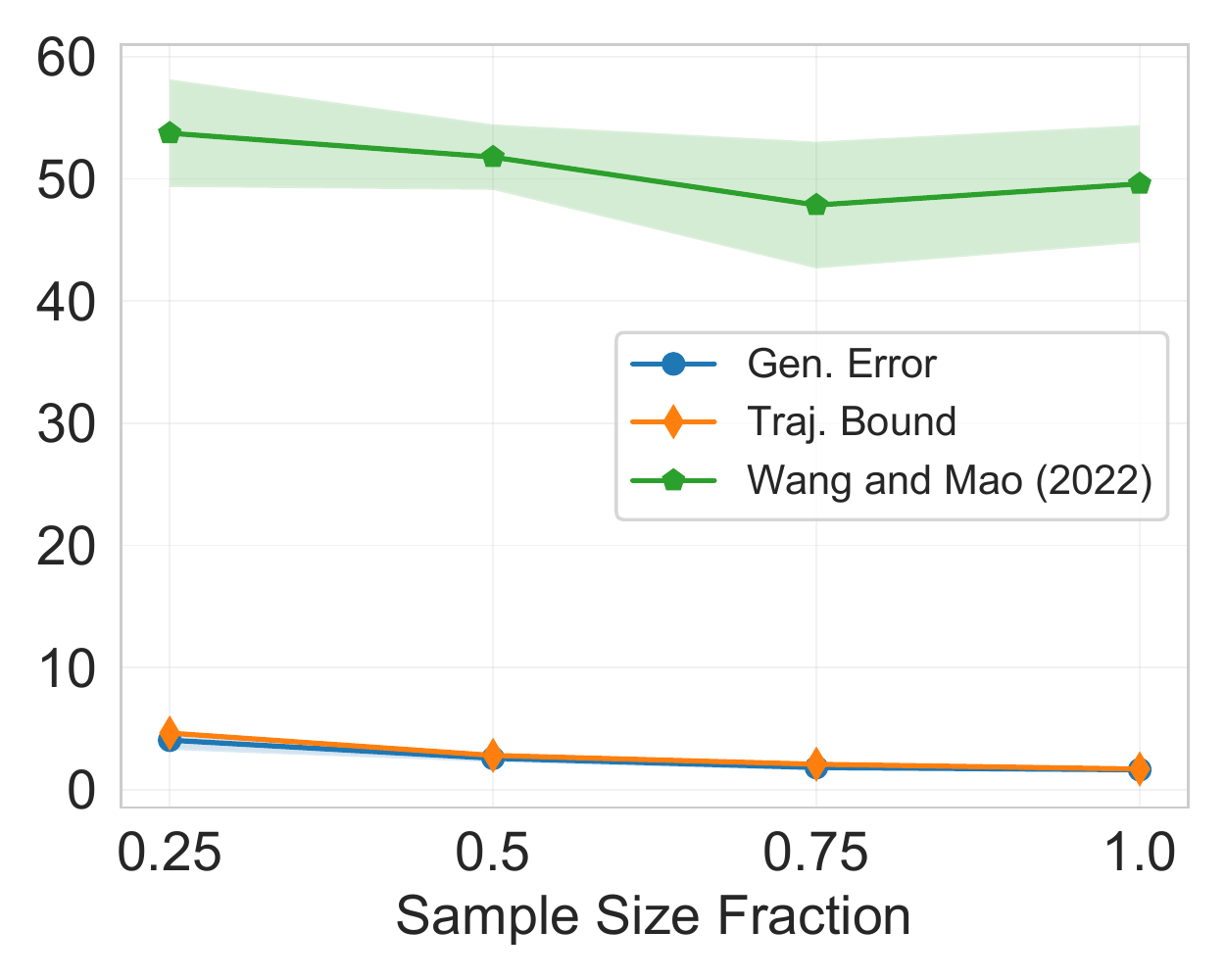}    
\caption{ResNet (Traj. Bound)}            \label{fig:resnet-cifar10-bound}
    \end{subfigure}
    \hfill
\begin{subfigure}[b]{0.48\textwidth}
\centering
\includegraphics[scale=0.4]{figs/bound-plot-cifar10-resnet-TS.pdf}
\caption{ResNet (Term. Bound)}
    \label{fig:resnet-cifar10-bound-TS}
\end{subfigure}
\caption{Estimated trajectory-based bound and terminal-state based bound, with $R$ excluded. Models trained on CIFAR 10.}\label{fig:resnet-bounds}
\end{figure*}

\section{Additional Result: Inverse Population FIM as both Posterior and Prior Covariance}
% Another choice of the posterior covariance is the inverse population Fisher information matrix,
% \textcolor{red}{since the inverse of the FIM is an estimator of the asymptotic covariance matrix}, 
% which has already been treated as the posterior covariance 
Inspired by some previous works of \citep{achille2019information,harutyunyan2021estimating,wang2022pacbayes}, we can also select the inverse population Fisher information matrix $F^\mu_{w^*}=\ex{Z}{\nabla \ell(w^*,Z)\nabla \ell(w^*,Z)^{\bf T}}$ as the posterior covariance. Then, 
% recall Theorem~\ref{thm:pacbayes-data-dependent-prior},
the following theorem is obtained.
\begin{thm}
\label{thm:IF-pacbayes-FIM}
Under the same conditions in Theorem~\ref{thm:pacbayes-data-dependent-prior}, and assume the distribution $P_{W^*_{S_J}|S_J}$ is invariant of $J$, then
\[
\mathcal{E}_{\mu}(\mathcal{A})\leq\frac{M}{2n}\ex{S}{\sqrt{\cex{W^*_{S}}{S}{tr\{H^{-1}_{W^*_{S}}F^\mu_{W_S^*}\}}}}.
\]
% where $F^\mu_{w^*}=\ex{Z}{\nabla \ell(w^*,Z)\nabla \ell(w^*,Z)^T}$ is the population FIM.
\end{thm}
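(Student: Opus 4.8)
The plan is to closely follow the proof of Theorem~\ref{thm:pacbayes-data-dependent-prior}, changing only the posterior and prior covariance: instead of $\frac{\eta}{2b}\mathrm{I}_d$ I would use the inverse population FIM $\pr{F^\mu_{w^*}}^{-1}$. Concretely, I would take the posterior $Q_{W_T|S=s}=\cex{W^*_s}{s}{\mathcal{N}\pr{W^*_s,\pr{F^\mu_{W^*_s}}^{-1}}}$ and the leave-one-out data-dependent prior $P_{W_T|S_J=s_j}=\cex{W^*_{s_j}}{s_j}{\mathcal{N}\pr{W^*_{s_j},\pr{F^\mu_{W^*_{s_j}}}^{-1}}}$, and invoke Lemma~\ref{lem:data-dependent-prior} to reduce the claim to upper-bounding $\ex{S,J}{\sqrt{\mathrm{D_{KL}}(Q_{W_T|S}||P_{W_T|S_J})}}$.

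Next I would peel off the randomness of the local minimum exactly as in the proof of Theorem~\ref{thm:pacbayes-data-dependent-prior}: the data-processing inequality gives $\mathrm{D_{KL}}(Q_{W_T|s}||P_{W_T|s_j})\le\mathrm{D_{KL}}(Q_{W_T,W^*_s|s}||P_{W_T,W^*_{s_j}|s_j})$, and the chain rule of KL divergence, together with the hypothesis that $P_{W^*_{S_J}|S_J}$ is invariant of $J$ (which is what forces the marginal-over-minimum KL term to drop out), leaves $\cex{W^*_s}{s}{\mathrm{D_{KL}}(\mathcal{N}(W^*_s,(F^\mu_{W^*_s})^{-1})\,||\,\mathcal{N}(W^*_{s_j},(F^\mu_{W^*_{s_j}})^{-1}))}$. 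Reading the condition ``$\Lambda(W^*_{s_j})=\Lambda(W^*_s)$'' of Theorem~\ref{thm:pacbayes-data-dependent-prior} in the present setting as $F^\mu_{W^*_{s_j}}=F^\mu_{W^*_s}$, the two Gaussians share a covariance, so Eq.~(\ref{eq:kl-two-gau}) collapses to the single Mahalanobis term $\frac{1}{2}\pr{W^*_s-W^*_{s_j}}^{\bf T}F^\mu_{W^*_s}\pr{W^*_s-W^*_{s_j}}$.

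Then I would plug in the influence-function approximation $W^*_{s_j}-W^*_s\approx\frac{1}{n}H^{-1}_{W^*_s}\nabla\ell(W^*_s,z_i)$, where $z_i$ is the single instance dropped by $J$, so the KL term becomes $\frac{1}{2n^2}\nabla\ell(W^*_s,z_i)^{\bf T}H^{-1}_{W^*_s}F^\mu_{W^*_s}H^{-1}_{W^*_s}\nabla\ell(W^*_s,z_i)$. Moving $\ex{J}{\cdot}$ inside the square root by Jensen's inequality and averaging over the uniformly dropped index turns $\frac{1}{n}\sum_i\nabla\ell_i\nabla\ell_i^{\bf T}$ into the empirical FIM $F_{W^*_s}$, and the near-minimum equivalence $F_{W^*_s}\approx H_{W^*_s}$ (the same log-loss fact invoked for condition (ii) of Lemma~\ref{lem:stationary-real}) rewrites $\tr{F_{W^*_s}H^{-1}_{W^*_s}F^\mu_{W^*_s}H^{-1}_{W^*_s}}$ as $\tr{H^{-1}_{W^*_s}F^\mu_{W^*_s}}$. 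Feeding $\ex{J}{\mathrm{D_{KL}}}\approx\frac{1}{2n^2}\cex{W^*_S}{S}{\tr{H^{-1}_{W^*_S}F^\mu_{W^*_S}}}$ back through Lemma~\ref{lem:data-dependent-prior} produces the claimed bound, since $\frac{M}{\sqrt{2}}\cdot\frac{1}{\sqrt{2}\,n}=\frac{M}{2n}$.

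The main obstacle is the legitimacy of the two steps marked ``$\approx$'': the influence-function expansion of $W^*_{s_j}-W^*_s$ (which the paper itself flags as frequently inaccurate for deep networks) and the Hessian--FIM equivalence near a minimum, which needs the log-loss and the quadratic approximation. A secondary subtlety is rigorously justifying the vanishing of the marginal KL term in the chain-rule step; the cleanest route is to regard $W^*_s$ and $W^*_{s_j}$ as deterministic functions of a shared, data-independent initialization, so that their marginals coincide and the influence-function relation can be read off conditionally on that initialization --- which is precisely what the extra invariance hypothesis on $P_{W^*_{S_J}|S_J}$ buys us.
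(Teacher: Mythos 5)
Your proposal is correct and follows essentially the same route as the paper's own proof: Lemma~\ref{lem:data-dependent-prior} with Gaussians centered at $W^*_S$ and $W^*_{S_J}$ sharing the covariance $(F^\mu_{W^*_S})^{-1}$, the chain rule to peel off the randomness of the minima, the equal-covariance collapse of the KL to the Mahalanobis term, the influence-function expansion of $W^*_S-W^*_{S_J}$, and the average over $J$ combined with the near-minimum FIM--Hessian equivalence to reduce $\tr{F_{W^*_S}H^{-1}_{W^*_S}F^\mu_{W^*_S}H^{-1}_{W^*_S}}$ to $\tr{H^{-1}_{W^*_S}F^\mu_{W^*_S}}$, with the same constant bookkeeping. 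The approximations you flag (influence function, $F\approx H$) are exactly the ones the paper also relies on.
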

\begin{rem}
Notice that  $F^\mu_{W_S^*}\approx H^\mu_{W^*_{S}}\approx \Sigma^\mu(W_S^*)$ near minima \citep[Chapter~8]{pawitan2001all}, then $tr\{H^{-1}_{W^*_{S}}\Sigma^\mu(W_S^*)\}$ is very close to the Takeuchi Information Criterion \citep{takeuchi1976distribution}. In addition, our bound in Theorem~\ref{thm:IF-pacbayes-FIM} is similar to \citet[Theorem~3.]{singh2022phenomenology} with the same convergence rate, although strictly speaking, their result is not a generalization bound. Moreover, as also pointed out in \cite{singh2022phenomenology}, here $H^{-1}_{W^*_{S}}$ is evaluated on the training sample, unlike other works that evaluates the inverse Hessian on the testing sample (e.g., \citet{thomas2020interplay}). 
\end{rem}

The invariance assumption is also used in \citet{wang2021optimizing}. In practice, $n$ is usually very large, when $m=n-1$, this assumption indicates that replacing one instance in $s_{j}$ will not make $P_{W^*_{s_j}|s_j}$ be too different. 

% In practice, $n$ is usually very large, when $m=n-1$, the invariance assumption indicates that replacing one instance in $s_{j}$ will not make $P_{W^*_{s_j}|s_j}$ be too different. 

% Note that Theorem~\ref{thm:IF-pacbayes-FIM} is also based on the influence function (Eq.~(\ref{eq:influnce-function})). However, for the deep neural network training, the approximation made by influence function is often erroneous \citep{basu2021influence}. This, unfortunately, limits the practical application of Theorem~\ref{thm:IF-pacbayes-FIM}. 

% \section{Proof of Theorem~\ref{thm:IF-pacbayes-FIM}}
\begin{proof}[Proof of Theorem~\ref{thm:IF-pacbayes-FIM}]
We now use $(F^\mu_{W_S^*})^{-1}$ as both the posterior and prior covariance (again, we assume $F^\mu_{W_S^*}\approx F^\mu_{W_{S_j}^*}$ for any $j$), then
\begin{align*}
    \mathcal{E}_{\mu}(\mathcal{A})\leq&\ex{S}{\sqrt{\frac{M^2}{4}\cex{J,W^*_{S},W^*_{S_J}}{S}{{\left(W^*_{S}-W^*_{S_J}\right)F^\mu_{W_S^*}\left(W^*_{S}-W^*_{S_J}\right)^{\bf T}}}}}\\
    =&\frac{M}{2n}\ex{S}{\sqrt{\cex{W^*_{S},W^*_{S_j}}{S}{tr\left\{F^\mu_{W_S^*}H^{-1}_{W^*_{S}}H^{-1}_{W^*_{S}}\ex{J}{\nabla \ell(W^*_{S},Z_i)\nabla \ell(W^*_{S},Z_i)^{\bf T}}\right\}}}}\\
    =&\frac{M}{2n}\ex{S}{\sqrt{\cex{W^*_{S}}{S}{tr\left\{F^\mu_{W_S^*}H^{-1}_{W^*_{S}}\right\}}}},
\end{align*}
which completes the proof.
\end{proof}

\end{document}